\newtheorem{lemma}{Lemma}[section]
\newtheorem{theorem}{Theorem}[section]
\newtheorem{assumption}{Assumption}[section]
\newtheorem{definition}{Definition}[section]
\theoremstyle{remark}
\def\eqref#1{equation~\ref{#1}}
\def\1{\bm{1}}
\def\vvxi{{\bm{\xi}}}
\def\vc{{\bm{c}}}
\def\ve{{\bm{e}}}
\def\vf{{\bm{f}}}
\def\vg{{\bm{g}}}
\def\vr{{\bm{r}}}
\def\vv{{\bm{v}}}
\def\vw{{\bm{w}}}
\def\vx{{\bm{x}}}
\def\vy{{\bm{y}}}
\def\vz{{\bm{z}}}
\def\vxi{{\bm{\xi}}}
\def\vD{{\bm{\gD}}}
\DeclareMathAlphabet{\mathsfit}{\encodingdefault}{\sfdefault}{m}{sl}
\SetMathAlphabet{\mathsfit}{bold}{\encodingdefault}{\sfdefault}{bx}{n}
\def\gA{{\mathcal{A}}}
\def\gD{{\mathcal{D}}}
\def\gL{{\mathcal{L}}}
\def\gO{{\mathcal{O}}}
\newcommand{\E}{\mathbb{E}}
\newcommand{\R}{\mathbb{R}}
\DeclareMathOperator*{\argmax}{arg\,max}
\DeclareMathOperator{\Tr}{Tr}
\newcommand{\yiping}[1]{\textcolor{blue}{[Yiping: #1]}}
\newcommand{\yifang}[1]{\textcolor{olive}{[Yifang: #1]}}
\newcommand{\simon}[1]{\textcolor{cyan}{[Simon: #1]}}
\newcommand{\normsim}{{NormSim}}
\newcommand{\normsimD}{{NormSim-D}}
\newcommand{\twonormsim}{{$\text{NormSim}_2$}}
\newcommand{\twonormsimD}{{$\text{NormSim}_2\text{-D}$}}
\newcommand{\infnormsim}{{$\text{NormSim}_\infty$}}
\title{CLIPLoss and Norm-Based Data Selection Methods for Multimodal Contrastive Learning}
\author{%
  Yiping Wang\thanks{Equal contribution. Correspondence to {\tt ypwang61@cs.washington.edu}.
  Codes are available at {\href{https://github.com/ypwang61/negCLIPLoss_NormSim}{\tt https://github.com/ypwang61/negCLIPLoss\_NormSim}}.
  }\\
  University of Washington\\
  \And
  Yifang Chen$^*$\\
  University of Washington\\
  \And
  Wendan Yan\\
  University of Washington\\
  \And
  Alex Fang\\
  University of Washington\\
  \And
  Wenjing Zhou\\
  University of Michigan\\
  \And
  Kevin Jamieson\\
  University of Washington\\
  \And
  Simon Shaolei Du\\
  University of Washington\\
}
\begin{document}

\maketitle


\begin{abstract}

Data selection has emerged as a core issue for large-scale visual-language model pretraining (e.g., CLIP), particularly with noisy web-curated datasets. Three main data selection approaches are: (1) leveraging external non-CLIP models to aid data selection, (2) training new CLIP-style embedding models that are more effective at selecting high-quality data than the original OpenAI CLIP model, and (3) designing better metrics or strategies universally applicable to any CLIP embedding without requiring specific model properties (e.g., CLIPScore is one popular metric).
While the first two approaches have been extensively studied, the third remains under-explored. In this paper, we advance the third approach by proposing two new methods. Firstly, instead of classical CLIP scores that only consider the alignment between two modalities from a single sample, we introduce \textbf{s}urrogate-\textbf{CLIPLoss}, a method inspired by CLIP training loss that adds the alignment between one sample and its contrastive pairs as an extra normalization term to CLIPScore for better quality measurement. Secondly, when downstream tasks are known, we propose a new norm-based metric, \textbf{\normsim}, to measure the similarity between pretraining data and target data.
We test our methods on the data selection benchmark, DataComp~\cite{gadre2023datacomp}.
Compared to the best baseline using only OpenAI's CLIP-L/14, our methods achieve a 5.3\% improvement on ImageNet-1k and a 2.8\% improvement on 38 downstream evaluation tasks. 
Moreover,
both \textbf{s-CLIPLoss} and  {\textbf{\normsim}} are compatible with existing techniques. 
By combining our methods with the current best methods DFN~\cite{fang2023data} and HYPE~\cite{kim2024hype}, we can boost average performance on downstream tasks by 0.9\%, achieving a new state-of-the-art on the DataComp-medium benchmark\footnote{DataComp benchmark: \href{https://www.datacomp.ai/dcclip/leaderboard.html}{\tt https://www.datacomp.ai/dcclip/leaderboard.html}.}.

\end{abstract}
\vspace{-0.5em}
\section{Introduction}

Curating large-scale visual-language datasets from web-sourced data has become common for pretraining multi-modal models. However, the quality of these web-curated data pairs remains a critical bottleneck. Research has shown that the choice of dataset significantly impacts model performance, irrespective of the models and training techniques employed~\cite{radford2021learning, schuhmann2022laion, cherti2023reproducible, bai2023qwen,vo2024automatic,huang2024multimodal,xie2023data,abbas2024effective}, and this motivates the development of various data selection strategies. This paper focuses on optimizing subset selection from a fixed data pool to train a CLIP model \cite{radford2021learning} that achieves superior performance on zero-shot downstream tasks.

Classical methods \textit{rely solely on OpenAI's (OAI) pretrained CLIP model} (i.e., a teacher model) and focus on better utilizing the embeddings. The most commonly used one is calculating CLIPScore, which measures the cosine similarity between the visual and language embeddings of the CLIP model for the same sample, to eliminate low-quality data with mismatches between text and image. Other works also leverage heuristic distribution alignment techniques to select samples relevant to downstream tasks, such as image-based filtering~\cite{gadre2023datacomp}. 
These approaches are generally viewed as providing only limited enhancements. However, we argue that the potential of those embeddings has been heavily under-explored. 
This work seeks a universal method to better employ any given embeddings, not only from OAI CLIP, but also from other CLIP-style models.

On the other hand, recent leading data filtering methods, instead of focusing on improving embedding utilization stategy itself, mainly follow the other two directions, both employing external resources. They either (1) use \textit{external non-CLIP models} that aid in data selection, (2) or use \textit{external high-quality multi-modal data} to train a \textit{better CLIP-style embedding model} than the original OAI CLIP to filter out low-quality data. Specifically, in the first line of works, HYPE~\cite{kim2024hype} leverages embeddings from hyperbolic models instead of the classical Euclidean-based CLIP to measure how each data point has semantically overlaps with other data points and filters out data with low specificity. T-MARS~\cite{maini2023t} removes images where the text is the only feature correlated with the caption using FAST \cite{chen2021fast}, an off-the-shelf OCR text detection model. Devil~\cite{yu2023devil}  applies fasttext~\cite{joulin2016bag} to remove non-English texts and use BLIP-2~\cite{li2023blip} model for digit recognition to keep useful images with digits. The second direction, represented by Data Filtering Network (DFN) \cite{fang2023data}, involves training a new CLIP-style teacher model that uses high-quality datasets like HQITP-350M. Although the embeddings extracted from this model perform worse than the OAI CLIP in downstream tasks, it is particularly good at filtering out low-quality data. Notably, some of these methods can be combined and indeed, merging the selected data from DFN and HYPE achieves current state-of-art as shown in HYPE~\cite{kim2024hype}.

Previous works mainly focus on improving the CLIP embedding quality or utilizing an external model to do filtering but employ the CLIP embedding in a suboptimal way by only using classical methods like CLIPScore. In contrast, in this work, we focus on improving the filtering methods themselves for any given CLIP embedding. 
We show that there are universal and more effective strategies for utilizing any CLIP teacher model, regardless of its architecture (e.g., B/32 or L/14) or the dataset it was trained on (e.g., OpenAI-WIT-400M or DFN's high-quality dataset). These strategies should always be orthogonal to the use of any newly trained CLIP-style models like DFN and might also be compatible with methods using external models like FAST and BLIP-2.

\textbf{Our Contributions.} 
We propose an alternative to CLIPScores that we call \textbf{s}urrogate\textbf{-CLIPLoss} that more accurately characterizes data quality.
We also introduce a new distribution metric we call the p-Norm Similarity Score (\textbf{\normsim}) when knowledge about downstream tasks is available.
Two major observations directly inform our proposals: 
\begin{itemize}[leftmargin=*]
\vspace{-8px}
    \item Firstly, we observe that classical methods measure the quality of a multi-modal sample by computing the cosine similarity between its visual and language embeddings, believing that lower similarity indicates that the text does not match its image part well. However, we find that some less informative samples may have a systematic bias, which leads to higher CLIPScores. For example, the language part containing the word "image" can result in higher similarity with any visual part, even when the text does not accurately describe its image content. 
    Our proposed method \textbf{s-CLIPLoss}, inspired by the standard CLIP training Loss, normalizes the original CLIPScore by the similarity between a sample and its contrastive pairs. For example, the high score caused by the word "image" is typically consistent across its contrastive pairs, so our adjustment reduces this bias. As we have highlighted, such replacement can be universally applied across different embedding models. See Fig.~\ref{fig:CLIPLoss_vs_CLIPScore} for illustrations.
    \vspace{-4px}
    \item Secondly, if one has access to examples drawn from the same distribution as the target task, it is natural to assume that this extra knowledge could be leveraged to inform the data filtering process. 
    We propose the \textbf{\normsim} metric to measure the vision similarity between a training sample $x$ and the target task dataset $X^v_\text{target} \in \R^{n \times D}$  defined as $\|f_v(X^v_\text{target}) f_v(x^v)\|_p$, where $f_v:\R^{D} \rightarrow \R^d$ is the vision encoder of teacher model so that $f_v(X^v_\text{target}) \in \R^{n \times d}$, $f_v(x^v) \in \R^d$, and $f_v(X^v_\text{target}) f_v(x^v) \in \R^n$, and $\| \cdot \|_p$ is the $p$ norm; effective choices are $p=2$ or $\infty$. Notably, unlike previous ImagetNet-based filtering~\cite{gadre2023datacomp}, which tries to keep the training set as diverse as downstream tasks by clustering the training set and finding the nearest neighbor group for \textit{every target sample}, our method does not explicitly consider the diversity but select examples as long as it is close to \textit{any target sample} (i.e. select high {\normsim} score).
    Notably, \textbf{s-CLIPLoss} and \textbf{\normsim} enjoy complementary effect in data selection. See Fig.~\ref{fig:sim_vs_quality}.
\vspace{-8px}
\end{itemize}

To illustrate the effectiveness of our methods, we use a widely used benchmark DataComp~\cite{gadre2023datacomp} as our primary method of evaluating the datasets created by our data filtering methods. We show that, by simply replacing the CLIPScores with \textbf{s-CLIPLoss} and utilizing \textbf{\normsim} 
we are able to exceed the best OAI-CLIP(L/14)-based baseline by 5.3\% on ImageNet-1k and 2.8\% on average across 38 downstream tasks, which is similar or even better than the performance achieved by many external-resources-based methods. Notably, even if the target downstream tasks are not available, using {\normsim} on a proxy downstream task constructed from the training set, called {\textbf{\twonormsimD}}, combined with s-CLIPLoss, can also gain a 1.9\% improvement on 38 downstream evaluation.

Moreover, the improvements achieved by our methods are not limited to OAI CLIP-based methods but can also be obtained by combining our methods with advanced models that require external resources. 
\textit{By merging the subset selected by \textbf{s-CLIPLoss} and \textbf{{\normsim}} with the subset selected by current state-of-the-art method ``HYPE $\cup$ DFN'', we can further improve it by 0.9\% on both ImageNet-1k and on average 38 downstream tasks.
Besides, we can also achieve a 0.8\% improvement on average 38 tasks over "HYPE $\cup$ DFN" using only the data selected by DFN and our strategies.}
More importantly, we demonstrate that s-CLIPLoss, as a replacement for CLIPScore, can be applied to any other embedding models like OAI-L/14, OAI-B/32, and DFN-B/32, universally boosting performance from 0.4\% to 3.0\% on an average of 38 tasks.
%
This result is not only technically insightful for understanding the information available in embeddings but also practically significant. Compared to existing methods, our approach saves a significant amount of computational time on both reprocessing and new embedding retraining as shown in Table~\ref{fig: compute cost}.
\vspace{-1em}
\section{Problem Setup}
\vspace{-0.5em}

\textbf{Data Filtering on Multimodal Dataset.}
We are given a training dataset $D_\text{train} = \{x^{v}, x^{l}\}$, where $(x^{v}, x^{l}) \in \mathbb{R}^D$ is the image-text (vision-language) training pair. For convenience, we will let superscript $vl$ denote either modality so that, for example, $x^{vl} \in {x^{v}, x^{l}}$. Our goal is to identify a subset $S \subset D_\text{train}$ that maximizes the zero-shot accuracy of the CLIP model on some downstream tasks when $S$ is used to train the CLIP model. 

\textbf{CLIP score and embedding.} Recent efforts, such as LAION \cite{schuhmann2022laion} and DataComp \cite{gadre2023datacomp}, use OpenAI’s CLIP ViT-L/14 model \cite{radford2021learning} as a teacher model to obtain quality score. Here we denote this vanilla CLIP model as $\bar{f}_{vl}$.
For any pair $x^{vl}$, the model outputs a normalized unit-vector $\bar{f}_{vl}(x^{vl})$. 
If $X^{vl} := \{x_1^{vl}, \ldots, x_m^{vl}\}$ denotes a dataset containing $m$ samples, then we define $\bar{f}_{vl}(X^{vl}) = [\bar{f}_{vl}(x_1^{vl}),\ldots, \bar{f}_{vl}(x_m^{vl})]^\top \in \R^{m \times d}$ as the embedding matrix.
The popular filtering metric ``CLIPScore'' is defined as $\langle  \bar{f}_v(x^v), \bar{f}_l(x^l) \rangle \in [-1,1]$.

\textbf{Dataset and model.} Here we follow the pipeline of Datacomp \cite{gadre2023datacomp} to standardize the training and evaluation process. This is a testbed for dataset experiments aiming to open-source and further improve the vanilla CLIP model and is widely adopted in previous data selection papers~ \cite{nguyen2023improving,maharana2023d2,maini2023t,fang2023data,mahmoud2023sieve,bai2023qwen}. We will give more details in Sec.~\ref{sec: experiment}.

\begin{figure*}[t]
    \centering
    \small
    \includegraphics[width=0.95\textwidth]{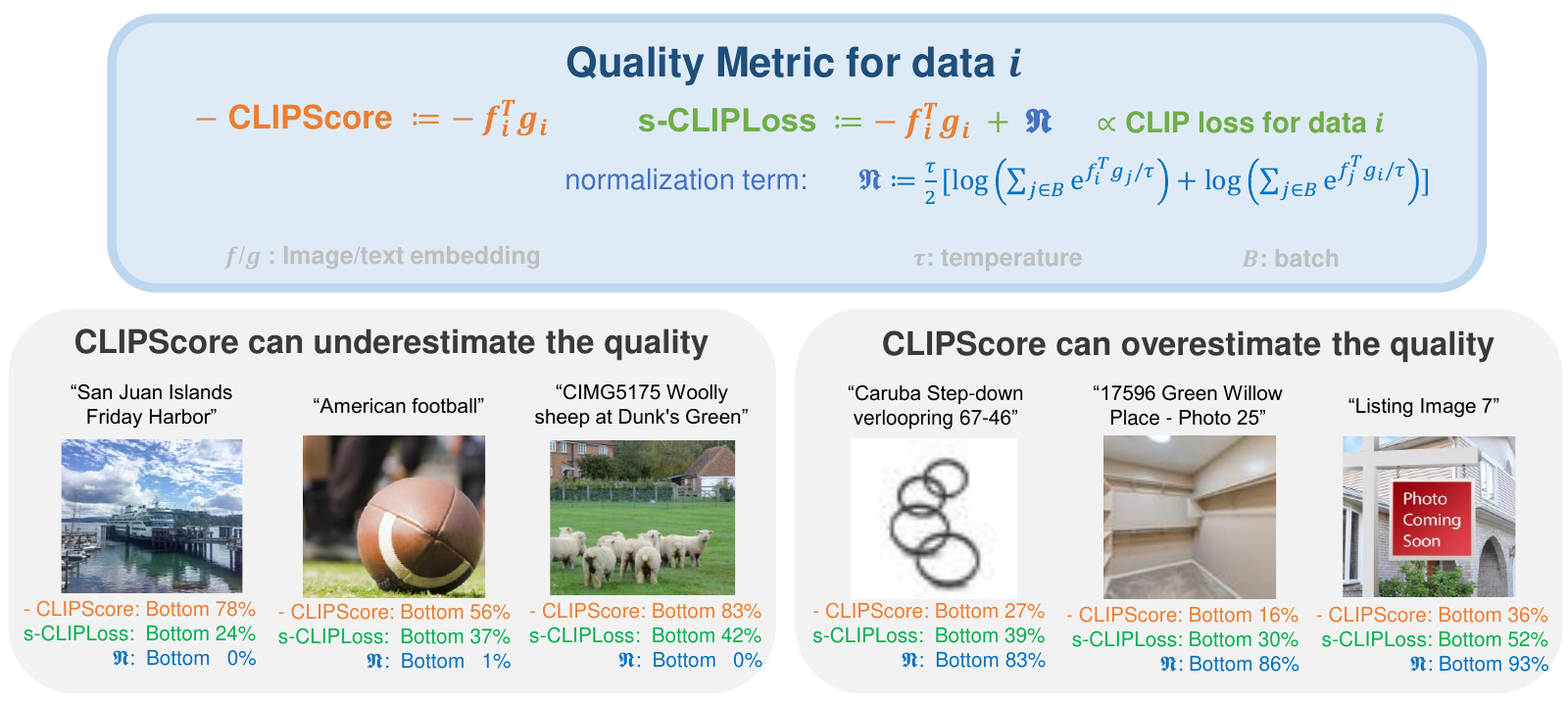}
    \vspace{-0.6em}
    \caption{\textbf{Illustration of s-CLIPLoss.}
    CLIPScore may underestimate (bottom left, {where the data quality is high but CLIPScore is low (negative CLIPScore is high)}) or overestimate (bottom right, {where the data quality is low but CLIPScore is high (negative CLIPScore is low)}) the quality of image-text pairs. However, this issue can be mitigated by simply {including} a normalization term {$\mathcal{R}$}. s-CLIPLoss employs the teacher model to calculate the surrogate CLIP loss on training data and serves as a more accurate metric. 
    Here, \textbf{``Bottom X\%'' denotes that the score represents the bottom X\% \textit{low} values within the entire dataset}  (i.e., the X\% percentile among all the values). For example, ``$\mathcal{R}: \text{Bottom} \ 0\%$'' means this data has almost the smallest $\mathcal{R}$ among the whole dataset, which represents that it contains highly specific elements in both images and texts. \textbf{The lower X in s-CLIPLoss should correspond to data with higher quality.}}
    \label{fig:intro_CLIPLoss}
    \vspace{-2.0em}
\end{figure*}

\vspace{-0.5em}
\section{Data Filtering Strategy}
\subsection{s-CLIPLoss: A Better Metric than CLIPScore}
\label{subsec: s-CLIPLoss}

In this section, we introduce a better and statistically interpretable quality metric called s-CLIPLoss, which directly replaces the common metric CLIPScore. Fig.~\ref{fig:intro_CLIPLoss} illustrates how s-CLIPLoss works. This new metric only requires negligible extra computational costs and no additional external data collection costs. 
As the name suggested, this metric is inspired by the standard CLIP loss used in the actual training process of the teacher CLIP model, which is defined as
\begin{equation}
\small
    \ell_{B^*}(x_i^{vl}) = -\frac{1}{2}\left[\log \frac{\text{exp}( \bar{f}_v(x_i^v)^\top \bar{f}_l(x_i^l)/\tau)}{\sum_{j \in B^*} \text{exp}(\bar{f}_v(x_i^v)^\top \bar{f}_l(x_j^l)/\tau)}
    + \log \frac{\text{exp}(\bar{f}_v(x_i^v)^\top \bar{f}_l(x_i^l))/\tau}{\sum_{j \in B^*} \text{exp}( \bar{f}_v(x_j^v)^\top \bar{f}_l(x_i^l)/\tau)}\right]
\end{equation}
Here $B^*$ is the random batch where $i$-th sample belongs during a particular training step, and $\tau$ is the learnable temperate parameter. 
Notably, the teacher loss differs from CLIPScore primarily by a normalization term $\mathcal{R}^*$ as follows:
\begin{equation*}
\small
    \tau \cdot \ell_{B^*}(x_i^{vl})
    = - \underbrace{\bar{f}_v(x_i^v)^\top \bar{f}_l(x_i^l)}_{\text{CLIPScore}(x_i^{vl})} +  \underbrace{\frac{\tau}{2}\left[\log \left(\sum_{j \in B^*} \exp(\frac{\bar{f}_v(x_i^v)^\top \bar{f}_l(x_j^l)}{\tau})\right) +  \log \left(\sum_{j \in B^*} \exp(\frac{\bar{f}_v(x_j^v)^\top \bar{f}_l(x_i^l)}{\tau})\right)\right]}_{\text{normalization term } \mathcal{R}^*}
\end{equation*}
In practice, since the training dataset of teacher CLIP models, like OAI-WIT400M~\cite{radford2021learning}, and the actual batch divisions $B^*$ is inaccessible, 
we randomly select $K$ batches from the student model's training data and use the averaged results from 
$\{B_k\}_{i=1}^K$ to estimate the normalization term $\mathcal{R}^*$ on $B^*$:
\begin{equation}\label{eq: s-CLIPLoss}
    \small
    \text{s-CLIPLoss}(x_i^{vl}) :=\frac{\tau}{K}\sum_{k=1}^K\ell_{B_k}(x_i^{vl}) \ \approx \tau \cdot \ell_{B^*}(x_i^{vl}) = \ -\text{CLIPScore}(x_i^{vl}) + \mathcal{R}^*
\end{equation}
Here $\{B_k\}_{i=1}^K$ are some batches randomly selected from the student model's training data and $x_i \in B_k, \forall k$. We choose $K=10$ in our experiments, but any sample size larger than 5 is sufficiently stable for estimating the original CLIPLoss (Details in Appendix~\ref{supp: stability of K}). Besides, we also show that the computational cost introduced by $\mathcal{R}$ remains negligible compared to other baselines (Appendix~\ref{supp: computation cost}). The temperature $\tau$ and batch size $|B^*|$ can be directly obtained from the parameters of the pretrained teacher CLIP model, meaning that {s-CLIPLoss doesn't introduce additional parameters compared with CLIPScore}. 
More details are in Appendix, including the concentration analysis of $\mathcal{R}$ (Appendix~\ref{app: concen}), pseudocode (Algorithm~\ref{algo: negCLIPLoss}), and the ablation study of $\tau$ and $|B|$ (Appendix~\ref{supp: negcliploss detail}).

\begin{wrapfigure}{r}{0.45\textwidth}
\vspace{-1em}
\centering
\includegraphics[width=0.45\textwidth]{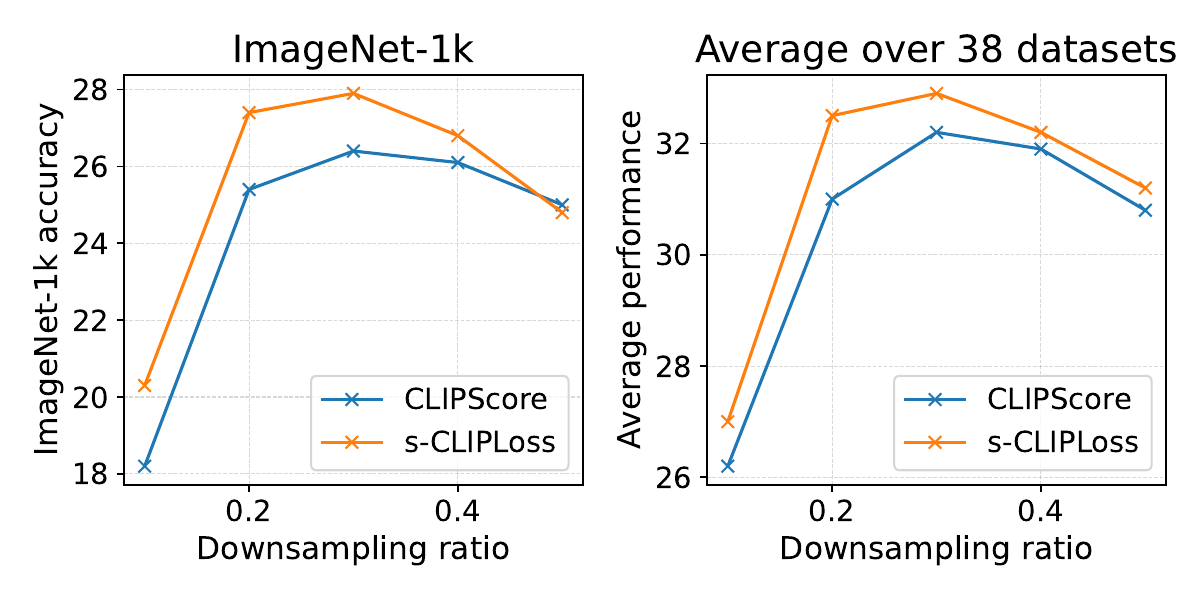}
\vspace{-1.4em}
\caption{
\textbf{s-CLIPLoss consistently outperforms CLIPScore} across different downsampling ratios on DataComp-medium.
}
\label{fig:CLIPLoss_vs_CLIPScore}
\vspace{-1.5em}
\end{wrapfigure}

\textbf{Motivation behind s-CLIPLoss.} Other existing works also use loss-guided data selection, such as LESS~\cite{xia2024less} in NLP, CoDis~\cite{Xia_2023_combatting} in CV, and RHO~\cite{pmlr-v162-mindermann22a-prioritized} in general data scheduling scenarios. 
However, it is still unclear whether selecting based on teacher loss is suitable for multi-modal contrastive learning.
Here we give an affirmative answer as shown in Fig.~\ref{fig:CLIPLoss_vs_CLIPScore}, where we can see s-CLIPLoss performs better than or on par with CLIPScore consistently.

To illustrate how teacher loss helps our selection, we demonstrate that 
the normalization term provided by s-CLIPLoss is crucial for correcting the overestimation or underestimation inherent in CLIPScore. A high normalization term implies that either the image embedding, text embedding, or both can easily match multiple contrastive pairs beyond their corresponding counterparts. For example, in the bottom right of Fig.~\ref{fig:intro_CLIPLoss}, the text containing ``Image'' or ``Photo'' can be easily matched with any visual content. Similarly, the image of ``verloopring'' only contains very simple features and can be matched with many words like ``white'', ``empty'' or ``circle'', etc.
Consequently, despite a lower negative CLIPScore (high absolute CLIPScore), the relative s-CLIPLoss within its batch can be higher. 
In contrast, the bottom left features highly specific elements in both text and images, such as "Islands Harbor," "American football", and "sheep at green". These elements are specific and less likely to match with contrastive pairs, resulting in a lower relative s-CLIPLoss.

\begin{figure*}[t]
    \centering
\includegraphics[width=0.95\textwidth]{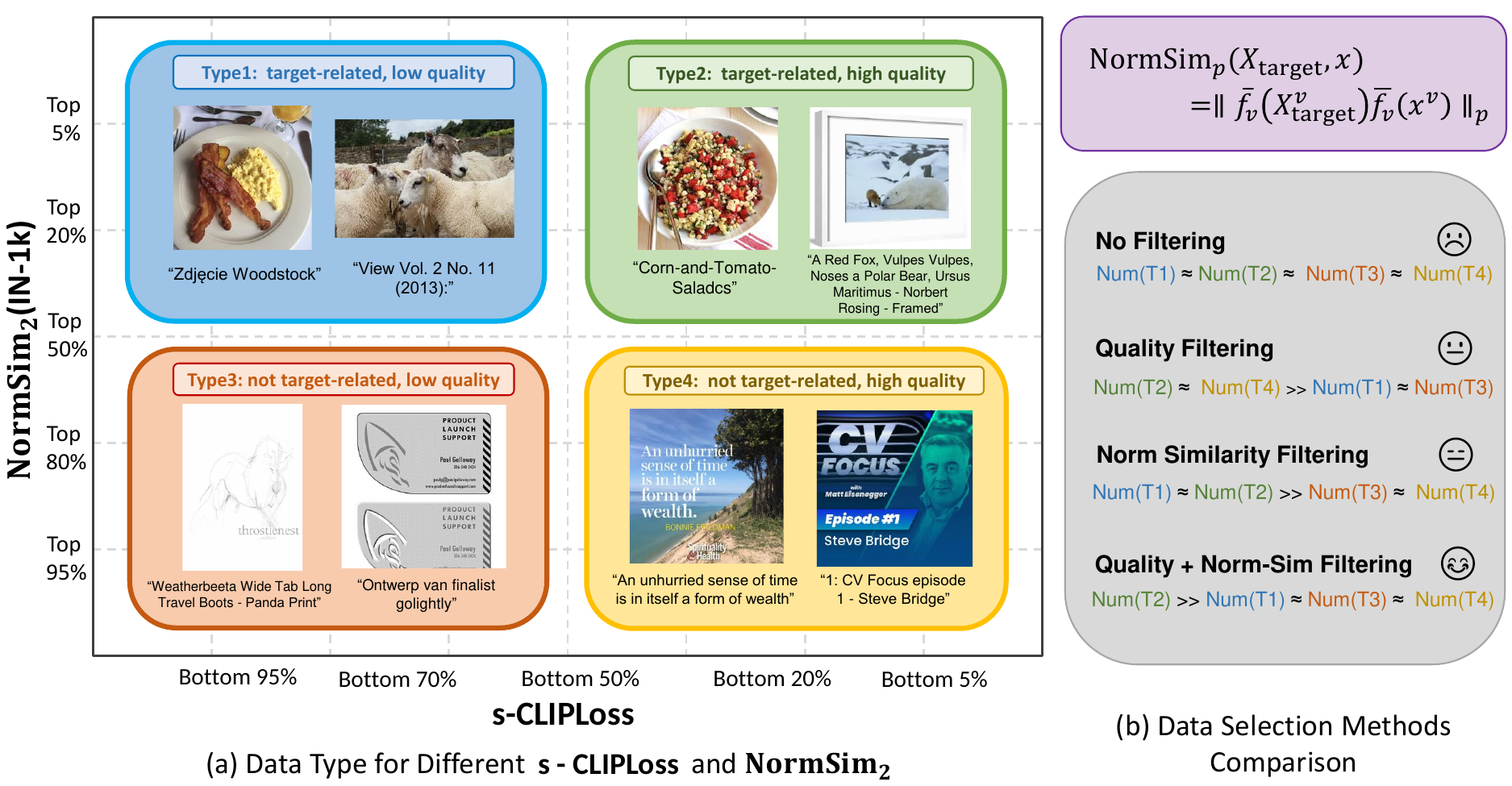}
    \caption{
    \textbf{Illustration of {\normsim}.} 
    $X_\text{target}$ is the target prior data.
    ``Top X\%'' denotes that the score represents the top X\% {high} values within the entire dataset.
    (a) Visualization of data with different {\normsim} and s-CLIPLoss. Here we use {\twonormsim}(ImageNet-1k) as an example.
    Although both Type 2 and Type 4 data have high s-CLIPLoss and thus high quality, data with low {\twonormsim} (Type 4) are more irrelevant to downstream tasks like ImageNet, VTAB, and MSCOCO. For example, they contain many images dominated by OCR content and make little contribution to improving downstream performance. (b) Illustration of a rough comparison of sampling data for different filtering methods. Using ``$\text{s-CLIPLoss} \cap \text{\normsim}$'' filtering can balance the quality and relevance to downstream tasks, thus increasing the proportion of Type 2 data. (Refer to Appendix~\ref{supp: add_vis} for more visualization.)
    }
    \label{fig:sim_vs_quality}
    \vspace{-1.7em}
\end{figure*}
\vspace{-0.7em}
\subsection{\normsim: A New Training-Target Similarity Metric}
\vspace{-0.7em}
\label{subset: normsim}

Our proposed s-CLIPLoss is a universal approach to improve filtering performance by estimating quality better, and it
does not rely on any downstream task. Now, if we can access some knowledge of the downstream tasks, we could further improve the performance by using a vision-only \textit{$p$-norm similarity to target data} metric to measure the relationship between each training sample and the downstream target data. We will discuss the reason to use vision-only embedding later in this section.

Specifically, we assume access to the target set of downstream tasks and denote them as $X_\text{target} = \{x_{\text{target},(1)}, \ldots, x_{\text{target},(m)}\}$, where each $x_{\text{target},(i)} \in \mathbb{R}^d$ is \textit{i.i.d.}-sampled 
from the target downstream distribution $\mathcal{P}_\text{target}$\footnote{Although out-of-distribution tasks like ``WILDS'' have distribution shift between training data and test data, they still provides useful information of the test data.}, 
but without overlapping with the test set.
Then, for each training sample $x^{vl}$ and the corresponding target set $X_\text{target}$, the \text{NormSim} is defined as:
\vspace{-0.5em}
\begin{equation}
\small
\text{\normsim}_p(X_\text{target}, x)
:= \|\bar{f}_v(X_\text{target}^v) \bar{f}_v(x^v) \|_p
= \left(\sum_{x_{t} \in X_\text{target}} \left|\langle \bar{f}_v(x_{{t}}^v), \bar{f}_v(x^v) \rangle\right|^p\right)^{1/p}
\end{equation}
We select the subset $S$ by choosing the samples with top-$N$ highest \text{NormSim} scores. The choice of the norm type $p$ can be based on the data distribution and training process. In this paper, we consider two instantiations of $p$:

When $p = 2$, our data selection method can be regarded as the following equation. It's equivalent to selecting a subset that aligns with the principal components of the target set variance (Appendix~\ref{app: disscuss normsim}).
\vspace{-0.5em}
\begin{equation}\small
S = \arg \max_{|S| = N} \sum_{i \in S} \text{NormSim}_{2}(x_t, x_i), \quad
\text{NormSim}_{2}(x_t, x_i) 
= \left(\sum_{x_{t} \in X_\text{target}} \left| \bar{f}_v(x_{{t}}^v)^\top\bar{f}_v(x^v) \right|^2\right)^{1/2}
\end{equation}
When $p = \infty$, the distance metric can be regarded as an even more optimistic measure, such that a training sample will be selected if it has high similarity to \textit{any target sample}. Note that this is different from nearest-neighbor-based method used in image-based filtering~\cite{gadre2023datacomp}, where they are trying to find the nearest training sample of \textit{every target sample}. In this case, it can be regarded as:
\begin{equation}
\small
S = \arg \max_{|S| = N} \sum_{i \in S} \text{NormSim}_{\infty}(x_t, x_i), \qquad \text{NormSim}_{\infty}(x_t, x_i) = 
\max_{x_t \in X_\text{target}}\bar f_v(x_t^v)^\top \bar f_v(x^v_i)
\end{equation}
In Appendix~\ref{supp sec: inf norm better than nn}, we also show that our {\infnormsim} can outperform the nearest neighbor selection on the downstream target tasks.
Here, we show an example selected via the \twonormsim(ImageNet-1k) in Fig.~\ref{fig:sim_vs_quality}, showing that this vision-target-aware method is complementary to the quality-based one. 

\textbf{Choice of Target Data.} In the experiment parts, we try two kinds of target data: training data from ImageNet-1k (1.3M) or training data from all 24 accessible downstream tasks (2.1M)\footnote{Here we only use the target data for data selection, instead of training on them. 
The target dataset is significantly smaller than pretraining set like DataComp-medium (128M) or external datasets like HQITP-350M utilized by DFN~\cite{fang2023data}.
}. We denote them as \textbf{$\text{\normsim}_p$(IN-1k)} and \textbf{$\text{\normsim}_p$(Target)}, respectively.

\textbf{Necessity of using vision-only information} We use only the visual information $x^v$ instead of multi-modal information $x^{vl}$ for measuring similarity. This is because common crawled text often has brief captions, making the OAI CLIP language embedding weaker than its visual embedding model \cite{gadre2023datacomp, shen2021much, zeng2022multi, yamada2022lemons}. Consequently, the language part cannot characterize the pre-training and downstream task distribution as well as the visual part. This phenomenon is also observed in Gadre et al.~\cite{gadre2023datacomp}, where image-based filtering (select data whose image embeddings are similar to that from ImageNet-1k) outperforms text-based filtering (select data whose captions contain words from ImageNet-21k). More ablation studies are provided in Appendix~\ref{supp: vision only better}.

\textbf{Generality of NormSim in choosing teacher model.}
Notably, since we just use image embeddings in the NormSim metric, we believe it unnecessary to use CLIP model to obtain NormSim. NormSim can be a general metric for selecting target-related image/image-text data if any good image representations are given, like the representations obtained from pretrained ResNet-50.

\textbf{Theoretical justification.} Unlike many existing methods that force diversity by selecting training samples around each $\vx_\text{target}$, our strategy maximizes similarity without directly considering data diversity. For the $p=2$ case, we demonstrate that maximizing {\twonormsim} is optimal under a linear model $\bar{f}_v$, as shown in Appendix~\ref{app: normsim optimal}. Our theorem also provides error guarantees for noisy embeddings and explains when vision-only embeddings outperform combined vision and language embeddings. Recent work by Joshi et al.~\cite{pmlr-v238-joshi24a} provides a similar analysis but focuses on high-quality data and cross-variance between images and texts. This approach is less effective than image-only methods for filtering noisy datasets, as discussed above.

\textbf{Using proxy when downstream $X_\text{target}$ is inaccessible.} Surprisingly, we show that the 2-norm can also be used when only the pre-training set is available. In this case, we construct a proxy   ``target'' set from the pre-training set itself. Specifically, let $S_i$ be the selected subset at step $i$, then we treat the current $S_i$ as the proxy ``target'' set. To construct the next smaller set, we select the next data batch $S_{i+1}$ satisfying 
$
    \argmax_{S_{i+1} \subset S_i} \sum_{x \in S} \text{\twonormsim}(S_i, x), 
$
until reaching an N size subset. We call this approach \textbf{$\text{\normsim}_2$-D} (Dynamic) and will specify the algorithm details in Appendix~\ref{supp: two norm sim D detail}.

\vspace{-1.0em}
\section{Experimental Results}
\vspace{-1.0em}
\label{sec: experiment}
In this section, we evaluate the performance of s-CLIPLoss and NormSim, aiming to address the following questions:
\textbf{Q1:} Given a fixed CLIP teacher model, can our methods more effectively utilize CLIP embeddings for data filtering?
\textbf{Q2:} Are our methods applicable to diverse CLIP teacher models with varying architectures or different pretrained datasets?
\textbf{Q3:} 
How does our method compare to other leading approaches that utilize external models or multimodal datasets? Additionally, could our method be compatible with these methods and enhance their effectiveness?

\vspace{-0.8em}
\subsection{Setup}
\vspace{-0.5em}
We adhere to the standardized training and evaluation protocols of the DataComp benchmark~\cite{gadre2023datacomp}.
\textbf{Training configuration.} 
We employ the medium-scale training configuration of DataComp (DataComp-medium). It provides a substantial dataset comprising 128 million low-quality, web-curated image-text pairs to be filtered. Once the data subset is obtained by some data filtering strategy, it will be used to train a fixed CLIP-B/32 model in a fixed training budget that allows the model to pass 128 million data points an epoch. Therefore, smaller subsets will be repeated more frequently, ensuring a fair comparison.
We note that the size of the DataComp dataset becomes smaller over time since some URLs of images become invalid\footnote{See https://github.com/mlfoundations/datacomp/issues/3. Similar issues are proposed by $\mathbb{D}^2$ pruning~\cite{maharana2023d2}.}, and we only successfully downloaded about 110M data. Therefore, the results of baselines on the leaderboard do not apply to our datasets, and we reproduce all the top baselines on the leaderboard with their public UIDs of the selected data.


\textbf{Evaluation.} 
We measured the model performance on 38 downstream datasets including image classification and retrieval tasks followed by DataComp. The image classification tasks contain ImageNet-1k~\cite{deng2009imagenet}, ImageNet distribution shifts~\cite{wang2019learning,pmlr-v97-recht19a,hendrycks2021natural,hendrycks2021many}, 11 datasets from the Visual Task Adaptation Benchmark (VTAB)~\cite{zhai2019large} and 3 datasets from  WILDS~\cite{koh2021wilds, sagawa2021extending}. Retrieval datasets contain Flickr30k~\cite{young-etal-2014-image}, MSCOCO~\cite{chen2015microsoft} and WinoGAViL~\cite{bitton2022winogavil}.

\textbf{Teacher model architecture.} 
Our experiments utilize two architectures for OpenAI's CLIP teacher models: ViT-L/14 and ViT-B/32. 
Additionally, we use the public version of DFN (DFN-P) proposed by Fang et al.~\cite{fang2023data} as a teacher model, and its architecture is also ViT-B/32. 

\vspace{-1.2em}
\subsection{Baselines}
\label{sec:baselines}
\vspace{-1em}
We restate the three current research directions mentioned before based on how much external resources are employed: 
(D1) using OAI CLIP alone while optimizing embedding employment strategies, (D2) training and using a more advanced CLIP embedding model based on external data, and (D3) utilizing non-CLIP external models to aid data selection. It is important to note that D2 and D3 may also incorporate strategies from D1. For example, CLIPScore (D1) has been used in almost all the top methods. Therefore, we categorize baselines by the largest possible category they encompass.
%
According to the above categorization, we summarize the baselines we used in our experiments as follows. Please refer to Fig.~\ref{sup fig: ill d} and Appendix~\ref{sub: baselines} for more details.



\textbf{D1: OAI CLIP embedding only.} The learner can only access the pretraining dataset (like DataComp-medium), the original OAI CLIP teacher model that is used to extract embeddings, and some target data of the downstream tasks which is much smaller than the pretraining dataset (like ImageNet-1k). In this category, we don't use any existing external non-CLIP models or any newly trained CLIP model based on external multi-modal dataset. 
In detail, This category includes 
(1) \textbf{CLIPScore}~\cite{hessel2021clipscore}, which only uses CLIPScore for filtering as we mentioned before. 
(2) \textbf{Image-based filtering}~\cite{gadre2023datacomp}, which uses ImageNet-1K training data as the downstream target data for data filtering. It applies k-means clustering to the \textit{image} embeddings of training data and selects clusters closest to the ImageNet-1K embeddings. Gadre et al.~\cite{gadre2023datacomp} also try to combine image-based filtering and CLIPScore together.
(3) \textbf{$\mathbb{D}^2$ Pruning}~\cite{maharana2023d2}, which represents the dataset as an undirected graph and selects the data by combining difficulty and diversity. They use the CLIP score to initialize their graph. 

\textbf{D2, D3: Accessible external model and multi-modal data.} All the current top baselines enable the learner to utilize external resources, either to train a better CLIP teacher model or to help filtering using existing models' properies. In detail, 
(1) \textbf{DFN}~\cite{fang2023data} trains another CLIP data filtering network via external high-quality data. Their currently public model (\textbf{DFN-P}) is trained on CC12M~\cite{changpinyo2021conceptual12} + CC3M~\cite{sharma2018conceptual3} + SS15M~\cite{nguyen2022quality}, while the best DFN is trained on nonpublic HQITP-350M~\cite{fang2023data}, which is even larger than DataComp-medium. 
(2) \textbf{HYPE}~\cite{kim2024hype} leverages hyperbolic embeddings (different from CLIP embedding) and the concept of entailment cones to filter out samples with meaningless or underspecified semantics, enhancing the specificity of each sample.
(3) \textbf{HYPE} $\cup$ \textbf{DFN} proposed by \cite{kim2024hype} samples subset separately for each method and then merge them. This is the state-of-the-art method on the DataComp benchmark for medium size.
(4) Other methods including \textbf{T-MARS}~\cite{maini2023t}, \textbf{Devils}~\cite{yu2023devil}, \textbf{MLM}~\cite{wang2024finetuned}, which leverage external models such as text detection model FAST~\cite{chen2021fast}, BLIP-2~\cite{li2023blip} and LLaVA-1.5~\cite{liu2023improved,chiang2023vicuna} to heuristically select data.
See details in Appendix~\ref{sub: baselines}.

\textbf{Cross-setting comparison.} We make these separations for fair comparison. Intuitively, performance should be ranked as 
\textbf{D2, D3} > \textbf{D1}.
However, our results show that cross-setting comparisons are possible and our D1 methods can perform similar or even better than most of D3 methods.
\vspace{-0.5em}
\subsection{Main Results and Discussions}

\begin{table*}[t]
\small
\centering
\vspace{-1.7em}
\caption{
\textbf{Results on DataComp-medium from methods that use only OpenAI's CLIP-L/14 model (\textit{D1} category)}. The ``dataset size'' represents the size of the subset obtained from different approaches.
{\normsim}(IN-1k) denotes using the training data of ImageNet-1k as the target while {\normsim}(Target) represents using that of all 24 available downstream tasks. {\normsimD} refers to the methods that use an iteratively selected subset from the training set as the target proxy. 
To avoid ambiguity, we mention that  \textbf{CLIPScore selects the data with higher values, while s-CLIPLoss selects those with lower values.}
} 
\label{tab: D1 compare}
\label{tab:OAI_only_L14_result}
\centering
\begin{tabularx}{\textwidth}{@{}l@{\hskip 8pt}c@{\hskip 8pt}c@{\hskip 8pt}c@{\hskip 8pt}c@{\hskip 8pt}c@{\hskip 8pt}c@{}}
\toprule
\multirow{2}{*}{\textbf{Filtering Strategy}} & \textbf{Dataset} & \textbf{IN-1k} & \textbf{IN Dist. Shift} & \textbf{VTAB} & \textbf{Retrieval} & \textbf{Avg.} \\
 & \textbf{Size} & (1 task) & (5) & (11) & (3) & (38)\\
\midrule
No filtering~\cite{gadre2023datacomp} & 110M & 17.3 & 15.0 & 25.2 & 21.3 & 25.6 \\
CLIPScore (20\%)~\cite{hessel2021clipscore} & 22M & 25.4 & 22.7 & 31.8 & 22.0 & 31.0 \\
CLIPScore (30\%)~\cite{hessel2021clipscore} & 33M & 26.4 & 23.6 & 32.6 & 24.5 & 32.2 \\
Image-based~\cite{gadre2023datacomp} & 24M & 25.5 & 21.9 & 30.4 & 24.6 & 29.9 \\
CLIPScore (30\%) $\cap$ Image-based ~\cite{gadre2023datacomp} & 11M & 27.4 & 23.9 & 31.9 & 21.4 & 30.8\\
$\mathbb{D}^2$ Pruning~\cite{maharana2023d2} & 22M & 23.2 & 20.4 & 31.4 & 18.7 & 29.5\\
\midrule
s-CLIPLoss (20\%) & 22M & 27.4 & 23.8 & 33.7 & 23.7 & 32.5\\
s-CLIPLoss (30\%) & 33M & 27.9 & 24.6 & 33.2 & 25.1 & 32.9\\
\midrule
CLIPScore (30\%) $\cap$ \twonormsimD & 22M & 28.3 & 25.0 & 34.5 & 22.7 & 32.9\\
s-CLIPLoss (30\%) $\cap$ \twonormsimD & 22M & 29.8 & 26.1 & 34.8 & 24.6 & 34.1\\
\midrule
CLIPScore (30\%) $\cap$ \twonormsim(IN-1k) & 22M & 29.1 & 25.4 & \underline{35.8} & 24.1 & 33.4\\
CLIPScore (30\%) $\cap$ \twonormsim(Target) & 22M & 28.9 & 25.1 & 32.7 & 23.6 & 32.5\\
CLIPScore (30\%) $\cap$ \infnormsim(IN-1k) & 22M & 29.7 & 25.9 & 33.7 & 24.1 & 33.7\\
CLIPScore (30\%) $\cap$ \infnormsim(Target) & 22M & 30.2 & 26.2 & 35.0 & 23.4 & 33.9\\
\midrule
s-CLIPLoss (30\%) $\cap$ \twonormsim(IN-1k) & 22M & 30.4 & 26.4 & 35.4 & \underline{25.6} & 34.3\\
s-CLIPLoss (30\%) $\cap$ \twonormsim(Target) & 22M & 30.6 & 26.2 & 35.2 & 25.5 & 33.9\\
s-CLIPLoss (30\%) $\cap$ \infnormsim(IN-1k) & 22M & \textbf{31.9} & \textbf{27.3} & 34.8 & 25.0 & \underline{34.4}\\
s-CLIPLoss (30\%) $\cap$ \infnormsim(Target) & 22M & \underline{31.7} & \underline{27.2} & \textbf{36.0} & \textbf{26.0} & \textbf{35.0}\\
\bottomrule
\end{tabularx}
\vspace{-1.7em}
\end{table*}

\begin{wraptable}{ht}{0.5\textwidth}
\small
\vspace{-7em}
\caption{
\textbf{s-CLIPLoss can be applied to different CLIP teacher models.}
We show the results on DataComp-medium that use only OpenAI's CLIP-B/32 model or public version of DFN (DFN-P). ``$\text{NormSim}_{\infty}^{\text{B/32}}$'' represents using OAI CLIP-B/32 to calculate {\infnormsim}.
}
\vspace{-0.8em}
\label{tab: cliploss unversality}
\begin{center}
\setlength{\tabcolsep}{1.6mm}{\resizebox{0.51\textwidth}{!}{
\begin{tabular}{lcccc}
\toprule
\textbf{Strategy} & \textbf{Size} & \textbf{IN-1k} & \textbf{VTAB} & \textbf{Avg.}\\
\midrule
\textbf{OAI CLIP-B/32} \\
\midrule
CLIPScore (30\%) & 33M & 27.6 & 33.6 & 33.2\\
CLIPScore (20\%) & 22M & 27.0 & 33.0 & 32.2\\
\midrule
s-CLIPLoss (30\%) & 33M & 28.8 & 33.7 & 33.6\\
s-CLIPLoss (20\%) & 22M & 28.9 & 34.3 & 33.0\\
\midrule
s-CLIPLoss (30\%) & \multirow{2}{*}{22M} & \multirow{2}{*}{\textbf{32.4}} & \multirow{2}{*}{\textbf{35.9}} & \multirow{2}{*}{\textbf{35.2}} \\
$\cap$ \infnormsim(Target) & & \\
\bottomrule
\toprule
\textbf{DFN-P} \\
\midrule
CLIPScore (30\%) & 33M & 28.4 & 33.2 & 32.7\\
CLIPScore (20\%) & 22M & 29.7 & 33.0 & 33.1\\
CLIPScore (17.5\%) & 19M & 30.2 & 34.1 & 33.8\\
CLIPScore (15\%) & 16M & 25.9& 32.9& 31.6 \\
\midrule
s-CLIPLoss (30\%) & 33M & 28.9 & 33.4 & 33.2\\
s-CLIPLoss (20\%) & 22M & 30.7 & 33.6 & 33.8\\
s-CLIPLoss (17.5\%) & 19M & {31.2} & {35.7} & \underline{34.7}\\
s-CLIPLoss (15\%) & 16M & {31.3} & \underline{35.8} & 34.6 \\
\midrule
s-CLIPLoss (30\%) & \multirow{2}{*}{22M} & \multirow{2}{*}{29.4} & \multirow{2}{*}{33.5} & \multirow{2}{*}{32.5} \\
$\cap$ \infnormsim(Target) & & \\
s-CLIPLoss (17.5\%) & \multirow{2}{*}{16M} & \multirow{2}{*}{\underline{31.5}} & \multirow{2}{*}{{34.6}} & \multirow{2}{*}{34.4}\\
 $\cap$ {\infnormsim}(Target) & & \\
s-CLIPLoss (17.5\%) & \multirow{2}{*}{16M} & \multirow{2}{*}{\textbf{31.6}} & \multirow{2}{*}{\textbf{37.2}} & \multirow{2}{*}{\textbf{35.7}}\\
 $\cap$ $\text{NormSim}_{\infty}^{\text{B/32}}$(Target) & & \\
\bottomrule
\end{tabular}
}
\vspace{-1.5em}
}
\end{center}
\end{wraptable}

\subsubsection{Comparision on D1 Category (Q1)}
In Table~\ref{tab: D1 compare}, we compare the D1 methods where only the OAI CLIP model is allowed to be used.

\textbf{Our Methods leverage OAI CLIP-L/14 better.}
\textit{First}, s-CLIPLoss outperforms CLIPScore on \emph{all metrics}, regardless of whether it is used alone or combined with other methods. These results support our claim that s-CLIPLoss can more accurately estimate the data quality.

\textit{Second}, even when target knowledge is unavailable, use {\twonormsimD} together with s-CLIPLoss can still improve the filtering performance by 1.9\% on average 38 downstream tasks.
\textit{Third}, when target knowledge is available, {\twonormsim} and {\infnormsim} can improve filtering  more significantly compared with {\twonormsimD}, and \textit{in general, {\infnormsim} is the best choice}. Especially, compared with the best baseline `CLIPScore (30\%)', our best combination `s-CLIPLoss $\cap$ \infnormsim(Target)' improves \textbf{5.3\%} on \textbf{ImageNet-1k} and \textbf{2.8\%} on average \textbf{38 downstream tasks}, respectively. 
Later in Table~\ref{tab: D2D3 compare} we will see that this result outperform all the D3 baselines except DFN $\cup$ HYPE. 
On the other hand, when using ImageNet-1k as the target data, the choice of norm has very little influence.

\vspace{-1em}
\subsubsection{Try Other Teacher Models (Q2)}
\vspace{-0.6em}
To evaluate whether our method applies to other CLIP teacher models, we replaced OAI CLIP-L/14 with OAI CLIP-B/32 and DFN-P as embedding models. We compare the best baseline ``CLIPScore'' with our ``s-CLIPLoss'' and best strategy ``s-CLIPLoss $\cap$ {\infnormsim}(Target)'' as shown in Table~\ref{tab: cliploss unversality} and Appendix~\ref{supp: universality }. Note that the original DFN paper selects a subset comprising 19.2M data points, which accounts for approximately $17.5\%$ of our dataset and $15\%$ of their dataset, we incorporate these sampling ratios into our comparison. 

\textbf{s-CLIPLoss can be applied to different CLIP embedding models.}
Our proposed s-CLIPLoss, as a replacement of CLIPScore, not only leads to better performance compared to all the other baselines using OAI CLIP-L/14 as shown in Table~\ref{tab: D1 compare}, but also achieves universal improvement on the other two CLIP embedding models, OAI CLIP-B/32 and DFN-P as shown in Table~\ref{tab: cliploss unversality}. 
Our methods can consistently outperform all downstream tasks for different filtering ratios and models, like a 0.5\%-5.4\% increase on ImageNet-1k.

\begin{table*}[t]
\small
\centering
\vspace{-1.2em}
\caption{\textbf{Results of all D1\&D2\&D3 top methods on DataComp-medium.} The results of MLM~\cite{wang2024finetuned} are from their paper, while all other baselines are reproduced on our downloaded dataset using their official UIDs. 
``Ours (20\%)'' refers to use ``s-CLIPLoss (30\%) $\cap$ \infnormsim(Target)'' to get 20\% of original data, while ``Ours (10\%)'' denotes applying ``s-CLIPLoss (20\%) $\cap$ \infnormsim(Target)'' to get 10\%. And we use ``*'' to indicate the case where we choose the intersection of the data selected by using OAI CLIP-B/32 and OAI CLIP-L/14 separately, which results in about 15M data for ``Ours (20\%)*'' and 7.4M data for ``Ours (10\%)*''. 
}
\label{tab: D2D3 compare}
\begin{tabular}{@{}llcccccc@{}}
\toprule
\multirow{2}{*}{\textbf{Type}} & \multirow{2}{*}{\textbf{Filtering Strategy}} & \textbf{Dataset} & \textbf{IN-1k} & \textbf{IN Dist. Shift} & \textbf{VTAB} & \textbf{Retrieval} & \textbf{Avg.} \\
 & & \textbf{Size} & (1) & (5) & (11) & (3) & (38)\\
 \midrule
D3 & T-MARS~\cite{maini2023t} & 22M & 30.8 & 26.3 & 34.8 & 25.4 & 34.1 \\
D3 & Devil~\cite{yu2023devil} & 20M & 31.0 & 26.7& 35.9 & 24.7 & 34.5\\
D3 & MLM~\cite{wang2024finetuned} & 38M & 30.3 & 25.6 & 36.0 & \textbf{29.0} & 34.5\\
D3 & HYPE~\cite{kim2024hype} & 10M & 30.3 & 25.8 & 34.3 & 22.2 & 31.9\\
D2 & DFN~\cite{fang2023data} & 16M & {36.0} & {30.1} & {36.2} & {27.0} & {35.4}\\
D3 & DFN $\cup$ HYPE~\cite{kim2024hype} & 20M & \underline{36.4} & {30.8} & \underline{38.5} & {28.0} & {36.8}\\
\midrule
D1 & \textbf{Ours (20\%)} & 22M & {32.4} & {27.4} & {35.9} & {26.3} & {35.2}\\
D3 & DFN $\cup$ \textbf{Ours (20\%)*} & 23M & \underline{36.4} & \underline{30.9} & \textbf{38.6} & \underline{28.1} & \underline{37.6} \\
D3 & DFN $\cup$ HYPE $\cup$ \textbf{Ours (10\%)*} & 22M & \textbf{37.3} & \textbf{31.4} & \underline{38.5} & 27.6 & \textbf{37.7} \\
\bottomrule
\end{tabular}
\end{table*}

\textbf{Embedding required by {\normsim} should have good downstream performance.} 
When combining s-CLIPLoss with {\infnormsim}, OAI CLIP-B/32 and DFN-P exhibit completely different behaviors. The former obtains results even better than those in Table~\ref{tab: D1 compare}, which uses OAI CLIP-L/14 as the teacher model, while DFN-P achieves results even worse than using s-CLIPLoss alone\footnote{see "s-CLIPLoss (30\%) $\cap$ \infnormsim(Target)" versus "s-CLIPLoss (20\%)/(30\%)" and "s-CLIPLoss (17.5\%) $\cap$ \infnormsim(Target)" versus "s-CLIPLoss (17.5\%)/(15\%)"}. The reason is that, unlike OAI CLIP-B/32, DFN-P is specially designed for data filtering \textit{at the expense of downstream task performance}, as claimed by its authors. For example, the ImageNet-1k accuracy for DFN-P, OAI CLIP-B/32, and OAI CLIP-L/14 are 45\%, 63\%, and 75\%, respectively. This indicates that the embeddings obtained from DFN on target data might be highly unreliable, leading to inaccurate similarity calculations between training and target data. To support this, if we use DFN-P to evaluate s-CLIPLoss but utilize OAI CLIP-B/32 for calculating {\normsim}, as shown in "s-CLIPLoss (17.5\%) $\cap$ $\text{NormSim}_{\infty}^{\text{B/32}}$(Target)", we can further improve the results compared to using s-CLIPLoss alone. Its average performance on 38 tasks is even higher than utilizing the best DFN (trained on HQITP-350M) with CLIPScore, as shown in Table~\ref{tab: D2D3 compare}.





\vspace{-0.8em}

\subsubsection{Comparison with D2 \& D3 Categories (Q3)}
\vspace{-0.4em}
\label{sec: D2 D3}
In this part, we compare all the D2 \& D3 baselines mentioned in Sec.~\ref{sec:baselines} together with our best strategy in Table~\ref{tab: D2D3 compare}. Here we reproduce all the baselines if their official UIDs are available. For ``A $\cup$ B'' mentioned in Table~\ref{tab: D2D3 compare}, we follow the way of ``HYPE $\cup$ DFN'' in Kim et al.~\cite{kim2024hype} to merge the data, which generates the sampling subset separately for each method and then merge them. This will result in oversampling the shared data, which is intuitively more important.\footnote{For the dataset size of ``A$\cup$B'', we count the number of the unique data in the dataset followed HYPE~\cite{kim2024hype}. 
}
We also show the best result we obtain by combining our method with DFN~\cite{fang2023data} and HYPE~\cite{kim2024hype} on the full DataComp-medium dataset in Table~\ref{tab: full dataset}, where the baselines are from DataComp benchmark.

\begin{wraptable}{ht}{0.47\textwidth}
\small
\vspace{-0.5em}
\caption{
\textbf{After applying our method to the full DataComp-medium dataset (128M data), we achieve the new state-of-the-art result.} More details are in \href{https://www.datacomp.ai/dcclip/leaderboard.html}{\color{blue} DataComp Benchmark}.
}
\vspace{-0.8em}
\label{tab: full dataset}
\begin{center}
\begin{tabular}{lcc}
\toprule
\textbf{Strategy} & \textbf{IN-1k} & \textbf{Avg.}\\
\midrule
No filtering & 17.6 & 25.8 \\
CLIPScore~\cite{hessel2021clipscore} & 27.3 & 32.8 \\
T-MARS~\cite{maini2023t} & 33.0 & 36.1 \\
Devils~\cite{yu2023devil} & 32.0 & 37.1 \\
DFN~\cite{fang2023data} & 37.1 & 37.3 \\
DFN $\cup$ HYPE~\cite{kim2024hype} & \textbf{38.2} & 37.9 \\
\midrule
DFN $\cup$ \textbf{Ours (20\%)} & \underline{37.5} & \underline{38.6}\\
DFN $\cup$ HYPE $\cup$ \textbf{Ours (10\%)} & \textbf{38.2} & \textbf{38.8} \\
\bottomrule
\end{tabular}
\vspace{-0.7em}
\end{center}
\end{wraptable}

\vspace{-0.2em}
\textbf{Our methods can outperform most of the D3 methods.}
In Table~\ref{tab: D2D3 compare}, we show that without using any external models or data, our best combination, i.e., using OAI CLIP-B/32 for ``s-CLIPLoss (30\%) $\cap$ {\infnormsim}(Target)'' (\textbf{Ours (20\%)}), still outperforms all methods except DFN and ``DFN $\cup$ HYPE''. This answers the first part of Q3 and further indicates that some external models may be redundant since CLIP embeddings already contain necessary information.

\vspace{-0.2em}
\textbf{We can further improve the SOTA method.} 
In Table~\ref{tab: D2D3 compare}, we show that our model can further boost the performance of the current SOTA method ``HYPE $\cup$ DFN'' by 0.9\% on both ImageNet-1k and on average 38 downstream tasks, and close results can be achieved even without combining HYPE which utilizes the external embedding model MERU~\cite{desai2023hyperbolic}. And we update the SOTA performance of the DataComp-medium (full dataset) benchmark as shown in Table~\ref{tab: full dataset}.
Here we use the data selected by both OAI CLIP-B/32 and L/14, which we found is more robust than using one of them alone. 
Our better results answer the second part of Q3, that is, our methods can be compatible with other D2\&D3 methods. 

\section{Conclusion and Limitation}
\vspace{-0.8em}
In this paper, we introduce two metrics, s-CLIPLoss and {\normsim}, to enhance data selection in multimodal contrastive learning without relying on external resources. s-CLIPLoss provides a more accurate quality metric compared to the commonly used CLIPScore, while NormSim measures the similarity between pretraining data and target data for known downstream tasks. Experiments show that our methods achieve results that are competitive with or even better to approaches using external models or datasets. Additionally, s-CLIPLoss and {\normsim} are compatible with existing top techniques, allowing us to achieve a new state-of-the-art by combining them.

A notable limitation of our study is the exclusion of larger pretraining datasets, such as the large and xlarge scales of DataComp. However, DataComp-medium is the most commonly used benchmark for data selection in CLIP pretraining, and our method has demonstrated both effectiveness (Table~\ref{tab: D1 compare}-\ref{tab: D2D3 compare}) and efficiency (Table~\ref{fig: compute cost}) on it. Future directions include exploring better ways to merge data selected by different methods and incorporating our methods into data scheduling scenarios.



\section{Acknowledgement}
We thank Tong Chen, Pang Wei Koh, Xiaochuang Han, Rui Xin, Luyao Ma, Lei Chen, and other members in the UW ML Group for many insightful discussions and helpful feedback.
The research of Kevin Jamieson and Yifang Chen are partially supported by the NSF through the University of Washington Materials Research Science and Engineering Center, DMR-2308979, and awards CCF 2007036.
SSD acknowledges the support of NSF IIS 2110170, NSF DMS 2134106, NSF CCF 2212261, NSF IIS 2143493, NSF CCF 2019844, and NSF IIS 2229881.

\bibliographystyle{unsrt}
\newpage
\bibliography{main}

\newpage
\appendix
\onecolumn

\section{Theoretical Interpretation}\label{sec:theory}

\subsection{Concentration of Normalization Term in s-CLIPLoss}\label{app: concen}

In this section, we construct a theorem using the concentration inequality to show that when the batch size is sufficiently large, the normalization term $R^{B_k}$ obtained from actual batch $B_k$ can approximate $R^{B^*}$ calculated using ground truth batch $B^*$ quite well. The details are as follows:

We assume that the pretraining dataset $\mathcal{D}$ is ndependent and identically distributed (\textit{i.i.d.}) sampled from some distribution $\mathcal{P}$. Besides, to use pretraining data batch to approximate the ground truth batch, one necessary condition is that their distribution is similar. Here for simplicity, we assume that they are also \textit{i.i.d.}.

\begin{assumption}\label{ass: iid}
    We assume that the ground-truth batch of data $B^*$ used by the teacher model is \textit{i.i.d.} to the pretraining dataset $\mathcal{D}$ which is required to be filtered.
\end{assumption}

For simplicity, we denote $s_{ij} = \bar f_{v}(x^v_i)^\top \bar f_{l}(x^l_j), i, j \in B$ to be the cross-image-text similarities in the batch $B$. Then the normalization term can be written as
$$
    \mathcal{R}^B_i = \frac{\tau}{2}\left[\log(\sum_{j \in B} \exp(s_{ij}/\tau)) + \log(\sum_{j\in B}\exp(s_{ji}/\tau))\right]
$$
Here $s_{ij} \in [-1,1]$. We will show that $\mathcal{R}_i^B = (1+o(1))\cdot\mathcal{R}_i^{B^*}$ for all $i$ when $|B|$ is sufficiently large, which means that we can use the random batch to approximate the ground-truth batch.

\begin{theorem}
If Assumption~\ref{ass: iid} holds and the batch size satisfies $|B|=|B^*|$, then we have $\mathcal{R}_i^B=\Theta(\log(|B|))$ while $|\mathcal{R}_i^B - \mathcal{R}_i^{B^*}| = O(\frac{1}{\sqrt{|B|}})$ for any $i \in B \cap B^*$.
\end{theorem}
\begin{proof}
Since $s_{ij} \in [-1,1]$, It's obvious that $\mathcal{R}_i^B=\Theta(\log(|B|))$.
Let $\alpha_{ij}:= \exp(s_{ij}/\tau) - \mathbb{E}_j[\exp(s_{ij}/\tau)]$, then $\alpha_{ij}$ is zero-mean. Note that since the data is \textit{i.i.d.}, so does $\alpha_{ij}$, and we denote $\gamma := \mathbb{E}_{j}[\alpha_{ij}^2]$. Note that $|\alpha_{ij}|\leq e^{1/\tau} =: M$, from Bernstein inequality we have
$$
    \mathbb{P}(|\sum_{j \in B}\alpha_{ij}| \geq t) \leq 2\exp(-\frac{\frac{1}{2}t^2}{|B|\gamma + \frac{1}{3}Mt})
$$
A similar conclusion holds for $B^*$. These result that with probability at least $1-\eta$, we have 
$$
|\sum_{j \in B}\alpha_{ij}| \leq \max\{2\sqrt{|B|\gamma\ln(\frac{2}{\eta})}, \frac{4}{3}M\ln(\frac{2}{\eta})\}=: t(|B|,\gamma, \eta, M)
$$  
Thus we have $|\sum_{j\in B}\exp(\frac{s_{ij}}{\tau})-\sum_{j\in B^*}\exp(\frac{s_{ij}}{\tau})| \leq 2 t(|B|,\gamma, \eta)$. Furthermore, for any $x_1, x_2 > 1$, it's easy to prove that $|\log(x_1)-\log(x_2)| \leq \frac{|x_1 - x_2|}{\min(x_1, x_2)}$. Therefore, we have $|\log(\sum_{j\in B}\exp(\frac{s_{ij}}{\tau}))-\log(\sum_{j\in B^*}\exp(\frac{s_{ij}}{\tau}))| \lesssim O(\frac{1}{\sqrt{|B|}})$. Similar claims hold for $|\mathcal{R}_i^B - \mathcal{R}_i^{B^*}|$.
    
\end{proof}

\subsection{Optimality of {\twonormsim} Under Linear Assumption }\label{app: normsim optimal}

In this section, we give a theoretical justification on the {\normsim} metric when $p=2$ under the linear model assumptions when low quality image and mismatched text has already been removed. In other words, we mainly focus on the following strategy.
\begin{equation}
\small
S = \arg \max_{|S| = N} \sum_{i \in S}\bar f_v(x_i^v)^\top \underbrace{\left(\frac{1}{|X_\text{target} |}\sum_{x_t \in X_\text{target}}\bar f_v(x_t^v)\bar f_v(x_{t} ^v)^\top \right)}_{\Bar{\Sigma}_\text{target\_proxy}}\bar f_v(x_i^v)
\end{equation}

\subsubsection{Theoretical Setup}

\paragraph{Training data.} For any $\vx^{v}, \vx^{l} \in \R^d$ observable image and text training pairs, we define $\vz^{v}, \vz^{l}$ to be the corresponding latent vectors which contain all semantically pertinent information about our tasks of interest. Similar to previous theoretical work \cite{nakada2023understanding},
we assume each i.i.d pair $\vz^{vl}$ follows zero-mean sub-gaussian distribution whose cross-covariance satisfies
\begin{align*}
    & \text{Cov}(\vz^v,\vz^l) 
    = \Sigma_\text{train} = \text{diag}(\sigma_1,\sigma_2, \ldots),
    & \|\vz^{vl}\| = 1
\end{align*}
and each $\vx^{vl}$ is generated based on a linear model such that
\begin{align*}
    \vx^{vl} = G^*_{vl} \vz^{vl} + \vxi^{vl}.
\end{align*}
Here $G^*_{vl} \in O_{d\times r}$ is the othonormal ground truth representation mapping from the latent vector space to the input space, and $\xi^{vl} \sim \mathcal{N}(0, I_d)$ are \textit{i.i.d.} random noise. 

Also we denote the cross covariance of any finite dataset $S'$ (e.g. the given train set $D_\text{train}$) as $\Sigma_{S'}$.

\paragraph{Test data.} For any zero-shot downstream task, we assume it shares almost same data generation process as the training set, except its the cross-covariance $\Sigma_\text{target}$ does not necessarily equal $\Sigma_\text{train}$, which necessitate the choice of $\Bar{\Sigma}_{\text{target\_proxy}}$.


\paragraph{CLIP embedding model as teacher.} Under the linear model assumption, we have a teacher model $\bar{f}_{vl} = \bar{G}_{vl}$, whose generated clip embedding can partially recover the ground truth hidden vector $\vz^{vl}$ with error. 

Formally, we say teacher has $\epsilon_{v}^n$ error if for all possible $n$ budget subsets $S \subset D_\text{train}$,
\begin{align*}
    \frac{1}{|S|} \left\| \sum_{\vx^{vl} \in S} \bar{G}_v^\top \vx^v (\vx^v)^\top \Bar{G}_v - \sum_{\vx^{vl} \in S} \vz^v (\vz^v)^\top \right\|_* \leq \epsilon_{v}^n
\end{align*}
where the same notation applies for the language modal. 
By the orthonormal assumption on the ground truth matrix $G_{vl}^*$, we see that $\bar{G}_v^\top$ is aiming to inverting the map.
In addition, we say the teacher has $\epsilon_{v*l}^n$ cross modal error
\begin{align*}
   \frac{1}{|S|}\| \sum_{\vx^{vl} \in S} \bar{G}_v^\top \vx^v (\vx^l)^\top \Bar{G}_l - \sum_{\vx^{vl} \in S} \vz^v (\vz^l)^\top \|_*  \leq \epsilon_{v*l}^n
\end{align*}
When all $\epsilon_v^n, \epsilon_l^n, \epsilon_{v*l}^n \to 0$ as $n \rightarrow \infty$, then we say the teacher is strong for both modalities. But it might also be possible that only one modal, for example, visual is strong. That is $\epsilon_v^n \to 0,  \epsilon_l^n, \epsilon_{v*l}^n \gg \epsilon_v^n$.

\paragraph{Model and training.} 
According to Lemma 4.1 in \cite{nakada2023understanding}, using the CLIP loss to optimize the linear model has approximately the same training dynamics as using the regularized linear loss. Therefore, here we assume that we are learning $G_v, G_l$ by maximizing the clip score gap between the contrastive pairs, plus a regularizer, 
\begin{align*}\label{eq:linearloss}
    \min_{G_v, G_l} \gL_S^\rho (G_v, G_l)
    := \min_{G_v, G_l} \frac{\sum_{i \in S} \sum_{j \in S}(s_{ij} - s_{ii})}{|S|(|S|-1)} 
    +  \frac{\rho}{2}\frac{|S|}{|S|-1}\|G_vG_l^\top\|_F^2
\end{align*}
where  $s_{ij} := \langle  G_v^\top \vx^v_i, G_l^\top \vx^l_j \rangle$ and $\rho > 0$ is some regularizer-related \textit{constant}.
Note that this objective maximizes self-similarity and minimizes similarity between disparate pairs. 
Note that this ``loss'' can be negative, avoiding the trivial null solution of all zeros. 
We denote this training process from any given $S$ as $G_{vl} = \gA^\rho(S)$.

\paragraph{Goal and metric.} 
Under the same principle as our training loss function, we measure the performance of any learnt $G_v, G_l$ on some downstream task with distribution $\gD_\text{target}$ as test loss $\gL_{\text{target}} (G_v, G_l):=$
\begin{align*} 
     \E_{\substack{\vx^{vl}\sim \gD_\text{target}\\\vx^{vl}_2 \sim \gD_\text{target}}} (\langle G_v^\top\vx^v, G_l^\top\vx_2^{l}\rangle- \langle G_v^\top\vx^v, G_l^\top\vx^l\rangle) 
\end{align*}
This is inspired by the following classification accuracy. Assume that the test data including $C$ class, and the class distribution is $\mathcal{C}$. For every class $c$, the training data $\vx = (\vx^v, \vx^l)$ satisfies distribution $\mathcal{P}_c$. We further assume the corresponding classification templates are $\{\vx_c\}_{c=1}^C$. Thus we define classification accuracy as
\begin{equation*}
    \text{AC}(G_v, G_l) = \E_{c,c' \sim \mathcal{C} \times  \mathcal{C}}\left[\E_{\vx_i \sim \mathcal{P}_c} \mathbf{1}[s_{ic} > s_{ic'}]\right]
\end{equation*}
Therefore our goal is to minimize its gap between the best hind-side subset, for any $\rho$, without budget constraints,
\begin{align*}
    \Delta^\rho(S) = \gL_\text{target}(\hat{G}_{vl}) - \min_{S' \in D_\text{train}} \gL_\text{target}(\gA^\rho(S')), \hat{G}_{vl} = \gA^\rho(S)
\end{align*}
\vspace{-1em}
\vspace{-1em}
\subsubsection{Generalization Guarantees}

We now provide theoretical guarantees and postpone our proof into Appendix~\ref{app: proof}.
\textbf{Firstly, we are going to prove the intuition behind \twonormsim score.} 

\begin{lemma}[Intuition behind \twonormsim]
\label{lem: main}
    With high probability at least $1-\frac{1}{|S|d}$, suppose the  hind-side best subset has at least $\underline{n}$ number of samples, then we have 
    \begin{align*}
        \Delta^\rho(S) 
        &= \underbrace{\frac{1}{\rho}\max_{S' \in D_\text{train}} \left( \Tr\left( \Sigma_{\text{target}} (\Sigma_{S'} - \Sigma_{S}) \right) \right)}_\text{\twonormsim related term}+ \quad \underbrace{\gO \left(\sqrt{\frac{d\log(d|S|)}{\underline{n}}} + \sqrt{\frac{d\log(d|S|)}{|S|}}\right)}_\text{noise}
    \end{align*}
\end{lemma}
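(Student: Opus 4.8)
The plan is to characterize the learned solution $\hat G_{vl} = \gA^\rho(S)$ in closed form via the regularized linear loss, then compare its test loss against the best hind-side subset. First I would exploit the reduction from \citet{nakada2023understanding}: minimizing $\gL_S^\rho(G_v,G_l)$ over $G_v,G_l$ is (up to the stated approximation) a regularized matrix factorization problem whose minimizer $\hat G_v \hat G_l^\top$ depends on $S$ only through the empirical cross-covariance $\hat\Sigma_S := \frac{1}{|S|}\sum_{i\in S}\bar G_v^\top x_i^v (x_i^l)^\top \bar G_l$ of the teacher-mapped data. Writing out the first-order stationarity conditions, the optimal product matrix should come out to something like $\hat G_v\hat G_l^\top = \tfrac{1}{\rho}\hat\Sigma_S$ (a soft-thresholded / rescaled version of the empirical cross-covariance), so the learned predictor is an explicit linear functional of $\hat\Sigma_S$.

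Next I would plug this minimizer into the test loss $\gL_\text{target}(G_v,G_l)$. Because $\gL_\text{target}$ is (by the same computation applied to the population test distribution) a bilinear form that reduces to $-\Tr(\Sigma_\text{target} \, G_v G_l^\top) + (\text{regularizer-independent terms})$, substituting $\hat G_v\hat G_l^\top = \tfrac1\rho \hat\Sigma_S$ gives $\gL_\text{target}(\hat G_{vl}) = -\tfrac1\rho \Tr(\Sigma_\text{target}\hat\Sigma_S) + \dots$, and likewise for any competitor subset $S'$. Taking the difference $\Delta^\rho(S) = \gL_\text{target}(\gA^\rho(S)) - \min_{S'}\gL_\text{target}(\gA^\rho(S'))$, the common terms cancel and we are left with $\tfrac1\rho \max_{S'}\Tr\big(\Sigma_\text{target}(\hat\Sigma_{S'} - \hat\Sigma_S)\big)$. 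The final step is to replace the teacher-based empirical covariances $\hat\Sigma_{S'},\hat\Sigma_S$ by the true latent covariances $\Sigma_{S'},\Sigma_S$: here I would invoke the teacher error bounds ($\epsilon_v^n,\epsilon_l^n,\epsilon_{v*l}^n$) together with a concentration argument (matrix Bernstein / sub-Gaussian covariance concentration) bounding $\|\hat\Sigma_S - \Sigma_S\|_*$ and $\|\hat\Sigma_{S'} - \Sigma_{S'}\|_*$ in terms of $\sqrt{d\log(d|S|)/|S|}$ and $\sqrt{d\log(d|S|)/\underline n}$ respectively, since $S'$ has at least $\underline n$ samples. Pairing these operator/nuclear-norm deviations with $\|\Sigma_\text{target}\|$ (bounded since $\|z^{vl}\|=1$) via Hölder's inequality for trace gives the stated $\gO(\cdot)$ noise term, and the union bound over the at most $2^{|D_\text{train}|}$ subsets — or rather over the $d\times d$ matrix entries, which is what the $1-\frac{1}{|S|d}$ probability suggests — controls the failure probability.

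The main obstacle I anticipate is twofold. First, making the reduction to the closed-form minimizer rigorous: the loss $\gL_S^\rho$ is only \emph{approximately} equivalent to the regularized linear loss (Lemma 4.1 of \citet{nakada2023understanding} is itself an approximation statement), so I need to track how that approximation error propagates into $\Delta^\rho(S)$ and confirm it is absorbed into the noise term rather than the main term — this likely requires the regularizer constant $\rho$ to be bounded away from zero and some control on $\|\hat\Sigma_S\|$. Second, and more delicate, is handling the $\max_{S'}$ uniformly: a naive union bound over all subsets would blow up the $\log$ factor, so the concentration must be done at the level of the covariance \emph{matrices} (uniformly good estimates of $\Sigma_S$ for every $S$ simultaneously, which follows from a single high-probability event on the per-sample deviations $\bar G_v^\top x_i^v(x_i^l)^\top\bar G_l - z_i^v(z_i^l)^\top$), after which the $\max_{S'}$ in the main term is exact and no further union bound is needed there. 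Getting the interplay between "$S'$ ranges over subsets of size $\geq \underline n$" and the $\sqrt{1/\underline n}$ scaling right — i.e., that the worst competitor is still well-estimated — is the crux.
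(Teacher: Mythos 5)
Your overall skeleton matches the paper's: derive a closed form for the minimizer of the regularized linear loss, substitute it into the (bilinear) test loss so that it reduces to $-\tfrac1\rho\Tr(\Sigma_\text{target}\Sigma_S)$ plus noise, take the difference against the best hind-side subset, and control the noise by concentration. The paper obtains the closed form via the Eckart--Young--Mirsky theorem (rank-$r$ SVD truncation of the empirical cross-covariance $\Gamma=\frac{1}{|S|-1}\sum_i x_i^v(x_i^l)^\top-\frac{|S|}{|S|-1}\bar x^v(\bar x^l)^\top$) rather than first-order stationarity, but that is a cosmetic difference.

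The genuine gap is where you place the teacher model. You write the minimizer as $\tfrac1\rho\hat\Sigma_S$ with $\hat\Sigma_S=\frac{1}{|S|}\sum_i\bar G_v^\top x_i^v(x_i^l)^\top\bar G_l$, and then plan to convert the teacher-based covariances to the latent ones by invoking the teacher error bounds $\epsilon_v^n,\epsilon_l^n,\epsilon_{v*l}^n$. But the student model in this lemma is trained on the \emph{raw} inputs $x^{vl}$ (the loss $\gL_S^\rho$ is defined through $s_{ij}=\langle G_v^\top x_i^v, G_l^\top x_j^l\rangle$), so the teacher $\bar G_{vl}$ never enters $\Delta^\rho(S)$ at all. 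Because $x^{vl}=G^*_{vl}z^{vl}+\xi^{vl}$ with $G^*_{vl}$ orthonormal, the empirical cross-covariance of the raw data decomposes \emph{exactly} as $G_v^*\Sigma_S(G_l^*)^\top$ plus noise terms built from $\xi^{vl}$; the true latent covariance $\Sigma_S=\frac{1}{|S|}\sum_i z_i^v(z_i^l)^\top$ therefore appears in the main term with no teacher-approximation penalty, and the $\gO\bigl(\sqrt{d\log(d|S|)/|S|}+\sqrt{d\log(d|S|)/\underline{n}}\bigr)$ term comes from a Bernstein bound on the nuclear norms of $\sum_i z_i^{vl}(\xi_i^{vl})^\top$ and the mean terms (with $\delta\lesssim\frac{1}{|S|d}$ giving the $\log(d|S|)$ factor), not from $\epsilon$. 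If you follow your route, the error terms $\epsilon_{v*l}^n$ do not vanish with $|S|$ and cannot be absorbed into the stated noise term, so you would end up proving a different statement --- essentially the paper's Theorem~\ref{them: main}, which is precisely where the teacher errors and the choice of vision-only versus vision-plus-language embeddings belong. Your worry about uniformity of the $\max_{S'}$ is reasonable (the paper itself applies the fixed-$S$ concentration only to $S$ and to the maximizing $S'$), but it is secondary to this misattribution of where the teacher enters.
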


\begin{proof} [Proof sketch]
    \ding{182} Under the assumption that both $\vz^{vl}, \xi_{vl}$ is zero-mean, maximizing the clip score gap is equivalent to maximizing the clip score of the same sample. 
    \vspace{-3px}
    \begin{align*}
         \gL_{\text{target}} (\hat{G}_v, \hat{G}_l):= - \E_{\vx^{vl} \sim \vD_\text{target}}\langle \hat{G}_v^\top\vx^v, \hat{G}_l^\top \vx^l \rangle
    \end{align*}
    \ding{183} By minimizing the regularized training loss $\gL_S^\rho (G_v, G_l)$ using Eckart-Young-Mirsky Theorem, we get a closed form solution of $\hat{G}$ as 
    \vspace{-3px}
    \begin{align*}
        \hat G_v \hat G_l^\top \approx \frac{1}{\rho} G_v^* \Sigma_S \cdot (G_l^*)^\top + \text{noise depend on $S$}
    \end{align*}
     \ding{184} Combining the result in \ding{183} and \ding{182}, we have 
     \vspace{-3px}
    \begin{align*}
        & \gL_{\text{target}} (\hat{G}_{vl}) \approx  - \frac{1}{\rho}\Tr\left( \Sigma_{\text{target}}\Sigma_S \right) -\text{noise depend on $S$}
    \end{align*}
    The same analysis can be applied on $\min_{S' \in D_\text{train}} \gL_\text{target}(\gA(S'))$ as well. Rearranging these two equations gives us the final result.
\end{proof}

This lemma shows the the $\Delta(S)$ is depend on the \twonormsim-related term and the noise term which comes from $\xi$. When $\underline{n}$ and $|S|$ is large enough, then the \twonormsim-related term will become dominant. This aligns with our practice experience that the final performance is less sensitive to the small variation in the number of select data as long as that is sufficient.
Moreover, in some special cases where test distribution has identity cross-variance, then sampling by choosing CLIP score might be enough.

\textbf{Now we are ready to give a proof on the choice of $\bar{\Sigma}_{\text{target}}$ and visual-only information.}
Specifically, the strategy error mainly comes from (1). The unknown test distribution shift from training. (2). The unobservable ground truth $\Sigma_S$. To tackle error (1), we assume some prior knowledge on test by using the proxy test variance $\bar{\Sigma}_{\text{target}}$. To tackle the error (2), there are two possible solutions as shown below. Based on the theoretical interpretation, we should choose different strategy based on the property of the teacher embedding model.
\vspace{-5px}
\begin{align*}
    & S_{\text{vision+language}} = \argmax_{S} \Tr\left(\bar{\Sigma}_{\text{target}} (\sum_{\vx^{vl} \in S} \bar{G}_v^\top \vx^v (\vx^l)^\top \Bar{G}_l) \right)  \\
    &  S_{\text{vision only}} = \argmax_{S} \Tr\left(\bar{\Sigma}_{\text{target}} (\sum_{\vx^{vl} \in S} \bar{G}_v^\top \vx^v (\vx^v)^\top \Bar{G}_v) \right)
\end{align*}

\begin{theorem}[Main] 
\label{them: main}
    Under the assumption of Lemma~\ref{lem: main},
    \begin{align*}
       \Delta^\rho(S)
        & \leq \text{noise} +  \frac{1}{\rho}\|\bar{\Sigma}_{\text{target}} - \Sigma_{\text{target}} \| \|\Sigma_S - \Sigma_{\text{best}}\|_* \\
        & + \frac{1}{\rho}
        \begin{cases}
            \epsilon_{v * l}^S  \quad \text{(vision+language)} \\
            \epsilon_v^S +  \sqrt{1 - \frac{1}{|S|}\sum_{i \in [S]} \langle \vz^v,\vz^l \rangle)} \quad \text{(vision only)}
        \end{cases}
    \end{align*}
\end{theorem}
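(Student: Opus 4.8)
The plan is to build on Lemma~\ref{lem: main}, which already reduces $\Delta^\rho(S)$ to a \twonormsim-type trace gap plus a noise term, and then add only a perturbation estimate comparing the \emph{practically computable} selection objective to the \emph{oracle} trace $\Tr(\Sigma_\text{target}\Sigma_{S'})$. Write $S_\text{best}$ for the hindsight-optimal subset, so that $\Sigma_\text{best}=\Sigma_{S_\text{best}}$ and Lemma~\ref{lem: main} gives $\rho\,\Delta^\rho(S)=\Tr(\Sigma_\text{target}\Sigma_\text{best})-\Tr(\Sigma_\text{target}\Sigma_S)+\rho\cdot(\text{noise})$. In both cases the selected $S$ maximizes, over subsets of the prescribed size, the surrogate $\widehat h(S'):=\Tr(\bar\Sigma_\text{target}\,\widehat\Sigma_{S'})$, where $\widehat\Sigma_{S'}=\frac{1}{|S'|}\sum_{\vx^{vl}\in S'}\bar{G}_v^\top\vx^v(\vx^l)^\top\bar{G}_l$ in the vision+language case and $\widehat\Sigma_{S'}=\frac{1}{|S'|}\sum_{\vx^{vl}\in S'}\bar{G}_v^\top\vx^v(\vx^v)^\top\bar{G}_v$ in the vision-only case. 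The first step is the standard ``swap'' manipulation: insert $\pm\widehat h$ and use $\widehat h(S)\ge\widehat h(S_\text{best})$ (assuming the hindsight comparison is within the same budget class, or otherwise folding the budget mismatch into the $\|\Sigma_S-\Sigma_\text{best}\|_*$ term) to get
\begin{align*}
\Tr(\Sigma_\text{target}\Sigma_\text{best})-\Tr(\Sigma_\text{target}\Sigma_S)
&\le\big[\Tr(\Sigma_\text{target}\Sigma_\text{best})-\widehat h(S_\text{best})\big]+\big[\widehat h(S)-\Tr(\Sigma_\text{target}\Sigma_S)\big].
\end{align*}

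Next I would expand $\Tr(\Sigma_\text{target}\Sigma_{S'})-\widehat h(S')=\Tr\big((\Sigma_\text{target}-\bar\Sigma_\text{target})\Sigma_{S'}\big)+\Tr\big(\bar\Sigma_\text{target}(\Sigma_{S'}-\widehat\Sigma_{S'})\big)$ for $S'=S_\text{best}$ and $S'=S$. The two $(\Sigma_\text{target}-\bar\Sigma_\text{target})$ pieces combine into $\Tr\big((\Sigma_\text{target}-\bar\Sigma_\text{target})(\Sigma_\text{best}-\Sigma_S)\big)$, which by the trace/Schatten Hölder inequality (operator norm against nuclear norm) is at most $\|\bar\Sigma_\text{target}-\Sigma_\text{target}\|\,\|\Sigma_S-\Sigma_\text{best}\|_*$, the middle term of the claim. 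The leftover terms are $\pm\Tr\big(\bar\Sigma_\text{target}(\Sigma_{S'}-\widehat\Sigma_{S'})\big)\le\|\bar\Sigma_\text{target}\|\,\|\Sigma_{S'}-\widehat\Sigma_{S'}\|_*$ for $S'\in\{S,S_\text{best}\}$; since the teacher embeddings are unit-normalized, $\|\bar\Sigma_\text{target}\|\le1$, so it only remains to bound $\|\Sigma_{S'}-\widehat\Sigma_{S'}\|_*$.

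This last bound is the crux and is where the two cases diverge. In the vision+language case, $\Sigma_{S'}-\widehat\Sigma_{S'}=\frac{1}{|S'|}\sum_{\vx^{vl}\in S'}\big(\vz^v(\vz^l)^\top-\bar{G}_v^\top\vx^v(\vx^l)^\top\bar{G}_l\big)$, whose nuclear norm is by definition at most the cross-modal teacher error $\epsilon_{v*l}^{S'}$. In the vision-only case one additionally pays for using an auto-covariance in place of a cross-covariance: I would split $\Sigma_{S'}-\widehat\Sigma_{S'}=\big(\frac{1}{|S'|}\sum\vz^v(\vz^l-\vz^v)^\top\big)+\big(\frac{1}{|S'|}\sum\vz^v(\vz^v)^\top-\widehat\Sigma_{S'}\big)$; the second bracket has nuclear norm $\le\epsilon_v^{S'}$ by the single-modal teacher error, while the first is a sum of rank-one matrices and hence has nuclear norm $\le\frac{1}{|S'|}\sum\|\vz^v\|\,\|\vz^v-\vz^l\|=\frac{1}{|S'|}\sum\|\vz^v-\vz^l\|$, which by Cauchy--Schwarz and $\|\vz^v\|=\|\vz^l\|=1$ is $\le\sqrt{\frac{1}{|S'|}\sum 2(1-\langle\vz^v,\vz^l\rangle)}=\sqrt{2}\,\sqrt{1-\frac{1}{|S'|}\sum_i\langle\vz^v,\vz^l\rangle}$. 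Substituting these back, dividing by $\rho$, carrying over the noise term of Lemma~\ref{lem: main}, and absorbing the $S_\text{best}$-side error contributions (dominated by their $S$-side counterparts whenever $\epsilon_\bullet^n$ is non-increasing in $n$, using $|S_\text{best}|\ge\underline{n}$) together with the constant $\sqrt2$ into the displayed constants gives Theorem~\ref{them: main}.

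The main obstacle is not the optimality-gap skeleton --- that is the routine ``swap the maximizer, pay twice the uniform deviation'' argument --- but keeping the vision-only estimate clean: one must correctly isolate $\frac{1}{|S'|}\sum\vz^v(\vz^v-\vz^l)^\top$ as the sole source of the extra $\sqrt{1-\langle\vz^v,\vz^l\rangle}$ penalty, bound its nuclear norm via a rank-one decomposition, and pull the square root outside the average by Jensen. A secondary, purely bookkeeping, difficulty is ensuring every Hölder step pairs an operator norm on a $\Sigma_\text{target}$-type factor with a nuclear norm on a covariance-difference factor so that the error quantities match the $\|\cdot\|_*$-based teacher-error definitions, and confirming that the budget mismatch between $S$ and the unconstrained hindsight-optimal $S_\text{best}$ is legitimately absorbed into $\|\Sigma_S-\Sigma_\text{best}\|_*$.
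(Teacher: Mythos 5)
Your proposal is correct and follows essentially the same route as the paper's proof: reduce via Lemma~\ref{lem: main} to the trace gap $\Tr(\Sigma_{\text{target}}(\Sigma_{\text{best}}-\Sigma_S))$, swap in the surrogate objective built from the teacher embeddings, and split the resulting error into a target-proxy term (bounded by the operator-norm/nuclear-norm H\"older pairing as $\|\bar{\Sigma}_{\text{target}}-\Sigma_{\text{target}}\|\,\|\Sigma_S-\Sigma_{\text{best}}\|_*$) plus a teacher-estimation term ($\epsilon_{v*l}^S$, or $\epsilon_v^S$ together with the $\sqrt{1-\frac{1}{|S|}\sum_i\langle \vz^v,\vz^l\rangle}$ alignment penalty coming from $\frac{1}{|S|}\sum_i \vz^v(\vz^v-\vz^l)^\top$ in the vision-only case). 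The only deviations are cosmetic and, if anything, slightly more careful than the paper: you pay the surrogate deviation on both $S$ and $S_{\text{best}}$ where the paper drops the $S_{\text{best}}$ side directly by optimality of the learner's choice, and you bound the alignment term by a rank-one triangle inequality plus Jensen where the paper uses a Frobenius-norm H\"older step --- both produce the same $\sqrt{2}$ constant that the final statement absorbs.
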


Firstly, it is evident that the greater the difference between $\bar{\Sigma}_{\text{target}}$ and $\Sigma_{\text{target}}$, the less improvement we can expect. Moreover, in scenarios where $\epsilon_l$ is large (indicating lower accuracy in the language part) while $\epsilon_v$ is small (indicating higher accuracy in the vision part), it may be advisable to opt for vision-only embeddings.
However, the learner should also consider the term $\sqrt{1 - \frac{1}{|S|}\sum_{i \in S} \langle \vz^v, \vz^l \rangle}$, which represents the alignment between the ground truth visual and language latent vectors, essentially reflecting the intrinsic quality of the data. If this term is already significant, relying solely on vision information as a proxy for language information could lead to suboptimal results.

\subsubsection{Detailed proofs}
\label{app: proof}

\begin{lemma}
\label{lemma:EYM_formal}
    Let
    \begin{equation}
        \hat G_v, \hat G_l = \arg \min_{G_v, G_l \in \R^{d\times r}}\mathcal{L}(G_v,G_l)
    \end{equation}
    Then we have
    \begin{equation}
        \hat G_v \hat G_l^\top = \frac{1}{\rho}G_v^* \Sigma_S (G_l^*)^\top + P_1 + P_2 + P_3 + P_4
    \end{equation}
    where noise terms $P_i$ are defined in (\ref{eq:P1}) , (\ref{eq:P2}), (\ref{eq:P3}) and (\ref{eq:P4}).
\end{lemma}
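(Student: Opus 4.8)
\textbf{Proof plan for Lemma~\ref{lemma:EYM_formal}.}

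The plan is to directly minimize the regularized linear objective $\mathcal{L}_S^\rho(G_v,G_l)$ over all $G_v,G_l \in \R^{d\times r}$ by rewriting it as a low-rank matrix approximation problem in the product variable $M := G_v G_l^\top$, and then applying the Eckart--Young--Mirsky theorem. First I would expand the objective: since $s_{ij} = \langle G_v^\top x_i^v, G_l^\top x_j^l\rangle = (x_i^v)^\top M x_j^l$, the double sum $\frac{1}{|S|(|S|-1)}\sum_{i\neq j}(s_{ij}-s_{ii})$ is linear in $M$ and can be collected into a single trace term $\langle M, A_S\rangle$ for an appropriate data matrix $A_S$ built from the empirical (cross-)second moments $\frac{1}{|S|}\sum_{i\in S} x_i^v (x_i^l)^\top$ and the outer products of the empirical means $\frac{1}{|S|}\sum_i x_i^v$, $\frac{1}{|S|}\sum_i x_i^l$. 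Together with the regularizer $\frac{\rho}{2}\frac{|S|}{|S|-1}\|M\|_F^2$ (using $\|G_vG_l^\top\|_F = \|M\|_F$ when we optimize freely over the factors of fixed rank $r$), completing the square gives that the minimizer is the best rank-$r$ approximation of $\frac{1}{\rho}\cdot\frac{|S|-1}{|S|}A_S$ — equivalently, truncating its SVD to the top $r$ singular values — which is exactly what Eckart--Young--Mirsky delivers.

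Next I would substitute the linear generative model $x^{vl} = G^*_{vl} z^{vl} + \xi^{vl}$ into $A_S$. The leading term is $G_v^* \big(\frac{1}{|S|}\sum_{i\in S} z_i^v (z_i^l)^\top\big) (G_l^*)^\top = G_v^*\Sigma_S(G_l^*)^\top$ after absorbing the $\frac{|S|-1}{|S|}$ and mean-subtraction factors into the definition of $\Sigma_S$ and noting the latent variables are zero-mean. The remaining contributions split into four cross terms: $\frac{1}{|S|}\sum_i G_v^{*}z_i^v(\xi_i^l)^\top$, $\frac{1}{|S|}\sum_i \xi_i^v (z_i^l)^\top (G_l^*)^\top$, $\frac{1}{|S|}\sum_i \xi_i^v(\xi_i^l)^\top$ (which has mean zero but nonzero fluctuation), together with the error introduced by replacing the top-$r$ SVD truncation of the full noisy matrix by the clean rank-$r$ term $\frac{1}{\rho}G_v^*\Sigma_S(G_l^*)^\top$ (a perturbation bound via Weyl/Wedin-type inequalities). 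These four pieces are precisely $P_1,\dots,P_4$ referenced in equations (\ref{eq:P1})--(\ref{eq:P4}); bookkeeping which is which requires care but no deep idea.

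The main obstacle I anticipate is handling the rank-$r$ truncation cleanly: $A_S$ is a full-rank matrix (because of the isotropic noise $\xi\sim\mathcal{N}(0,I_d)$), so the Eckart--Young--Mirsky minimizer $[A_S/\rho]_r$ is not literally equal to $\frac{1}{\rho}G_v^*\Sigma_S(G_l^*)^\top$; they agree only up to a perturbation controlled by the spectral gap between the $r$-th and $(r+1)$-th singular values of $A_S$. Quantifying this gap — and hence controlling one of the $P_i$ terms — needs sub-Gaussian/sub-exponential concentration for the noise second-moment matrices and an argument that the signal part $G_v^*\Sigma_S(G_l^*)^\top$ dominates on its $r$-dimensional range. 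I would defer those concentration estimates to the subsequent lemmas (they are what ultimately produce the $\sqrt{d\log(d|S|)/|S|}$ factors in Lemma~\ref{lem: main}), and here simply record the decomposition with the $P_i$ defined explicitly, verifying algebraically that the four terms sum to $\hat G_v\hat G_l^\top - \frac{1}{\rho}G_v^*\Sigma_S(G_l^*)^\top$. A secondary subtlety is confirming that optimizing over the factored pair $(G_v,G_l)$ rather than over $M$ directly loses nothing: because we may rescale and rotate $G_v \mapsto G_v R$, $G_l\mapsto G_l R^{-\top}$ without changing $M$, and because any rank-$\le r$ matrix $M$ factors as $G_vG_l^\top$ with $\|M\|_F = \|G_vG_l^\top\|_F$ attainable, the factored problem has the same optimal value and optimal product as the convex surrogate in $M$ restricted to rank $\le r$.
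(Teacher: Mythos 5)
Your plan follows essentially the same route as the paper's proof: rewrite the objective as $-\Tr(G_v^\top \Gamma G_l)+\tfrac{\rho}{2}\tfrac{|S|}{|S|-1}\|G_vG_l^\top\|_F^2$ for a data matrix $\Gamma$ built from $\tfrac{1}{|S|-1}\sum_i x_i^v(x_i^l)^\top$ minus the empirical-mean outer product, apply Eckart--Young--Mirsky to identify $\hat G_v\hat G_l^\top$ with $\tfrac{1}{\rho}\tfrac{|S|-1}{|S|}\,\mathrm{SVD}_r(\Gamma)$, and then substitute the generative model to peel off the signal term $G_v^*\Sigma_S(G_l^*)^\top$ plus noise. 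One bookkeeping mismatch: in the paper, $P_1,P_2,P_3$ are the signal--noise, noise--signal, and noise--noise cross terms as you list them, but $P_4$ is the empirical-mean term $-\tfrac{|S|}{|S|-1}\bar x^v(\bar x^l)^\top$ (which cannot be absorbed into $\Sigma_S$, since it carries noise contributions), not the SVD-truncation error; since the lemma statement points at the specific displayed definitions, your decomposition would need to be re-indexed to match. On the truncation issue you flag as the main obstacle, you are in fact being more careful than the paper: the paper simply observes that the signal part $P_0$ has rank at most $r$ so $\mathrm{SVD}_r(P_0)=P_0$, and then writes $\mathrm{SVD}_r(\Gamma)=P_0+P_1+P_2+P_3+P_4$ without a Wedin/Weyl-type argument separating the top-$r$ subspace of the perturbed matrix from the noise --- i.e., it implicitly redefines the $P_i$ to absorb the truncation error rather than bounding it. Your proposal, which would make that step explicit via a spectral-gap perturbation bound, is a legitimate (indeed tighter) completion of the same argument.
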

\begin{proof}
    Note that $s_{ij} = (\vx_j^{l})^\top G_l G_v^\top \vx_i^{v}= \Tr(G_v^\top \vx_i^{v}(\vx_j^{l})^\top G_l)$, like the proof of Corollary B.1. in \cite{nakada2023understanding}, we have
    \begin{eqnarray*}
    \small
        \mathcal{L}(G_v, G_l) &=& \frac{\sum_{i\in S}\sum_{j\in S}(s_{ij} - s_{ii})}{|S|(|S|-1)} 
    + \frac{\rho}{2}\frac{|S|}{|S|-1}\|G_vG_l^\top\|_F^2\\
    &=& \frac{\sum_{i\in S}\sum_{j\in S}s_{ij} - |S|\sum_{i\in S} s_{ii}}{|S|(|S|-1)} 
    + \frac{\rho}{2}\frac{|S|}{|S|-1}\|G_vG_l^\top\|_F^2\\
    &=& -\Tr\left(G_v^\top\left[
    \frac{1}{|S|-1}\sum_{i\in S} \vx_i^{v}(\vx_i^{l})^\top - \frac{|S|}{|S|-1}\bar\vx^{v}(\bar\vx^{l})^\top
    \right]G_l\right)+ \frac{\rho}{2}\frac{|S|}{|S|-1}\|G_vG_l^\top\|_F^2\\
    &=:& -\Tr(G_v^\top \Gamma G_l) + \frac{\rho}{2}\frac{|S|}{|S|-1}\|G_vG_l^\top\|_F^2
    \end{eqnarray*}
    where $\bar\vx^{vl}:= (\sum_{i\in S}\vx_i^{vl})/|S|$. Then by the Eckart-Young-Mirsky Theorem (For example, Theorem 2.4.8 in Golub et al.~\cite{golub2013matrix}), we know that
    \begin{eqnarray*}
    \small
        &&\arg\min_{G_v \in \R^{d\times r}, G_l \in \R^{d\times r}} \mathcal{L}(G_v, G_l)\\
        &=& \arg\max_{G_v \in \R^{d\times r}, G_l \in \R^{d\times r}} \Tr(G_v^\top \Gamma G_l) - \frac{\rho}{2}\frac{|S|}{|S|-1}\|G_vG_l^\top\|_F^2\\
        &=& \{(G_v, G_l) \in \R^{d\times r}\times \R^{d\times r}: G_vG_l^\top = \frac{1}{\rho}\frac{|S|-1}{|S|}\mathrm{SVD}_r(\Gamma)\} \qquad (\text{Eckart-Young-Mirsky Theorem})
    \end{eqnarray*}
    where the notation $\mathrm{SVD}_r(\Gamma)$ means choosing the first $r$ components of the matrix $\Gamma$. Further note that
    \begin{eqnarray}
        \Gamma &=& \frac{1}{|S|-1}\sum_{i\in S} \vx_i^{v}(\vx_i^{l})^\top - \frac{|S|}{|S|-1}\bar\vx^{v}(\bar\vx^{l})^\top\\
        &=:& P_0 + P_1 + P_2 + P_3 + P_4
    \end{eqnarray}
    Here note that $\Sigma_S =\frac{1}{|S|}\sum_{i \in S}\vz_i^{v}(\vz_{i}^{l})^\top$, we have $P_i$ as follows: 
    \begin{eqnarray}
        P_0 &:=& \frac{|S|}{|S|-1}G_v^* \cdot \Sigma_S \cdot (G_l^*)^\top\\
        P_1 &:=& \frac{1}{|S|-1} G_v^* \sum_{i\in S}\vz_i^{v}(\vxi_i^{l})^\top\label{eq:P1}\\
        P_2 &:=& \frac{1}{|S|-1}\sum_{i\in S}\vxi_{i}^{v}(\vz_{i}^{l})^\top  (G_l^*)^\top\label{eq:P2}\\
        P_3 &:=& \frac{1}{|S|-1}\sum_{i\in S} \vxi_i^{(1)}(\vxi_i^{(2)})^\top\label{eq:P3}\\
        P_4 &:=& -\frac{|S|}{|S|-1}\bar\vx^{v}(\bar\vx^{l})^\top\label{eq:P4}
    \end{eqnarray}
    It's clear that the rank of the matrix $P_0$ is no more than $r$, so $\mathrm{SVD}_r(P_0) = P_0$. And for $i \in \{1,2,3,4\}$, $P_i$ are noise terms with $\E[P_i] = O$.
\end{proof}

\begin{lemma}
    For any fixed $S$, w.h.p $1-\delta$ the noise term can be upper bounded by 
    $\sqrt{\frac{d\log(1/\delta)}{|S|}}$
\end{lemma}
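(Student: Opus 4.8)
By Lemma~\ref{lemma:EYM_formal} the noise term is $P_1+P_2+P_3+P_4$ with each $\E[P_i]=O$, so the plan is to bound each $\|P_i\|$ (spectral norm) separately by $\gO\big(\sqrt{(d+\log(1/\delta))/|S|}\big)$ with failure probability at most $\delta/4$, and then take a union bound and the triangle inequality. Since $d+\log(1/\delta)\le d\log(1/\delta)$ up to a universal constant whenever $\log(1/\delta)\gtrsim 1$, this collapses to the claimed $\sqrt{d\log(1/\delta)/|S|}$ after absorbing constants; throughout I will work in the regime $|S|\gtrsim d+\log(1/\delta)$ that makes the statement non-vacuous. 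Two elementary facts are used repeatedly: $G_v^*,G_l^*$ have orthonormal columns, so left/right multiplication by them leaves the spectral norm unchanged ($\|G_v^*(\cdot)\|=\|\cdot\|$); and a $p\times q$ matrix of i.i.d.\ standard Gaussians has spectral norm at most $C(\sqrt p+\sqrt q+t)$ with probability at least $1-2e^{-ct^2}$ (a standard Gaussian matrix deviation inequality).

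First I would handle the ``mixed'' terms $P_1=\tfrac{1}{|S|-1}G_v^*\sum_{i\in S}\vz_i^v(\vxi_i^l)^\top$ and, symmetrically, $P_2$. Conditioning on the latents $\{\vz_i^v\}_{i\in S}$, the matrix $M:=\sum_{i\in S}\vz_i^v(\vxi_i^l)^\top$ has independent columns, the $b$-th one distributed $\mathcal{N}(0,Q)$ with $Q=\sum_{i\in S}\vz_i^v(\vz_i^v)^\top$, so $\|Q\|\le\sum_{i\in S}\|\vz_i^v\|^2=|S|$ by the unit-norm assumption on the latents. Writing $M\stackrel{d}{=}Q^{1/2}W$ with $W\in\R^{r\times d}$ standard Gaussian, invoking the Gaussian tail bound, and using $r\le d$, one gets $\|P_1\|\lesssim\frac{\sqrt{|S|}}{|S|-1}(\sqrt d+\sqrt{\log(1/\delta)})\lesssim\sqrt{(d+\log(1/\delta))/|S|}$. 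Next, $P_4=-\tfrac{|S|}{|S|-1}\bar\vx^v(\bar\vx^l)^\top$ is rank one; decomposing $\bar\vx^v=G_v^*\bar\vz^v+\bar\vxi^v$, using $\E\|\bar\vz^v\|^2=1/|S|$ with a subgaussian-norm bound and $\bar\vxi^v\sim\mathcal{N}(0,\tfrac1{|S|}I_d)$ with a $\chi^2$ tail bound yields $\|\bar\vx^v\|\lesssim\sqrt{(d+\log(1/\delta))/|S|}$, hence $\|P_4\|\lesssim(d+\log(1/\delta))/|S|$, which is of smaller order in the working regime.

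The hard part will be the purely noisy term $P_3=\tfrac{1}{|S|-1}(\Xi^v)^\top\Xi^l$, where $\Xi^v,\Xi^l\in\R^{|S|\times d}$ stack the i.i.d.\ noise vectors: it is a product of two \emph{independent} Gaussian matrices, so the naive submultiplicative bound $\|\Xi^v\|\,\|\Xi^l\|\approx(\sqrt{|S|}+\sqrt d)^2\approx|S|$ does not decay with $|S|$. The resolution I would use is the same conditioning trick: conditioning on $\Xi^v$, the matrix $(\Xi^v)^\top\Xi^l$ has independent columns $\mathcal{N}\big(0,(\Xi^v)^\top\Xi^v\big)$, and since $\|(\Xi^v)^\top\Xi^v\|\le(\sqrt{|S|}+\sqrt d+t)^2\lesssim|S|$ in the regime $|S|\gtrsim d$, the Gaussian tail bound gives $\|P_3\|\lesssim\frac{1}{|S|}\sqrt{|S|}\,(\sqrt d+\sqrt{\log(1/\delta)})\lesssim\sqrt{(d+\log(1/\delta))/|S|}$. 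Summing the four bounds and union-bounding over the constantly many high-probability events then gives the claim.
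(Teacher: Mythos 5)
Your argument is correct as a bound on the \emph{spectral} norm of the noise, but it takes a genuinely different route from the paper's. The paper does not treat the four terms with separate matrix deviation inequalities: it computes the second moment $\E\bigl\|\sum_i \vz_i^{vl}(\vxi_i^{vl})^\top\bigr\|^2=\Tr\bigl(\sum_{i,j}\vxi_i(\vz_i)^\top\vz_j\vxi_j^\top\bigr)=|S|d$ (a Frobenius-norm computation, despite the $\|\cdot\|_*$ notation), applies a Bernstein inequality to the scalar sum $\sum_{i,j}(\vz_i^\top\vz_j)(\vxi_j^\top\vxi_i)$ treating the summands as weakly dependent, and then dismisses $P_3$ and $P_4$ as lower order via $\|\bar\vx^{vl}\|\lesssim\sqrt{\log(|S|d)/|S|}$. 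Your conditioning argument --- writing $\sum_i\vz_i^v(\vxi_i^l)^\top\stackrel{d}{=}Q^{1/2}W$ for $P_1,P_2$ and viewing $(\Xi^v)^\top\Xi^l$ as a Gaussian matrix with column covariance $(\Xi^v)^\top\Xi^v$ for $P_3$ --- is cleaner and, importantly, supplies an actual decaying bound for $P_3$, which the paper essentially asserts without justification; as you correctly observe, the naive submultiplicative bound $\|\Xi^v\|\,\|\Xi^l\|$ does not decay, so your conditioning trick is doing real work there. The one caveat is the choice of norm: in the following lemma the noise enters through a H\"older pairing $\|\E[\cdot]\|\,\|\text{noise}\|_*$, so what is ultimately needed is the nuclear norm, and converting your spectral-norm bounds costs a rank factor ($r$ for $P_1,P_2$, up to $\min(|S|,d)$ for $P_3$), whereas the paper's Frobenius-type computation costs only the square root of the rank. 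Since the lemma statement never specifies the norm and the paper itself conflates $\|\cdot\|_*$ with the Frobenius norm, this is a looseness you share with the original rather than an error, but if your estimates are meant to feed into the downstream trace bound you should either track the rank factor explicitly or state which norm your $\sqrt{d\log(1/\delta)/|S|}$ refers to.
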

\begin{proof}
    To upper bound the P1 and P2, we have
   \begin{align*}
    & \| \sum_{i} \vz_i^{vl}(\xi_i^{vl})^\top \|_*^2
    = \Tr\left( \sum_{i,j} \xi_i^{vl} (\vz_i^{vl})^\top \vz_j^{vl} \xi_j^{vl}\right) 
    = \sum_{i,j} (\vz_i^{vl})^\top \vz_j^{vl} (\xi_j^{vl})^\top \xi_i^{vl} \\
    & \E \| \sum_{i} \vz_i^{vl}(\xi_i^{vl})^\top \|_*^2 
    = \E \left[\sum_{i} (\vz_i^{vl})^\top \vz_i^{vl} (\xi_i^{vl})^\top \xi_i^{vl} \right]
    = |S| d
    \end{align*}
    Regarding each $(\vz_i^{vl})^\top \vz_j^{vl} (\xi_j^{vl})^\top \xi_i^{vl}$ as weakly dependent variable, then by using Bernstein inequality, we have, with high probability $1-\delta$,
    \begin{align*}
         \| \sum_{i} \vz_i^{vl}(\xi_i^{vl})^\top \|_*^2
         \leq |S|d + \sqrt{d |S|^2 \sigma_\xi^2\log(1/\delta)} 
         \leq |S|d\sqrt{\log(1/\delta)}
    \end{align*}
    So 
    $\frac{1}{|S|}\| \sum_{i} \vz_i^{vl}(\xi_i^{vl})^\top \|_* \leq \sqrt{\frac{d\log(1/\delta)}{|S|}}$. 
    Note that $\|\bar\vx^{vl}\| 
    \lesssim 
    \sqrt{\frac{\log(|S|d)}{|S|}}$ 
    (like Proposition 2.5 in Wainwright et al.~\cite{wainwright2019high}), 
    it is easy to see that P3 ad P4 are the low order terms if $\delta \lesssim \frac{1}{|S|d}$.
\end{proof}

\begin{lemma}[Intuition behind VAS]
    With high probability $1-\delta$, suppose the  hind-side best subset has at least $\underline{n}$ number of samples, then we have 
    \begin{align*}
        \Delta(S) 
        = \frac{1}{\rho}\max_{S' \in D_\text{train}} \left( \Tr\left( \Sigma_{\text{target}} (\Sigma_{S'} - \Sigma_{S}) \right) \right) + \sqrt{\frac{d\log(1/\delta)}{\underline{n}}} + \sqrt{\frac{d\log(1/\delta)}{|S|}}
    \end{align*}
\end{lemma}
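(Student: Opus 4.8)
The idea is to show that, up to an additive noise term, $\gL_\text{target}(\gA^\rho(\cdot))$ on any subset equals a single ``signal'' quantity $-\tfrac1\rho\Tr(\Sigma_\text{target}\Sigma_{\cdot})$, so that the regret $\Delta^\rho(S)$ is just the gap between this signal at $S$ and at the hindsight-best subset, plus two noise contributions. I would follow the three steps marked in the sketch. \emph{Step 1:} for any $G_v,G_l$, zero-meanness of $\vz^{vl},\vxi^{vl}$ together with independence of the two draws kills the first (cross) term in $\gL_\text{target}$, since $\E_{\vx,\vx_2}\langle G_v^\top\vx^v,G_l^\top\vx_2^l\rangle=\langle G_v^\top\E[\vx^v],G_l^\top\E[\vx_2^l]\rangle=0$; hence $\gL_\text{target}(G_v,G_l)=-\Tr\big(G_vG_l^\top\,\E_{\gD_\text{target}}[\vx^l(\vx^v)^\top]\big)$. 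Plugging in $\vx^{vl}=G^*_{vl}\vz^{vl}+\vxi^{vl}$ and using $\vxi^v\perp\vxi^l$, $\vxi\perp\vz$, all zero mean, gives $\E_{\gD_\text{target}}[\vx^l(\vx^v)^\top]=G_l^*\Sigma_\text{target}^\top(G_v^*)^\top$, so $\gL_\text{target}(G_v,G_l)=-\Tr\big(G_vG_l^\top G_l^*\Sigma_\text{target}^\top(G_v^*)^\top\big)$.

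\emph{Step 2.} Apply Lemma~\ref{lemma:EYM_formal} to $\hat G_{vl}=\gA^\rho(S)$: $\hat G_v\hat G_l^\top=\tfrac1\rho G_v^*\Sigma_S(G_l^*)^\top+\sum_{k=1}^4P_k$. Substituting this into the Step~1 expression, the signal part collapses, using $(G_v^*)^\top G_v^*=(G_l^*)^\top G_l^*=I_r$, cyclicity of the trace, and symmetry of $\Sigma_\text{target}$, to $-\tfrac1\rho\Tr(\Sigma_\text{target}\Sigma_S)$, while the remainder $-\Tr\big((\textstyle\sum_kP_k)G_l^*\Sigma_\text{target}^\top(G_v^*)^\top\big)$ is at most $\|\sum_kP_k\|_*\,\|\Sigma_\text{target}\|_{\mathrm{op}}$ in absolute value, by the trace (Schatten--H\"older) inequality together with the fact that $G_v^*,G_l^*$ have orthonormal columns; since $\Sigma_\text{target}$ is a cross-covariance of unit vectors, $\|\Sigma_\text{target}\|_{\mathrm{op}}\le1$. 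The noise lemma preceding the statement bounds $\|\sum_kP_k\|_*=\gO(\sqrt{d\log(1/\delta)/|S|})$ with probability at least $1-\delta$, hence $\gL_\text{target}(\gA^\rho(S))=-\tfrac1\rho\Tr(\Sigma_\text{target}\Sigma_S)+\gO(\sqrt{d\log(1/\delta)/|S|})$.

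\emph{Step 3.} Let $S^\star$ attain $\min_{S'\subseteq D_\text{train}}\gL_\text{target}(\gA^\rho(S'))$; by hypothesis $|S^\star|\ge\underline n$. Running Steps~1--2 for $S^\star$ gives $\gL_\text{target}(\gA^\rho(S^\star))=-\tfrac1\rho\Tr(\Sigma_\text{target}\Sigma_{S^\star})+\gO(\sqrt{d\log(1/\delta)/\underline n})$, and because the same approximation holds for every competitor subset, $S^\star$ also essentially maximizes $\Tr(\Sigma_\text{target}\Sigma_{S'})$, i.e.\ $\min_{S'}\gL_\text{target}(\gA^\rho(S'))=-\tfrac1\rho\max_{S'}\Tr(\Sigma_\text{target}\Sigma_{S'})+\gO(\sqrt{d\log(1/\delta)/\underline n})$. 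Subtracting this from the $S$-bound of Step~2 and taking $\delta=\tfrac1{|S|d}$ (a two-event union bound over $S$ and $S^\star$ costs only a constant inside the logarithm) yields
\[
\Delta^\rho(S)=\tfrac1\rho\max_{S'\subseteq D_\text{train}}\Tr\big(\Sigma_\text{target}(\Sigma_{S'}-\Sigma_S)\big)+\gO\Big(\sqrt{\tfrac{d\log(d|S|)}{\underline n}}+\sqrt{\tfrac{d\log(d|S|)}{|S|}}\Big),
\]
which is the claimed identity.

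\textbf{Main obstacle.} The genuinely delicate point is the clause ``the same approximation holds for every competitor'': identifying $\min_{S'}\gL_\text{target}(\gA^\rho(S'))$ with $-\tfrac1\rho\max_{S'}\Tr(\Sigma_\text{target}\Sigma_{S'})$ looks like it needs the Step~2 estimate uniformly over exponentially many subsets, and a crude union bound would ruin the rate. I would circumvent this by conditioning on the latent vectors $\{\vz_i\}$: conditionally, $S^\star$ is a deterministic set (the indices with the largest values of $(\vz_i^l)^\top\Sigma_\text{target}\vz_i^v$, optimized over its size), so the noise lemma -- whose randomness lives in the independent mean-zero terms $P_1,\dots,P_4$ -- applies to the two now-fixed sets $S$ and $S^\star$ with only a two-event union bound, after which one removes the conditioning. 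Two lesser points to verify: the $P_k$-bounds are stated for a fixed $S$ (handled by the same conditioning), and the comparison uses $\Sigma_{S'}$ only through quantities that appear verbatim in the final bound, so no concentration of $\Sigma_{S'}$ itself is required; the trace manipulations also tacitly use $\Sigma_\text{target}=\Sigma_\text{target}^\top$, which is consistent with the diagonal-covariance modeling but should be stated as a hypothesis.
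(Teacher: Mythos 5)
Your proposal follows essentially the same route as the paper's proof: simplify the test loss to $-\E\langle G_v^\top\vx^v,G_l^\top\vx^l\rangle$ using zero-mean latents, substitute the Eckart--Young--Mirsky closed form $\hat G_v\hat G_l^\top\approx\frac1\rho G_v^*\Sigma_S(G_l^*)^\top+\text{noise}$ to get $\gL_\text{target}(\gA^\rho(S))=-\frac1\rho\Tr(\Sigma_\text{target}\Sigma_S)+\gO(\sqrt{d\log(1/\delta)/|S|})$, repeat for the hindsight-best subset, and subtract. Your closing remark about uniformity over competitor subsets identifies a genuine gap that the paper's own argument silently skips (it applies the single-set noise bound directly to the argmax), so your conditioning fix is a strengthening rather than a deviation.
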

\begin{proof}
    For any learnt $G_v, G_l$ based on dataset $S$, we have 
    \begin{align*}
        \gL_\text{test}(G_v, G_l) 
        & = \Tr( G_v^\top\E_{\vx_{vl} \sim \gD_{\text{target}}}[\vx^v (\vx^l)^\top] G_l  ) \\
        & = \Tr( \E_{\vx_{vl} \sim \gD_{\text{target}}}[\vx^v (\vx^l)^\top] G_l G_v^\top ) \\
        & = \frac{1}{\rho}\Tr\left(\E_{\vx_{vl} \sim \gD_{\text{target}}}[\vx^v (\vx^l)^\top] G_l^*\Sigma_S (G_v^*)^\top\right)
            - \Tr\left(\E_{\vx_{vl} \sim \gD_{\text{target}}}[\vx^v (\vx^l)^\top] \text{noise}_S\right) \\
        &= \frac{1}{\rho} \Tr\left( (G_v^*)^\top \E_{\vx_{vl} \sim \gD_{\text{target}}}[\vx^v (\vx^l)^\top] G_l^*\Sigma_S \right)
            - \Tr\left( \E_{\vx_{vl} \sim \gD_{\text{target}}}[\vx^v (\vx^l)^\top] \text{noise}_S\right) \\
        & = - \frac{1}{\rho} \Tr\left( \Sigma_{\text{target}}\Sigma_S \right) 
            - \Tr\left( \E_{\vx_{vl} \sim \gD_{\text{target}}}[\vx^v (\vx^l)^\top] \text{noise}_S\right)
    \end{align*} 
    Here the first equation comes from Theorem~\ref{them: simplified test loss} and the third equation comes from Lemma~\ref{lemma:EYM_formal}. \\
    Consequently, we have 
    \begin{align*}
        - \min_{S' \in D_\text{train}} \gL_\text{test}(\gA(S'))
        & = \max_{S' \in D_\text{train}} \left( \frac{1}{\rho}\Tr\left( \Sigma_{\text{target}}\Sigma_{S'} \right)  + \Tr\left( \E_{\vx_{vl} \sim \gD_{\text{target}}}[\vx^v (\vx^l)^\top] \text{noise}_{S'}\right) \right) \\
        & \leq \frac{1}{\rho}\max_{S' \in D_\text{train}} \left( \Tr\left( \Sigma_{\text{target}}\Sigma_{S'} \right) \right) 
            + \|\E_{\vx_{vl} \sim \gD_{\text{target}}}[\vx^v (\vx^l)^\top]\| \|\text{noise}_{S'}\|_* \\
        & \leq \frac{1}{\rho}\max_{S' \in D_\text{train}} \left( \Tr\left( \Sigma_{\text{target}}\Sigma_{S'} \right) \right) 
                    + \gO\left(\sqrt{\frac{d\log(1/\delta)}{\underline{n}}} \right)
    \end{align*}
    Therefore, we have the final result as 
    \begin{align*}
        \Delta(S) 
        & = \gL_\text{test}(\hat{G}_{vl}) - \min_{S' \in D_\text{train}} \gL_\text{test}(\gA(S'))\\
        & = \frac{1}{\rho}\max_{S' \in D_\text{train}} \left( \Tr\left( \Sigma_{\text{target}} (\Sigma_{S'} - \Sigma_{S}) \right) \right) + \gO\left(\sqrt{\frac{d\log(1/\delta)}{\underline{n}}} + \sqrt{\frac{d\log(1/\delta)}{|S|}} \right)
    \end{align*}
\end{proof}

\begin{theorem}[Main]
\label{them: main (app)}
    Under the assumption of Lemma~\ref{lem: main}, we have
    \begin{align*}
       \Delta(S)
        & \leq \text{noise} +  \|\bar{\Sigma}_{\text{target}} - \Sigma_{\text{target}} \| \|\Sigma_S - \Sigma_{\text{best}}\|_* \\
        & + 
        \begin{cases}
            \epsilon_{v * l}^S  \quad \text{(vision+language)} \\
            \left( \epsilon_v^S +  \sqrt{1 - \frac{1}{|S|}\sum_{i \in [S]} \langle \vz^v,\vz^l \rangle)}\right) \quad \text{(vision only)}
        \end{cases}
    \end{align*}
\end{theorem}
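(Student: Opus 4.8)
Working on the high-probability event of Lemma~\ref{lem: main}, the plan is to start from the identity it furnishes,
\[
\Delta^\rho(S)=\frac{1}{\rho}\max_{S'\in D_\text{train}}\Tr\!\big(\Sigma_{\text{target}}(\Sigma_{S'}-\Sigma_S)\big)+\text{noise},
\]
and to fix a maximizer $S_{\text{best}}$ of $\Tr(\Sigma_{\text{target}}\Sigma_{S'})$, so that $\Sigma_{\text{best}}=\Sigma_{S_{\text{best}}}$ and $\Delta^\rho(S)=\frac{1}{\rho}\Tr\!\big(\Sigma_{\text{target}}(\Sigma_{\text{best}}-\Sigma_S)\big)+\text{noise}$. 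The whole argument then reduces to upper bounding $\Tr\!\big(\Sigma_{\text{target}}(\Sigma_{\text{best}}-\Sigma_S)\big)$, which I would split through $\Sigma_{\text{target}}=\bar{\Sigma}_{\text{target}}+(\Sigma_{\text{target}}-\bar{\Sigma}_{\text{target}})$ into a distribution-shift part and a selection/estimation part.

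For the distribution-shift part, the trace-norm/operator-norm duality (matrix H\"older) gives $\big|\Tr\!\big((\Sigma_{\text{target}}-\bar{\Sigma}_{\text{target}})(\Sigma_{\text{best}}-\Sigma_S)\big)\big|\le\|\bar{\Sigma}_{\text{target}}-\Sigma_{\text{target}}\|\,\|\Sigma_S-\Sigma_{\text{best}}\|_*$, which is exactly the middle term of the claimed bound; here $\|\bar{\Sigma}_{\text{target}}\|,\|\Sigma_{\text{target}}\|\le 1$ since both are covariances of unit latent vectors.

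For the selection/estimation part $\Tr\!\big(\bar{\Sigma}_{\text{target}}(\Sigma_{\text{best}}-\Sigma_S)\big)$, the idea is to pass from the unobservable true cross-covariances $\Sigma_S,\Sigma_{S_{\text{best}}}$ to the teacher plug-in surrogates that the selection rule actually uses, $\widehat\Sigma_{S'}:=\frac{1}{|S'|}\sum_{\vx^{vl}\in S'}\bar G_v^\top \vx^v (\vx^l)^\top \bar G_l$ in the vision+language case and $\widehat\Sigma_{S'}:=\frac{1}{|S'|}\sum_{\vx^{vl}\in S'}\bar G_v^\top \vx^v (\vx^v)^\top \bar G_v$ in the vision-only case. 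Since $S$ is the greedy maximizer of $\Tr(\bar{\Sigma}_{\text{target}}\widehat\Sigma_{S'})$ over size-$N$ subsets (this is exactly the total \twonormsim objective, up to the constant $|S|=N$), I would write
\[
\Tr\!\big(\bar{\Sigma}_{\text{target}}(\Sigma_{\text{best}}-\Sigma_S)\big)=\underbrace{\Tr\!\big(\bar{\Sigma}_{\text{target}}(\widehat\Sigma_{S_{\text{best}}}-\widehat\Sigma_S)\big)}_{\leq\, 0\text{ by optimality of }S}+\Tr\!\big(\bar{\Sigma}_{\text{target}}(\Sigma_{\text{best}}-\widehat\Sigma_{S_{\text{best}}})\big)+\Tr\!\big(\bar{\Sigma}_{\text{target}}(\widehat\Sigma_S-\Sigma_S)\big),
\]
and bound each of the last two traces by $\|\bar{\Sigma}_{\text{target}}\|$ times the nuclear-norm error of the corresponding surrogate. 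In the vision+language case that error is $\epsilon_{v*l}^{S'}$ by the teacher-error definition, producing the $\epsilon_{v*l}^S$ term. In the vision-only case $\widehat\Sigma_{S'}$ approximates $\frac{1}{|S'|}\sum \vz^v(\vz^v)^\top$ rather than $\Sigma_{S'}=\frac{1}{|S'|}\sum \vz^v(\vz^l)^\top$, so I would insert the extra triangle-plus-Cauchy--Schwarz step
\[
\Big\|\frac{1}{|S'|}\sum \vz^v(\vz^v)^\top-\Sigma_{S'}\Big\|_*\le\frac{1}{|S'|}\sum\|\vz^v\|\,\|\vz^v-\vz^l\|\le\sqrt{\frac{1}{|S'|}\sum\|\vz^v-\vz^l\|^2}=\sqrt{2\Big(1-\frac{1}{|S'|}\sum\langle \vz^v,\vz^l\rangle\Big)},
\]
using $\|\vz^v\|=\|\vz^l\|=1$; combined with the teacher error $\epsilon_v^{S'}$ this gives the $\epsilon_v^S+\sqrt{1-\frac{1}{|S|}\sum\langle \vz^v,\vz^l\rangle}$ term. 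Collecting the three pieces, restoring the factor $1/\rho$, and absorbing the Lemma's noise into ``noise'' completes the proof.

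The step I expect to be the main obstacle is making the ``optimality of $S$'' comparison airtight. Lemma~\ref{lem: main} compares against a minimum over \emph{all} subsets of $D_\text{train}$, whereas the greedy rule returns a fixed-size $S$; so one must argue that it suffices to take $S_{\text{best}}$ of the same size as $S$ (the teacher-error quantities $\epsilon_v,\epsilon_{v*l}$ are defined uniformly over $n$-budget subsets, which helps), and one must also absorb the $S_{\text{best}}$-side copies of $\epsilon_\bullet^{S_{\text{best}}}$ and of $\sqrt{1-\frac{1}{|S_{\text{best}}|}\sum\langle \vz^v,\vz^l\rangle}$ that the stated bound suppresses --- presumably justified because a hindsight-best subset is high quality, so its average image--text alignment is close to $1$ and the extra term is negligible. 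Everything else is the two routine inequalities (matrix H\"older and Cauchy--Schwarz) plus bookkeeping of constants.
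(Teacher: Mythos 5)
Your proposal is correct and follows essentially the same route as the paper: decompose $\Tr\bigl(\Sigma_{\text{target}}(\Sigma_{\text{best}}-\Sigma_S)\bigr)$ through the proxy $\bar{\Sigma}_{\text{target}}$ and the teacher plug-in covariance, kill the selection term by optimality of $S$, apply matrix H\"older to the distribution-shift term, invoke the teacher-error definitions for $\|\widehat\Sigma_S-\Sigma_S\|_*$, and use the Cauchy--Schwarz/trace computation to get the $\sqrt{1-\frac{1}{|S|}\sum\langle \vz^v,\vz^l\rangle}$ term in the vision-only case. The "main obstacle" you flag is real but is handled even less carefully in the paper itself, which simply asserts $\Tr\bigl(\bar{\Sigma}_{\text{target}}(\Sigma_{\text{best}}-\bar{\Sigma}_S)\bigr)\le 0$ "by the definition of $\bar{\Sigma}_S$" and silently drops the $S_{\text{best}}$-side teacher-error copy that your more explicit decomposition exposes.
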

\begin{proof}
    Based on Lemma~\ref{lem: main}, we will focus on the error cause from selecting subset $S$, that is, $\Tr \Sigma_{\text{target}}\Sigma_S$. Since the exact $\Sigma_{\text{target}}$ is unknown, we assume the access to some proxy $\bar{\Sigma}_{\text{target}}$ instead.

    Recall that, for any $S$, we have ground-truth $\Sigma_S = \E_{\vz_{vl} \in S} \vz^v (\vz^l)^\top$. Unfortunately, this is not directly observable by the learner. Instead, the learner is able to observe some proxy  $\bar{\Sigma}_S$ based on the teacher model $\bar{G}_{vl}$ and therefore solving
    \begin{align*}
        \argmax_{S} \Tr\left(\bar{\Sigma}_{\text{target}}\bar{\Sigma}_S\right)
    \end{align*}
    and therefore, denote $\Sigma_\text{best} = \argmax_{S' \in D_\text{train}} \Tr\left( \Sigma_{\text{target}} \Sigma_{S'} \right)$
    \begin{align*}
         \Tr \left( \Sigma_{\text{target}}(\Sigma_\text{best} - \Sigma_S) \right)
         & = \Tr \left( \bar{\Sigma}_{\text{target}}(\Sigma_\text{best} - \bar{\Sigma}_S) \right)
             + \Tr \left( \bar{\Sigma}_{\text{target}}(\bar{\Sigma}_S - \Sigma_S) \right)
             + \Tr \left( (\Sigma_{\text{target}} - \bar{\Sigma}_{\text{target}})(\Sigma_\text{best} - \Sigma_S) \right) \\
        & \leq \Tr \left( \bar{\Sigma}_{\text{target}}(\bar{\Sigma}_S - \Sigma_S) \right)
             + \Tr \left( (\Sigma_{\text{target}} - \bar{\Sigma}_{\text{target}})(\Sigma_\text{best} - \Sigma_S) \right) \\
        & \leq \| \Sigma_{\text{target}}\| \|\bar{\Sigma}_S - \Sigma_S\|_* 
            +  \|\bar{\Sigma}_{\text{target}} - \Sigma_{\text{target}} \| \|\Sigma_S - \Sigma_{\text{best}}\|_*
    \end{align*}
    where the first inequality is by the definition of $\bar{\Sigma}_S$ and the second inequality comes from holder's inequality.
    Now the key is to upper bound $\|\bar{\Sigma}_S - \Sigma_S\|_*$ based on our chosen strategy.

    In option 1, we use the clip embedding from both visual and language modal. That is, choose 
    $\bar{\Sigma}_S = \sum_{\vx_{vl} \in S} (\bar{G}_v)^\top \vx^v (\vx^l)^\top \Bar{G}_l$. Then we have 
    \begin{align*}
        \|\bar{\Sigma}_S - \Sigma_S\|_*
        \leq  \frac{1}{|S|}\| \sum_{\vx_{vl} \in S} (\bar{G}_v)^\top \vx^v (\vx^l)^\top \Bar{G}_l - \sum_{\vx_{vl} \in S} \vz^v (\vz^l)^\top \|_* 
        \leq \epsilon_{v * l}^S
    \end{align*}

    In option 2, we use the clip embedding from language model only. That is choose 
    $\bar{\Sigma}_S = \sum_{\vx_{vl} \in S} \bar{G}_v^\top \vx^v (\vx^v)^\top \Bar{G}_v$. Then, by definition of $\epsilon_S$, we have 
    \begin{align*}
        \|\bar{\Sigma}_S - \Sigma_S\|_*
        & \leq  \frac{1}{|S|}\| \sum_{\vx_{vl} \in S} \bar{G}_v^\top \vx^v (\vx^v)^\top \Bar{G}_v - \sum_{\vx_{vl} \in S} \vz^v (\vz^v)^\top \|_* + \frac{1}{|S|}\| \sum_{\vx_{vl} \in S} \vz^v (\vz^v)^\top -  \Sigma_S\|_* \\
        & \leq \epsilon_v^S +  \frac{1}{|S|}\| \sum_{\vx_{vl} \in S} \vz^v (\vz^v)^\top -  \Sigma_S\|_*
    \end{align*}
    Now to further bound the second term, we have
    \begin{align*}
        \frac{1}{|S|} \| \sum_{\vx_{vl} \in S} \vz^v (\vz^v)^\top -  \Sigma_S\|_* 
        & \leq \frac{1}{|S|} \|Z_v^\top \|_* \|Z_v - Z_l\|_*\\
        & = \frac{1}{|S|} \sqrt{\Tr Z_v Z_v^\top} \sqrt{ \Tr (Z_v - Z_l)^\top (Z_v - Z_l)} \\
        & = \frac{1}{|S|} \sqrt{\Tr(I_{n\times n})} \sqrt{2\Tr\left( I_{n\times n} - Z_v Z_l^\top\right)}\\
        & = \frac{1}{|S|} \sqrt{2|S|(|S| - \sum_{i \in [S]} \langle \vz^v,\vz^l \rangle)} \\
        & = \sqrt{1 - \frac{1}{|S|}\sum_{i \in [S]} \langle \vz^v,\vz^l \rangle)}
    \end{align*}
    Therefore, we finish the proof.
\end{proof}

\begin{theorem}[A simplified version of test loss]
\label{them: simplified test loss}
    Under the assumption that both $\vz_{vl}, \xi_{vl}$ is zero-mean, maximizing the clip score gap is equivalent to maximize the clip score of the same sample. 
     \begin{align*}
         \gL_{\text{target}} (G_v, G_l):= - \E_{\vx_{vl} \sim \gD_\text{target}}\langle G_v^\top\vx_v, G_l^\top \vx_l \rangle
     \end{align*}
\end{theorem}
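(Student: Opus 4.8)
The plan is to reduce the two-sample definition of $\gL_\text{target}$ to the single-sample expression by linearity of expectation, and then to kill the ``decorrelated'' cross term using the fact that the test inputs are centered. First I would write, for two i.i.d.\ draws $\vx^{vl}$ and $\vx_2^{vl}$ from $\gD_\text{target}$,
\[
\gL_\text{target}(G_v,G_l) = \E_{\vx^{vl},\,\vx_2^{vl}}\langle G_v^\top \vx^v,\, G_l^\top \vx_2^l\rangle \;-\; \E_{\vx^{vl}}\langle G_v^\top \vx^v,\, G_l^\top \vx^l\rangle ,
\]
so that it only remains to show the first expectation equals zero.

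For that first term I would rewrite the inner product as a trace, $\langle G_v^\top \vx^v,\, G_l^\top \vx_2^l\rangle = \Tr\big(G_l^\top \vx_2^l (\vx^v)^\top G_v\big)$, and use independence of the two draws to factor $\E[\vx_2^l (\vx^v)^\top] = \E[\vx_2^l]\,\E[\vx^v]^\top$. At this point I invoke the linear data-generation model of the theoretical setup: since $\vx^v = G_v^*\vz^v + \vxi^v$ with $\vz^v$ and $\vxi^v$ both zero-mean, we get $\E[\vx^v] = \vzero$, and symmetrically $\E[\vx_2^l]=\vzero$. Hence the cross term vanishes and we obtain $\gL_\text{target}(G_v,G_l) = -\,\E_{\vx^{vl}\sim\gD_\text{target}}\langle G_v^\top \vx^v,\, G_l^\top \vx^l\rangle$, which is exactly the claimed simplified form. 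Since this holds for \emph{any} $G_v,G_l$, the same substitution applies verbatim to the comparator $\min_{S'\in D_\text{train}}\gL_\text{target}(\gA^\rho(S'))$, so it can be used uniformly in the generalization analysis (e.g.\ in the proof of Lemma~\ref{lem: main}).

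The only subtlety — and it is not really an obstacle — is making sure the centering hypothesis legitimately transfers to $\gD_\text{target}$: the setup stipulates that each downstream task shares the same data-generating process as training except for its cross-covariance $\Sigma_\text{target}$, so the latent vectors and the noise remain zero-mean and the factorization of the expectation over the two independent samples is valid. Granting that, the whole argument is just linearity of expectation plus independence, with no heavy computation required.
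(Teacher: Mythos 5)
Your proposal is correct and follows essentially the same route as the paper: both arguments reduce to linearity of expectation plus the fact that the test inputs are zero-mean (inherited from the zero-mean latents and noise in the linear generative model), so the decorrelated cross term $\E\langle G_v^\top\vx^v, G_l^\top\vx_2^l\rangle$ vanishes. The paper conditions on $\vx^{vl}$ and integrates out $\vx_2^{vl}$ directly, whereas you factor $\E[\vx_2^l(\vx^v)^\top]$ via independence; these are interchangeable, and your version is merely a bit more explicit about where the centering comes from.
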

\begin{proof}
    For any $\vx_{vl}$, we have
    \begin{align*}
        & \E_{\vx_{vl}' \sim \gD_\text{target}} (\langle G_v^\top\vx_v, G_l^\top\vx_l'\rangle- \langle G_v^\top\vx_v, G_l^\top\vx_l\rangle) \\
        & = \langle G_v^\top\vx_v, G_l^\top \E_{\vx_{vl}' \sim \gD_\text{target}}(\vx_l'-\vx_l) \rangle \\
        & = - \langle G_v^\top\vx_v, G_l^\top \vx_l \rangle \\
    \end{align*}
\end{proof}

\newpage
\begin{figure}[h]
    \centering
    \includegraphics[width=\textwidth]{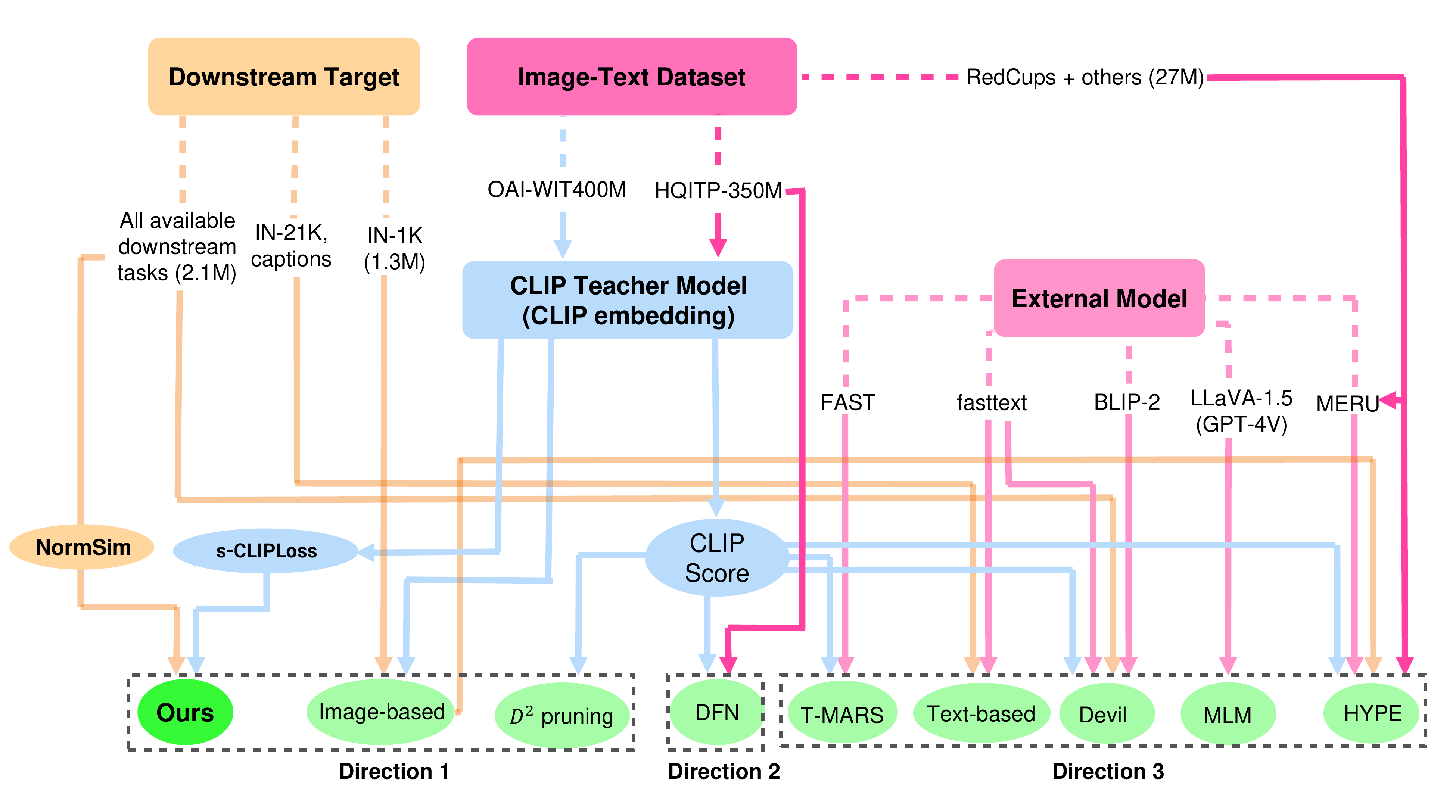}
    \caption{
    Illustration of different directions for data selection methods for multimodal contrastive learning. Here we use four colors to denote the four main resources we can obtain: CLIP teacher model, downstream target data (which is much smaller than the external multimodal dataset or pretraining dataset), the external image-text dataset, and the external non-CLIP model.
    \textbf{Direction 1} denotes the methods that only use the original OAI CLIP teacher model and the downstream target data. \textbf{Direction 2} represents the methods that use external datasets to train a new CLIP teacher model for improving filtering, like DFN~\cite{fang2023data}. \textbf{Direction 3} denotes the methods that use external non-CLIP model to select the data that may be heuristically helpful for downstream tasks, like image without too much text or be more special. In general, \textit{D1 method using only CLIP embedding, like s-CLIPLoss, is orthogonal to D2. And both D1 and D2 can be combined with D3 to explore better filtering results.}
    In the experiments part of the main paper (Sec.~\ref{sec: experiment}), we further show that our proposed D1 methods: {\normsim} and s-CLIPLoss, can outperform all the D3 baselines except the best method ``HYPE $\cup$ DFN''. And we can achieve the new state-of-the-art by combining our methods with that method.
    }
    \label{sup fig: ill d}
\end{figure}

\section{Illstration of Different Directions for Data Selection in Multimodal Contrastive Learning}
\label{supp: ill d}
We summarize our main idea of categorizing the current top data selection methods in Figure~\ref{sup fig: ill d}.

\section{Details of Experiments}
\label{appendix: experiment details}

\subsection{Computation Cost}
\label{supp: computation cost}

\begin{table*}[htb]
\centering
\small
\caption{Comparison of preprocessing time and external resources needed between our method and other D3 category methods. We skip DFN since it's orthogonal to our s-CLIPLoss method and we can directly improve it as mentioned in Table~\ref{tab: cliploss unversality}. 
Here since all the baselines below except MLM use a pretrained CLIP model, 
we only count the time that doesn't contain that for inferring CLIP image/text embeddings (about 50 L40 hours for OAI CLIP-B/32), which is also adopted in DataComp benchmark~\citep{gadre2023datacomp}. The external dataset corresponds to the external multimodal dataset used for training or finetuning the external model. Notably, the preprocessing time for the following methods are all approximately linearly proportional to the amount of unfiltered pretrained dataset.
}
\label{fig: compute cost}
\label{tab:preprocess}
\begin{tabular}{@{}lcccccc@{}}
\toprule
\multirow{2}{*}{\textbf{Type}} & \textbf{Filtering} & \textbf{Ext. Model} & \textbf{Size of } & \textbf{Preprocess} & \textbf{Training} & \multirow{2}{*}{\textbf{Avg.}} \\
 & \textbf{Strategy} & \textbf{Used} & \textbf{Ext. Dataset} & \textbf{Time} & \textbf{Time} & \\
\toprule
D1 & {$\mathbb{D}^2$ Pruning}~\cite{maharana2023d2} & \textbf{NA} & {\textbf{NA}} & {>70 L40 h} & {65 L40 h} & {29.5} \\
D3 & {T-MARS}~\cite{maini2023t} & FAST~\cite{chen2021fast} & {\textbf{NA}} & {950 L40 h} & {65 L40 h} & {34.1} \\
D3 & {MLM}~\cite{wang2024finetuned} & LLaVA-1.5~\cite{liu2023improved,chiang2023vicuna} & {50k} & {1120 A100 h} & {65 L40 h} & {34.5} \\
D3 & {Devil}~\cite{yu2023devil} & fasttext~\cite{joulin2016bag}, BLIP-2~\cite{li2023blip}  & {\textbf{NA}} & {510 A100 h} & 65 L40 h & {34.5} \\
D3 & HYPE~\cite{kim2024hype} & MERU~\cite{desai2023hyperbolic} & 27M & > 120 L40 h & 65 L40 h & 31.9\\
\toprule
D1 & \textbf{Ours (20\%)} & \textbf{NA} & \textbf{NA} & \textbf{5 L40 h} & 65 L40 h & \textbf{35.2} \\
\bottomrule
\end{tabular}
\vspace{-1em}
\end{table*}


Our algorithm can significantly reduce the computational cost compared to many existing works as shown in Table~\ref{fig: compute cost}. For example, when the CLIP embeddings are obtained (cost about 50 hours for CLIP-B/32), both {T-MARS} \cite{maini2023t} and {MLM} \cite{wang2024finetuned} still require more than 900 hours data pre-processing time to extract the required information from 110M size dataset of DataComp-medium, while we only need about 5 hours.  On the other hand, DFN, although has a similar forward speed (i.e. preprocessing time), requires retraining a new CLIP teacher model on the HQITP-350M, which is larger than DataComp-medium. 

We give some details in estimating the preprocessing time of other methods:
\begin{itemize}
    \item For \textbf{T-MARS} and $\mathbb{D}^2$ pruning, we run their official code on DataComp-small (11M) data, and simply scale the preprocessing time by 10 for DataComp-medium, given that the preprocessing time for T-MARS is proportional to the size of the pretraining dataset, while $\mathbb{D}^2$ pruning is no faster than linear.
    \item For \textbf{MLM}, we get the estimated time from their paper. They mention that they need 6.1 minutes to process 10k samples on A100, which results in 1120 A100 hours for our dataset (110M). We need to mention that their estimation time of calculating CLIP embedding is inaccurate and we can do it much faster than their claim using the DataComp pipeline.
    \item For \textbf{Devil}, it needs to run the k-means clustering algorithm from the faiss library on the embedding space, which is estimated to cost 120 L40 hours on DataComp-medium. Using BLIP-2~\cite{li2023blip} to scan the whole dataset will need about 470 A100 hours from the experimental details in \cite{nguyen2023improving}. From https://lambdalabs.com/gpu-benchmarks, we roughly assume that 120 L40 hours are at least comparable to 40 A100 hours for K-means clustering. 
    \item  For \textbf{HYPE}, they claim that MERU is as efficient as CLIP, but they still need at least 120 L40 hours for processing 110M data for their final score, since it uses the image embedding clusters on DataComp-medium obtained from running k-means clustering algorithm.
\end{itemize}

\begin{table*}[htb]
\small
\centering
\caption{
Ablation study about the temperature parameters $\tau$ and batch size $b$ for CLIP teacher model. The values obtained from the last training step of the teacher models are $\tau=0.01, b = 32768$ for OAI CLIP-B/32, OAI CLIP-L/14, and $b = 16384,\tau=0.07$ for DFN-P. In the main paper, we use $b = 32768,\tau=0.01$ for all three kinds of teacher models.
}
\label{supp tab: negcliploss ablation}
\begin{tabularx}{1.01\textwidth}{@{}lcccccc}
\toprule
\textbf{{OAI CLIP-B/32}} & \textbf{Size} & \textbf{IN-1k} & \textbf{IN Dist. Shift} & \textbf{VTAB} & \textbf{Retr.} & \textbf{Avg.} \\
\midrule
\textbf{CLIPScore (30\%)}~\cite{hessel2021clipscore} & 33M & 27.6 & 24.2 & 33.6 & 25.1 & 33.2 \\
\midrule
\textbf{s-CLIPLoss (30\%)} \\
\midrule
$b=16384,\tau=0.01$ & 33M & \textbf{28.8} & 25.0 & 32.5 & 26.2 & 33.0 \\
$b=16384,\tau=0.02$ & 33M & 28.6 & 24.8 & 33.3 & 25.3 & 33.1 \\
$b=16384,\tau=0.07$ & 33M & 28.0 & 24.2 & 33.5 & 25.1 & 32.6 \\
$b=32768,\tau=0.001$ & 33M & 16.0 & 13.9 & 25.1 & 19.4 & 24.4 \\
$b=32768,\tau=0.005$ & 33M & \underline{28.5} & \underline{25.0} & \underline{33.6} & \textbf{27.0} & \underline{33.0} \\
\textbf{$b=32768,\tau=0.01$} & 33M & \textbf{28.8} & \textbf{25.1} & \textbf{33.7} & \underline{26.6} & \textbf{33.6}\\
$b=32768,\tau=0.02$ & 33M & \underline{28.5} & 24.8& \underline{33.6} &  26.2& 32.9\\
$b=32768,\tau=0.07$ & 33M & 28.2 & 24.5 & 32.8 & 25.2 & 32.7\\
\midrule
\textbf{s-CLIPLoss (30\%) $\cap$ \infnormsim(Target) } \\
\midrule
$b=16384,\tau=0.01$ & 22M & \textbf{32.4} & \textbf{27.4} & 34.5 & 26.1 & 34.7\\
$b=16384,\tau=0.02$ & 22M & 31.8 & 26.7 & 35.0 & 24.9 & 34.2\\
$b=16384,\tau=0.07$ & 22M & 31.0 & 26.3 & 35.0 & 25.5 & 33.9\\
$b=32768,\tau=0.005$ & 22M & 32.2 & 27.2 & 35.3 & \textbf{26.5} & 34.8\\
$b=32768,\tau=0.01$ & 22M & \textbf{32.4} & \textbf{27.4} & \textbf{35.9} & {26.3} & \textbf{35.2}\\
\bottomrule
\toprule
\textbf{DFN-P} & \textbf{Size} & \textbf{IN-1k} & \textbf{IN Dist. Shift} & \textbf{VTAB} & \textbf{Retr.} & \textbf{Avg.} \\
\midrule
\textbf{s-CLIPLoss} \\
\midrule
15\%, $b=16384,\tau=0.07$ & 16M & 31.0 & 27.0 & 35.2 & 26.8 & 34.2 \\ 
15\%, $b=32768,\tau=0.01$ & 16M & {\textbf{31.3}} & \underline{27.3} & \textbf{35.8}  & 26.4 & \underline{34.6} \\
17.5\%, $b=16384,\tau=0.07$ & 19M & \textbf{31.3} & 27.2 & 33.5 & \textbf{27.6} & 33.5 \\
17.5\%, $b=32768,\tau=0.01$ & 19M & {31.2} & \textbf{27.5} & \underline{35.7} & \underline{27.0} & \textbf{34.7} \\
\midrule
\textbf{s-CLIPLoss (17.5\%) $\cap$ $\text{NormSim}_{\infty}^{\text{B/32}}$(Target)}\\
\midrule
$b=16384,\tau=0.07$ & 16M & 31.1 & \textbf{27.4} & 34.8 & \textbf{26.1} & 34.2 \\
$b=32768,\tau=0.01$ & 16M & \textbf{31.6} & {27.3} & \textbf{37.2} & 25.5& \textbf{35.7}\\
\bottomrule
\end{tabularx}
\end{table*}


\subsection{Details of s-CLIPLoss}
\label{supp: negcliploss detail}

We give the pseudocode of calculating s-CLIPLoss in Algorithm~\ref{algo: negCLIPLoss}, which is specially designed for pytorch-style parallel matrix calculation. It can be fully accelerated and the computation cost introduced by the normalization term is negligible compared with the training time or preprocessing time of other top baselines as detailed in Table~\ref{supp: computation cost}.

In s-CLIPLoss, we need to get the batch size $|B|$ and the value of the learnable temperature parameter $\tau$ at the final step of the teacher model pretraining stage. For OAI CLIP-L/14 and OAI CLIP-B/32, these values are $\tau=0.01$ and  $|B|=32768$. 

We also have an ablation study about the temperature parameter and batch size chosen for CLIP teacher models as shown in Table~\ref{supp tab: negcliploss ablation}. We will see that in general, a larger batch size will result in better performance, and $\tau=0.01, b=32768$ is the best choice for both OAI CLIP-B/32 and DFN-P. 
The reason for such a batch size is that a
larger batch can contain more contrastive data pairs, which is also supported by the concentration result of the normalization term proved in Appendix~\ref{app: concen}, and thus it can check the image-text matching between more different data. Therefore, we always consider the largest batch size 32768 which can fit into a single 24G GPU in the CLIP forward pass, which is also the OAI CLIP training batch size.

\begin{algorithm}
\caption{s-CLIPLoss}
\label{algo: negCLIPLoss}
\begin{algorithmic}
    \STATE {\bfseries Inputs:} image/text embeddings of the pretraining data $F^{vl} = [\{\bar f_{vl}(x_1^{vl})\}, \ldots, \{\bar f_{vl}(x_N^{vl})\}]^\top \in \R^{N\times d}$, batch size $b$, temperature parameter $\tau$, the number of times s-CLIPLoss is random $K (=10)$.
    \STATE Initialize s-CLIPLoss array $\vr = [0,\ldots, 0] \in \R^N$
    \FOR{$k=1$ {\bfseries to} $K$}
    \STATE Get a random batch division $S_k = \{B_1,\ldots, B_s\}$ such that $s = \lceil N / b \rceil$. Every $B_i \in S_k$ is the index of a batch of data.
    \FOR{$j=1$ {\bfseries to} $s$}
    \STATE Get batch of embeddings in batch $j$: $F^{vl}_j = F^{vl}[B_j] \in \R^{b \times d}$
    \STATE Get the similarity matrix: $E_j = F^{v}_j( F^{l}_j)^\top \in \R^{b \times b}$
    \STATE Get the CLIPScores: $\vc_j = \text{diag}(E_j) \in \R^b$
    \STATE Define $G_j = \exp(E_j / \tau)$
    \STATE Define $\vg^v_j \in \R^b$ be the vector containing the sum of each row vector in $G_j$ (i.e., over image).
    \STATE Define $\vg^l_j \in \R^b$ be the vector containing the sum of each column vector in $G_j$ (i.e., over text).
    \STATE Get the s-CLIPLoss: $\vr[B_j] = \vc_j - 0.5 \tau \cdot (\log(\vg^v_j) + \log(\vg^v_j))$, here we use element-wise operation.
    \ENDFOR
    \ENDFOR
    \STATE Take the mean of each random division as output: $\text{s-CLIPLoss} = \vr / K$
\end{algorithmic}
\end{algorithm}


\subsection{Details of {\twonormsimD}}
\label{supp: two norm sim D detail}
In this section, we illustrate the details of our {\twonormsimD} algorithm.
The top-$N$ selection method is aiming to achieve the object:
\begin{equation}
\small
S = \arg \max_{|S| = N} \sum_{i \in S}\bar f_v(x_i^v)^\top \left(\frac{1}{|X_\text{target} |}\sum_{x_t \in X_\text{target}}\bar f_v(x_t^v)\bar f_v(x_{t} ^v)^\top \right)\bar f_v(x_i^v)
\end{equation}
when the actual $X_\text{target}$ is unknown. In practice, removing one data at a time is too slow. Therefore, we remove a batch of data for every step. In detail, 
if the number of steps is $\tau$, and let $\bar\Sigma_{\text{test},i} = \frac{1}{|S_i|}\sum_{j \in S_i}\bar f_v(\vx_j^v)\bar f_v(\vx_j^v)^\top$ where $S_i$ is the selected subset at step $i$, then we will remove the data satisfies the following equation step-by-step until reaching the final subset size:
\begin{align*}
     S_{i} \setminus S_{i+1} = \arg \min_{x_l \in S_{i}} \left[\bar f_v(x_l^v)^T \cdot\left(\frac{1}{|S_i |}\sum_{x_t \in S_i}\bar f_v(x_t^v)\bar f_v(x_{t} ^v)^\top \right)\cdot\bar f_v(x_l^v)\right], 
    \quad i \in \{0, \ldots, \tau-1\}   
\end{align*}
Then we can detail the algorithm process of {\twonormsimD} in Algorithm~\ref{algo: \normsim-D}. In general, the smaller the step size, the better the results. But in experiments, we find that it's already enough to get good results when $\tau = 500$.

\begin{algorithm}
\caption{\normsim-D strategy}
\label{algo: \normsim-D}
\begin{algorithmic}
    \STATE {\bfseries Inputs:} image embeddings of the data after CLIP score filtering $\{\bar f_v(x_i^v)\}_{i \in S}$, target size $N$, number of steps $\tau$
    \STATE Initialize $S_0 = S, N_0 = |S|$
    \FOR{$t=1$ {\bfseries to} $\tau$}
    \STATE Size at step $t$ : $N_t = N_0 - \frac{t}{\tau}(N_0 - N)$.
    \STATE Prior matrix: $\bar\Sigma_{\text{test}, t-1} = \sum_{j \in S_{t-1}}\bar f_v(x_j^v)\bar f_v(x_j^v)^\top$
    \STATE Updated {\twonormsimD} for each sample $i$ in $S_{t-1}$: 
    \begin{equation*}
        \text{\twonormsimD}(x_i) = \bar f_v(x_i^v)^\top\cdot \bar\Sigma_{\text{test}, t-1}\cdot \bar f_v(x_i^v)
    \end{equation*}
    \STATE Construct $S_t$ such that it contains the data with highest {\twonormsimD} in $S_{t-1}$ and satisfies $|S_t| = N_t$. 
    \ENDFOR
\end{algorithmic}
\end{algorithm}


\subsection{Details of Related Works}
\label{sub: baselines}
We add some details about the baselines used in our paper as follows.
\begin{itemize}[leftmargin=*]
    \item \textbf{Text-based filtering.}  \cite{gadre2023datacomp} proposes a text-based filtering that tries to select the data that contains caption overlapping with the class name from ImageNet-21K or ImageNet-1K. 
    \item \textbf{Image-based filtering.}  \cite{gadre2023datacomp} also proposes a heuristic way to sample the visual content overlaps with ImageNet-1K classes. They first apply filtering by language (only choose English caption by fasttext~\cite{joulin2016bag}) and caption length (over two words and 5 characters). Then they cluster the image embeddings from training data to 100K groups using Faiss~\cite{johnson2019billion}, and keep the groups whose cluster center is the nearest neighbor to at least one image embedding of ImageNet-1K image. 
    \item \textbf{$\mathbb{D}^2$ Pruning.}  \cite{maharana2023d2} tries to represent the dataset as an undirected graph for coreset selection. They assign the difficulty for each example and use message passing to update the difficulty score incorporating the difficulty of its neighboring examples, and finally try to keep both diverse and difficult subsets. For our experiments, we adhere to the default hyperparameters of $\mathbb{D}^2$ on DataComp as specified in their official codebase. 
    \item \textbf{T-MARS}~\cite{maini2023t} uses a text detection model like FAST~\cite{chen2021fast} to filter out the data that only contain the texts of caption in the image and don't have other useful image features. 
    \item \textbf{Devils}~\cite{yu2023devil} combines many ways for data filtering. At the very first it filter data based on heuristic rules like text length, frequency of texts, and image size, and it also use CLIPScore for cross-modality matchment. Then it adopts target distribution alignment methods similar to image-based filtering, but instead of using ImageNet-1k only, it uses 22 downstream tasks as the target set. Further, it adopts external models fasttext~\cite{joulin2016bag} to remove non-English captions and image-captioning model BLIP-2~\cite{nguyen2024improving} to select images with MNIST-style digits.
    \item \textbf{MLM}~\cite{wang2024finetuned} prompts GPT-4V to construct instruction data including the image-text data, and use it to fine-tune a smaller vision-language model like LLaVA-1.5~\cite{liu2023improved, chiang2023vicuna} into a filtering network. Nevertheless, the number of parameters of LLaVA-1.5 is still much larger than CLIP, and thus LLaVA-1.5 has a much longer preprocessing time as mentioned in Table~\ref{supp: computation cost}.
\end{itemize}

\subsection{How to Choose Hyperparameters}
\label{sub: hyperp}


The main hyper-parameters of our s-CLIPLoss and {\normsim} are the target numbers for filtering (refer to Appendix~\ref{supp: negcliploss detail} for the setting of temperature and batch size), which is also the main concerns for all the top baselines like DFN, MLM, and T-MARS. 
In the case of DataComp settings, noting that all the top baselines in DataComp-medium benchmark keep the downsampling ratios ranging from 15\%~30\% to achieve the best results, we can set the sampling ratio as some previous baselines.  Our method with OAI CLIP teacher model first selects the data with the top 30\% s-CLIPLoss, and then selects the top 66.7\% NormSim scores to keep 20\% of the original pool. We don’t tune the target size carefully here for fair comparison.

In more general cases, we can recommend some \textbf{training-dataset-independent} thresholds for {\normsim}, since the scores only depends on the norm $p$ and target data rather than other data in the pool. We recommend to set the threshold as {0.7} for {{\infnormsim}(Target)} and {0.15} for {{\twonormsim}(IN-1k)} in general. On the other hand for s-CLIPLoss, note that like {\normsim}, CLIPScore is also training-dataset-independent, we recommend to first find the percentile of the data with CLIPScore=0.21, and then downsample the dataset using s-CLIPLoss until that particular percentile.



Overall, finding optimal filtering ratio for data selection algorithm is always difficult and out of the scope of this paper.
From the paper about the scaling law for data filtering~\cite{goyal2024scaling}, downsampling size even depends on the computation budget. When you have more budget, you should sample more data for learning. And thus another possible solution is to use their fitting formula to get some recommended downsampling ratios.

At last, we also note that \textit{in data selection problem, visualization is a simple but effective way for tuning parameters or finding downsampling ratios}. People can first randomly select a small subset (like 1000 data) on some pretraining data subset, and then calculate the target scores (CLIPScore, s-CLIPLoss, NormSim or any other metrics) on them, and finally visualize the data corresponding to scores at different percentiles, like bottom 10\%, 30\%, 50\% and 70\% of the s-CLIPLoss. In this way, we can determine the threshold of filtering directly by observating the data. We also give some visualization examples of our methods in Appendix~\ref{supp: add_vis},  
We believe this is an effective way to give some guidance on how to roughly select the initial downsampling ratios.

\subsection{Discussion of NormSim}
\subsubsection{How {\twonormsim} Connects to Selecting the Data in Principal Components.} \label{app: disscuss normsim}
For convenience, we let $f(x_t)$ denote the image embedding of the target data $x_t \in X_T$, and $f(x_s)$ denotes the image embeddings of training data $x_s \in X_S$. Then the definition of NormSim on a data $x_s$ is 
\begin{equation}
\text{NormSim}_p(X_{T}, x_s) = \left(\sum_{x_t \in X_T} [f(x_t)^\top f(x_s)]^p\right)^{1/p}
\end{equation}
Then when $p=2$, we have 
\begin{align}
\text{NormSim}_2(X_{T}, x_s) 
&=& \left(\sum_{x_t \in X_T} [f(x_s)^\top f(x_t)]\cdot [f(x_t)^\top f(x_s)] \right)^{1/2}\\
&=& \left(f(x_s)^\top \cdot\sum_{x_t \in X_T} [f(x_t) f(x_t)^\top ]\cdot f(x_s) \right)^{1/2}\\
&\propto& \left[ f(x_s)^\top \left(\frac{1}{|X_{T} |}\sum_{x_t \in X_{T}} f(x_t) f(x_{t})^\top \right) f(x_s)\right]^{1/2}
\end{align}
Note that $\Lambda=\frac{1}{|X_T|}\sum_{x_t \in X_T} f(x_t) f(x_t)^\top$ is the variance matrix of the target image embeddings. Then using $\text{NormSim}_2$ for filtering, we have
\begin{align}
S 
&= \arg \max_{|S|=N}\sum_{x_s \in X_S} \text{NormSim}_2(X_{T}, x_s) \\
\text{NormSim}_2(X_{T}, x_s) &= f(x_s)^\top \cdot \Lambda \cdot f(x_s) \\
&=  f(x_s)^\top  U \cdot S \cdot U^\top  f(x_s)\\
&= \sum_{j=1}^r s_j \cdot [f(x_s)^\top u_j]^2\label{eq: r2}
\end{align}
Here $\Lambda=USU^\top$ is the eigen decoposition of $\Lambda$, where $S = \text{diag}(s_1,\ldots,s_r)$ with $s_1>\ldots > s_r$ are the matrix of eigenvalues, and $U=[u_1,\ldots,u_r] \in \R^{d\times r}$ are the corresponding eigenvectors (i.e., the principal component directions). 
Note that the column vectors of $U$ and $f(x_s)$ are all unit vectors, (\ref{eq: r2}) shows that $\text{NormSim}_2$ select the data that match with the principal components, i.e., eigen directions $u_j$ with large eigen values $s_j$.


\subsubsection{Why NormSim works well without explictly considering data diversity.}
We answer this question by the following reasons:
\begin{itemize}
    \item Many top baselines, such as DFN and T-MARS, also don't explicitly consider diversity, yet they still provide good performance. Devil even shows that valuable data is worth sampling multiple times, which they call ``quality duplication''. Therefore, one important reason why NormSim works well without explicitly considering diversity may be that when the computing budget is limited, as in the DataComp benchmark, the model first needs to learn the most useful and representative data, which should be similar to some target data.
    \item Moreover, we chose validation data from 24 downstream tasks ranging from ImageNet to EuroSet, which may have covered a sufficiently diverse range of target examples for NormSim to calculate similarity. The diversity of the target data will consequently result in the diversity of the selected subset. And this also implies the importance of selecting a good target dataset.
    \item An additional reason may be that our proposed s-CLIPLoss already implicitly selects more diverse data, as shown in Figure~\ref{fig:intro_CLIPLoss} of the main paper. If some training data are diverse, they will match less with other data and thus have a lower normalization term. This results in a larger s-CLIPLoss and a higher probability of being sampled.
\end{itemize}

\section{Additional Results}
\label{supp: add results}

\subsection{Stability Analysis of Batch Sampling Numbers in s-CLIPLoss}
\label{supp: stability of K}
We show that s-CLIPLoss is not sensitive to the number of random select batches $K$ in Figure~\ref{sup fig:comparison K}.

\begin{figure}[ht]
    \centering
    \small
    \includegraphics[width=\textwidth]{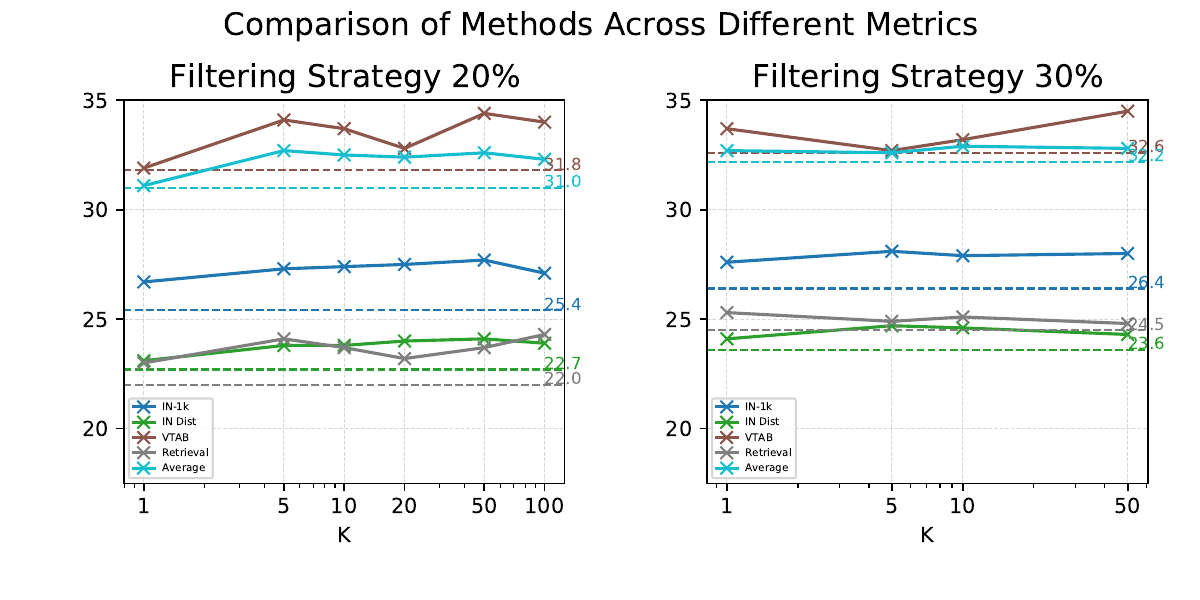}
    \vspace{-1em}
    \caption{Results of s-CLIPLoss with a different number of batch samples (denoted as $K$) on DataComp-medium. Solid lines denote s-CLIPLoss, while dashed lines denote CLIPScore. Here, we use OAI CLIP-L/14 as the pretrained model. We can see that once $K \geq 5$, s-CLIPLoss consistently outperforms CLIPScore across all subtask metrics. In the main paper, we set $K=10$.
    }
    \label{sup fig:comparison K}
\end{figure}

\subsection{Universality of s-CLIPLoss over Different Teacher Models}
\label{supp: universality }
We show the complete results of applying our methods to different teacher models like OAI CLIP-B/32 and DFN-P in Table~\ref{supp tab: cliploss unversality}. Detail descriptions are in Sec.~\ref{sec: experiment}.
\begin{table*}[ht]
\small
\centering
\caption{Results on DataComp-medium from the top methods that use only OpenAI's CLIP-B/32 model or public version of DFN (DFN-P). 
}
\label{supp tab: cliploss unversality}
\label{supp tab:OAI_combined_result}
\begin{tabularx}{1.0\textwidth}{@{}l@{\hskip 5pt}c@{\hskip 5pt}c@{\hskip 5pt}c@{\hskip 5pt}c@{\hskip 5pt}c@{\hskip 5pt}l@{}}
\toprule
\multirow{2}{*}{
\textbf{\textbf{OAI CLIP-B/32}}} & \textbf{Dataset} & \textbf{IN-1k} & \textbf{IN Dist. Shift} & \textbf{VTAB} & \textbf{Retrieval} & \textbf{Avg.} \\
 & \textbf{Size} & (1 sub-task) & (5) & (11) & (3) & (38)\\
\midrule
CLIPScore (20\%) & 22M & 27.0 & 23.8 & 33.0 & 22.9 & 32.2 \\
CLIPScore (30\%) & 33M & 27.6 & 24.2 & 33.6 & 25.1 & 33.2 \\
\midrule
s-CLIPLoss (20\%) & 22M & 28.9 & 24.8 & 34.3 & 24.3 & 33.0\\
s-CLIPLoss (30\%) & 33M & 28.8 & 25.1 & 33.7 & 26.6 & 33.6\\
\midrule
s-CLIPLoss (30\%) $\cap$ \infnormsim(Target) & 22M & \textbf{32.4} & \textbf{27.4} & \textbf{35.9} & \textbf{26.3} & \textbf{35.2}\\
\bottomrule
\toprule
\textbf{DFN-P} & & & & & & \\
\midrule
CLIPScore (15\%) & 16M & 25.9& 23.3& 32.9& 21.9 & 31.6 \\
CLIPScore (17.5\%) & 19M & 30.2 & 26.8 & 34.1 & 26.5 & 33.8 \\
CLIPScore (20\%) & 22M & 29.7 & 26.8 & 33.0 & 27.0 & 33.1 \\
CLIPScore (30\%) & 33M & 28.4 & 24.7 & 33.2 & 26.8 & 32.7\\
\midrule
s-CLIPLoss (15\%) & 16M & {31.3} & {27.3} & \underline{35.8}  & 26.4 & 34.6 \\
s-CLIPLoss (17.5\%) & 19M & {31.2} & \textbf{27.5} & {35.7} & 27.0 & \textbf{34.7}\\
s-CLIPLoss (20\%) & 22M & 30.7 & \underline{27.4} & 33.6 & \textbf{27.5} & 33.8\\
s-CLIPLoss (30\%) & 33M & 28.9 & 25.5 & 33.4 & {27.3} & 33.2\\
\midrule
s-CLIPLoss (30\%) $\cap$ \infnormsim(Target) & 22M & 29.4 & 23.6 & 33.5 & 24.2 & 32.5 \\
s-CLIPLoss (17.5\%) $\cap$ \infnormsim(Target) & 16M & \underline{31.5} & 26.4& 34.6& 25.4& 34.4\\
s-CLIPLoss (17.5\%) $\cap$ $\text{NormSim}_{\infty}^{\text{B/32}}$(Target) & 16M & \textbf{31.6} & {27.3} & \textbf{37.2} & 25.5& \textbf{35.7}\\
\bottomrule
\end{tabularx}
\end{table*}

\subsection{{\infnormsim} is Better than Nearest Neighbor Selection}
\label{supp sec: inf norm better than nn}
We also try to use near-neighbor selection for aligning downstream distribution.
Here, we calculate the ranks of pretraining data for each target (the higher the rank, the higher the similarity), and then for each pre-train data, we keep its highest rank. Finally, we select the data with the highest ranks as the nearest neighbor selected subset. 

In Table~\ref{sup tab:sim_nn_comparison}, we show that given the training data of 22 downstream tasks, our {\infnormsim} can outperform near neighbor selection under the same downsampling ratio. The reason may be that the distribution between the target and pretraining set is not well aligned, so if you force the algorithm to find the nearest train data for each target, that train data may be sometimes random and not helpful. On the other hand, {\infnormsim} will not select this kind of data. It will select the data whose best similarity score exceeds some general threshold, rather than just consider ranks.
\begin{table}[htb]
\small
\caption{Comparison between {\infnormsim} and nearest neighbor selection. We use OAI CLIP-L/14 as the teacher model and assume both methods have been intersected with s-CLIPLoss (30\%). The size of the selected subset is 22M.}
\label{sup tab:sim_nn_comparison}
\begin{center}
\begin{tabular}{lccc}
\toprule
\textbf{Filtering Strategy} & \textbf{IN-1k}& \textbf{VTAB} & \textbf{Avg.}\\
\midrule
s-CLIPLoss (30\%) & 27.9 & 33.2 & 32.9\\
\midrule
Nearest Neibor Selection & 31.5 & 34.9 & 34.0 \\
\infnormsim(Target) & \textbf{31.7} & \textbf{36.0} & \textbf{35.0} \\
\bottomrule
\end{tabular}
\end{center}
\end{table}

\subsection{Vision-Only {\normsim} is Better than Using Both Vision and Language}
\label{supp: vision only better}
In DataComp~\cite{gadre2023datacomp}, they show that image-based filtering is better than text-based filtering. In our paper, we also do an ablation study to support this. Due to the restriction of computation resources, we run our {\twonormsim}(IN-1k) and {\twonormsimD} on DataComp-small as an example. 
Since ImageNet-1k only has labels rather than long texts for describing images, we need to generate the caption before calculating {\twonormsim}(IN-1k). We select 80 templates as the original CLIP paper~\cite{radford2021learning}, generate prompts for each class, and take the mean of their embeddings as the representative text embedding for images within that class. 

The results are in Table~\ref{sup tab:image text}. We can see that for both metrics, we have \textbf{``image only'' > ``image $\times$ text'' > ``text only''}. We believe the reason for {\twonormsim}(IN-1k) is that the images themselves can convey significantly more features than the text prompts generated by labels. For {\twonormsimD}, it should be related to the large amounts of low-quality captions in the web-curated dataset. And ``image $\times$ text'' will also be influenced by the informativeness and the quality of captions. In short, for {\normsim}, using vision-only embeddings is a best choice.

\begin{table*}[ht]
\small
\centering
\caption{Ablation Study on the {\normsim} and its variants on DataComp-small (11M). All experiments first select 45\% data based on the CLIP score, then use corresponding approaches to obtain 3.3M data.``image'' or ``text'' means using the variance of image or text embeddings to represent  $\bar\Sigma_{\text{target}}$, and ``image $\times$ text'' means representing $\bar\Sigma_{\text{target}}$ with the cross-covariance of image and text embeddings.}
\vspace{0.3em}
\label{sup tab:image text}
\begin{tabular}{@{}lccccc@{}}
\toprule
\textbf{Filtering Strategy} $\cap$ CLIP score (45\%)  &  \textbf{IN-1k} & \textbf{IN Dist. Shift} & \textbf{VTAB} & \textbf{Retrieval} & \textbf{Average} \\ \midrule
Random Sampling & 4.2& 4.9& 17.2& 11.6& 15.6\\ \midrule
\textbf{\normsim} (IN-1k, image)  & \textbf{5.2} & \textbf{5.5} & \underline{19.0} & \textbf{12.2} & \textbf{17.4} \\
\textbf{\normsim} (IN-1k, text) & 3.9& 4.2& 16.3& 11.3& 14.9\\
\textbf{\normsim} (IN-1k, image $\times$ text) & 4.3& 4.9& 17.5& \underline{11.8}& 15.9\\
\midrule
\textbf{\normsim-D} (image) & \underline{4.7} & \underline{5.4} & \textbf{19.7} & {11.7} & \underline{17.3} \\
\textbf{\normsim-D} (text) & 3.5 & 4.1& 16.7& 11.1& 15.4\\
\textbf{\normsim-D} (image $\times$ text) & 3.6& 4.2& 18.4& 11.1& 15.8\\
\bottomrule
\end{tabular}
\end{table*}


\section{Additional Visualization}
\label{supp: add_vis}
We further visualize\footnote{We use \href{https://github.com/ypwang61/research_tools/blob/main/visualization2.py}{\tt \small https://github.com/ypwang61/research\_tools/blob/main/visualization2.py} (ImageCaptionVisualizer) for visualizing the dataset. We also recommend visualizing basic dataset statistics by \href{https://lst627.github.io/visdatacomp.github.io/}{ \tt\small https://lst627.github.io/visdatacomp.github.io/}.}  more data with different s-CLIPLoss in Figure~\ref{sub fig: vis ngl 0.0}, \ref{sub fig: vis ngl 0.5} and \ref{sub fig: vis ngl 0.9}. And similar for {\infnormsim}(Target) in Figure~\ref{sub fig: vis vas 0.0}, \ref{sub fig: vis vas 0.5} and \ref{sub fig: vis vas 0.9}.

\begin{figure}[p]
    \centering
    \includegraphics[width=\textwidth]{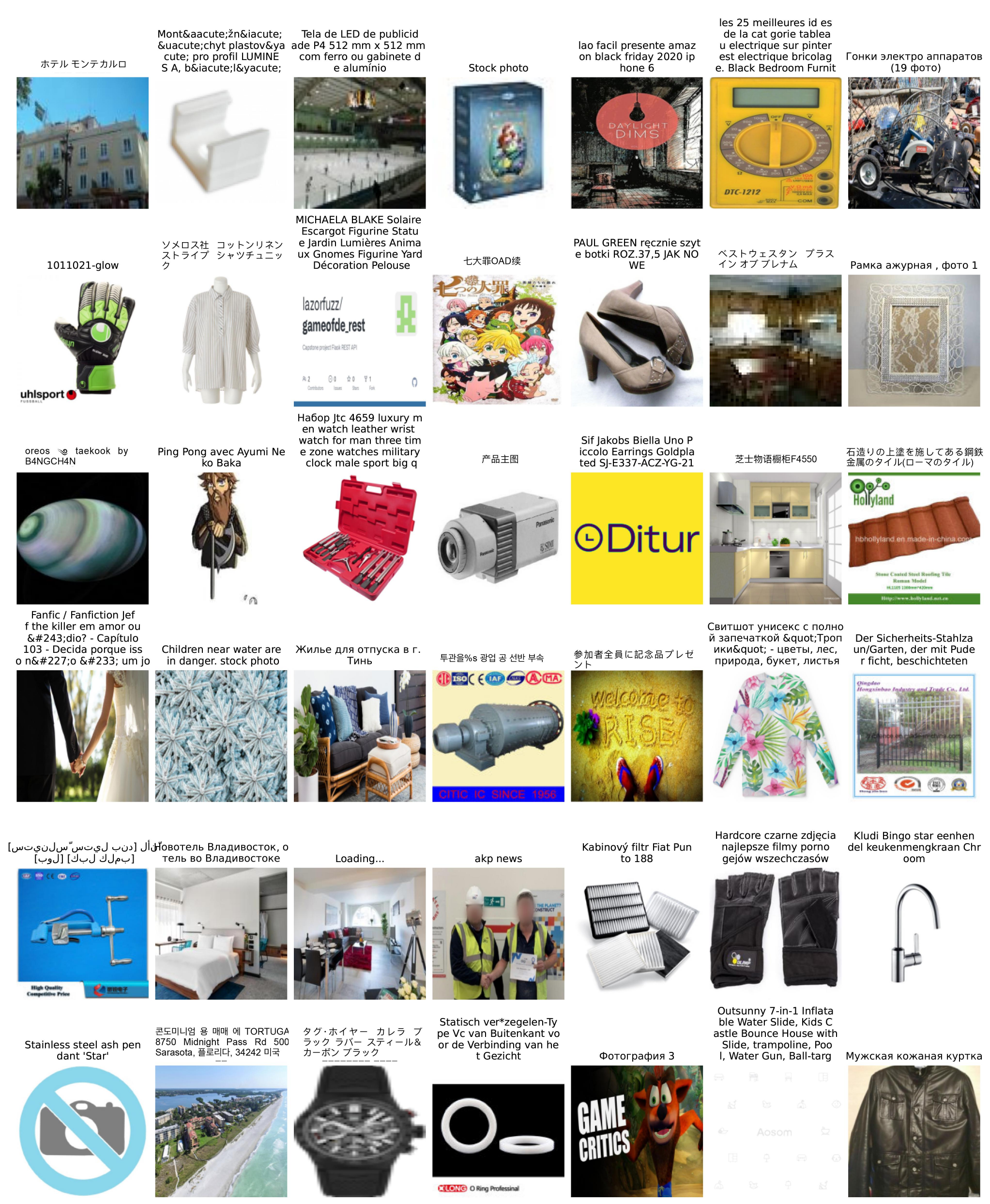}
    \caption{Visualization of a small subset whose s-CLIPLoss rank bottom 100\% low in DataComp-medium.}
    \label{sub fig: vis ngl 0.0}
\end{figure}

\begin{figure}[p]
    \centering
    \includegraphics[width=\textwidth]{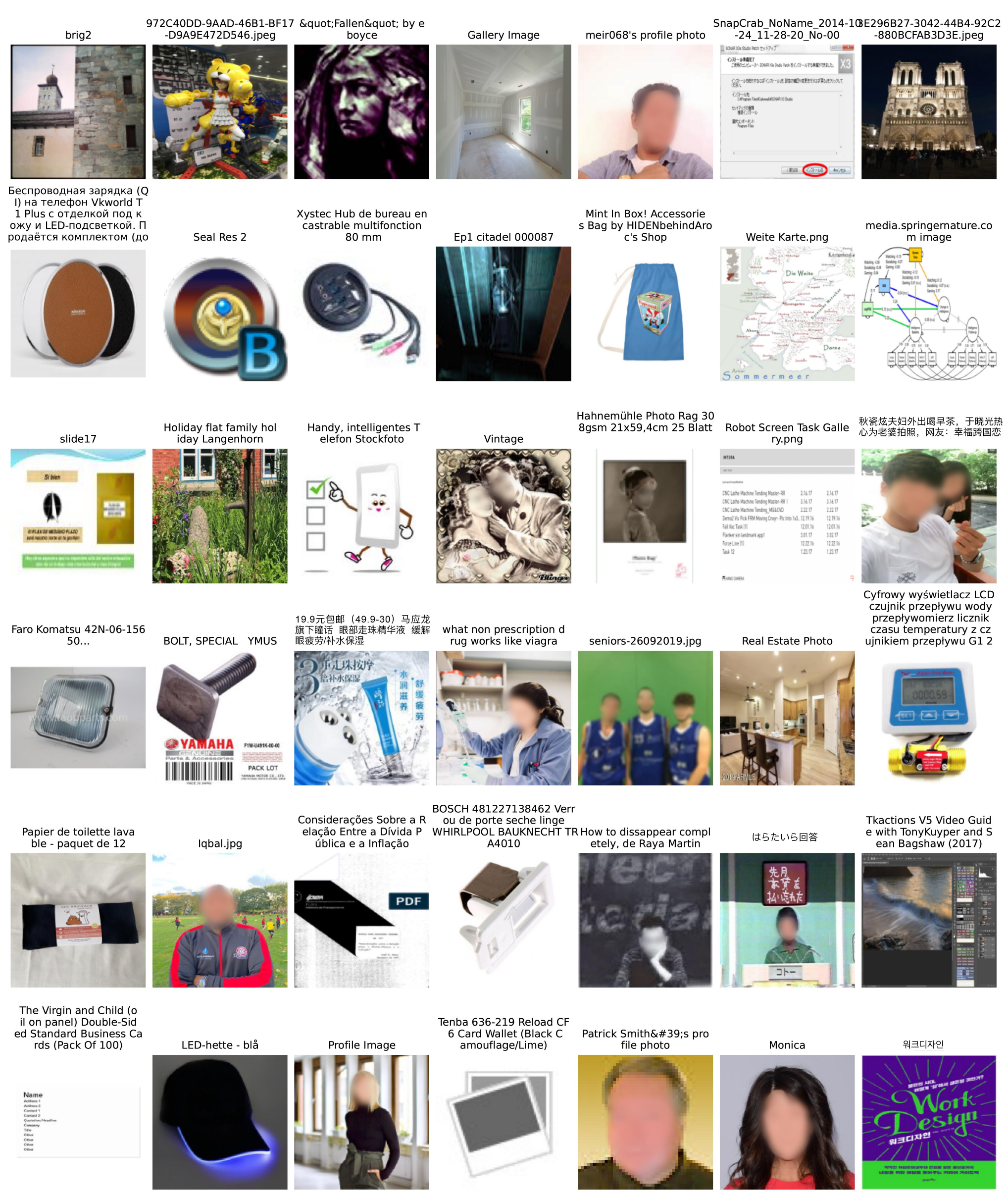}
    \caption{Visualization of a small subset whose s-CLIPLoss rank bottom 50\% low in DataComp-medium.}
    \label{sub fig: vis ngl 0.5}
\end{figure}

\begin{figure}[p]
    \centering
    \includegraphics[width=\textwidth]{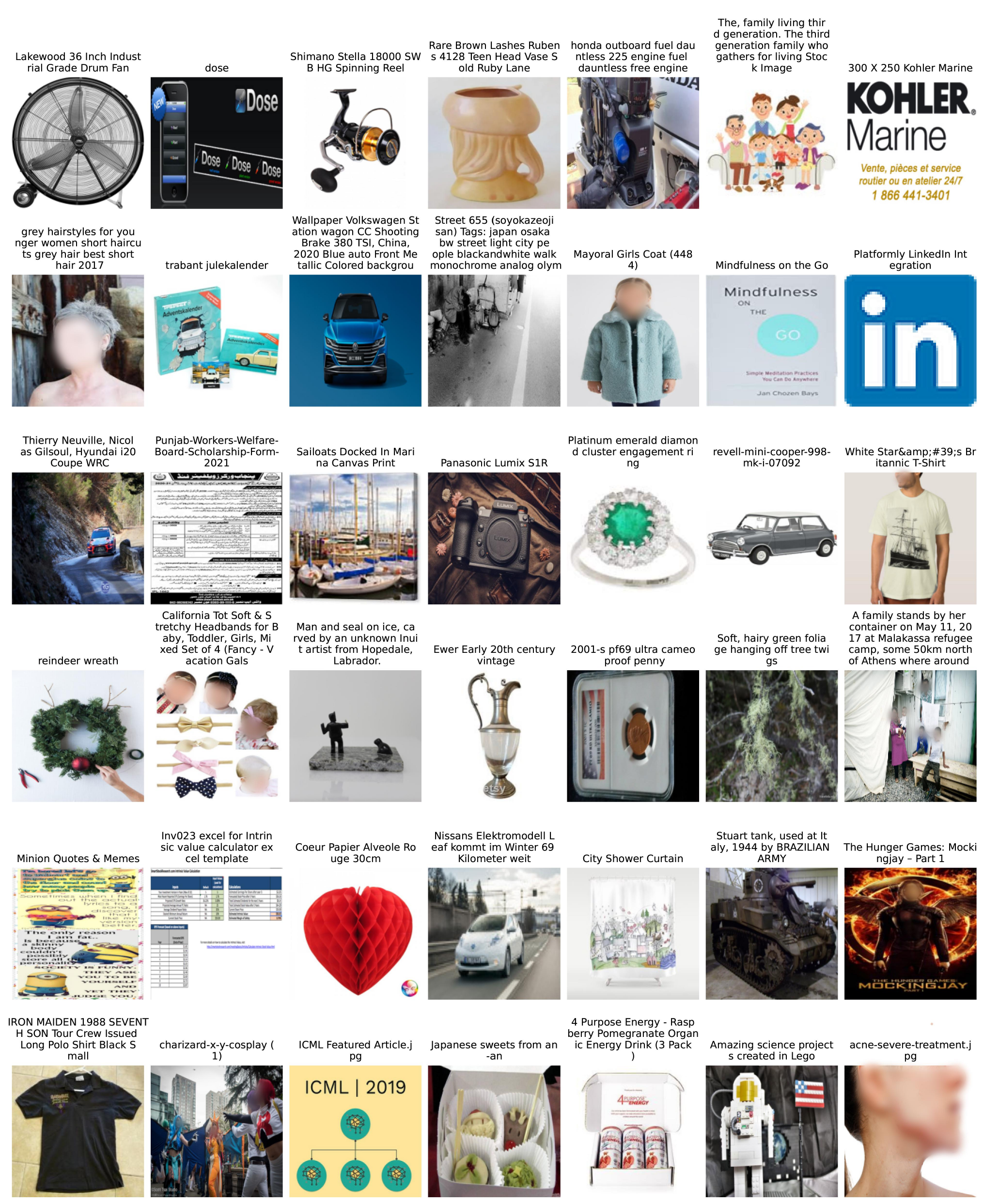}
    \caption{Visualization of a small subset whose s-CLIPLoss rank bottom 10\% low in DataComp-medium.}
    \label{sub fig: vis ngl 0.9}
\end{figure}

\begin{figure}[p]
    \centering
    \includegraphics[width=\textwidth]{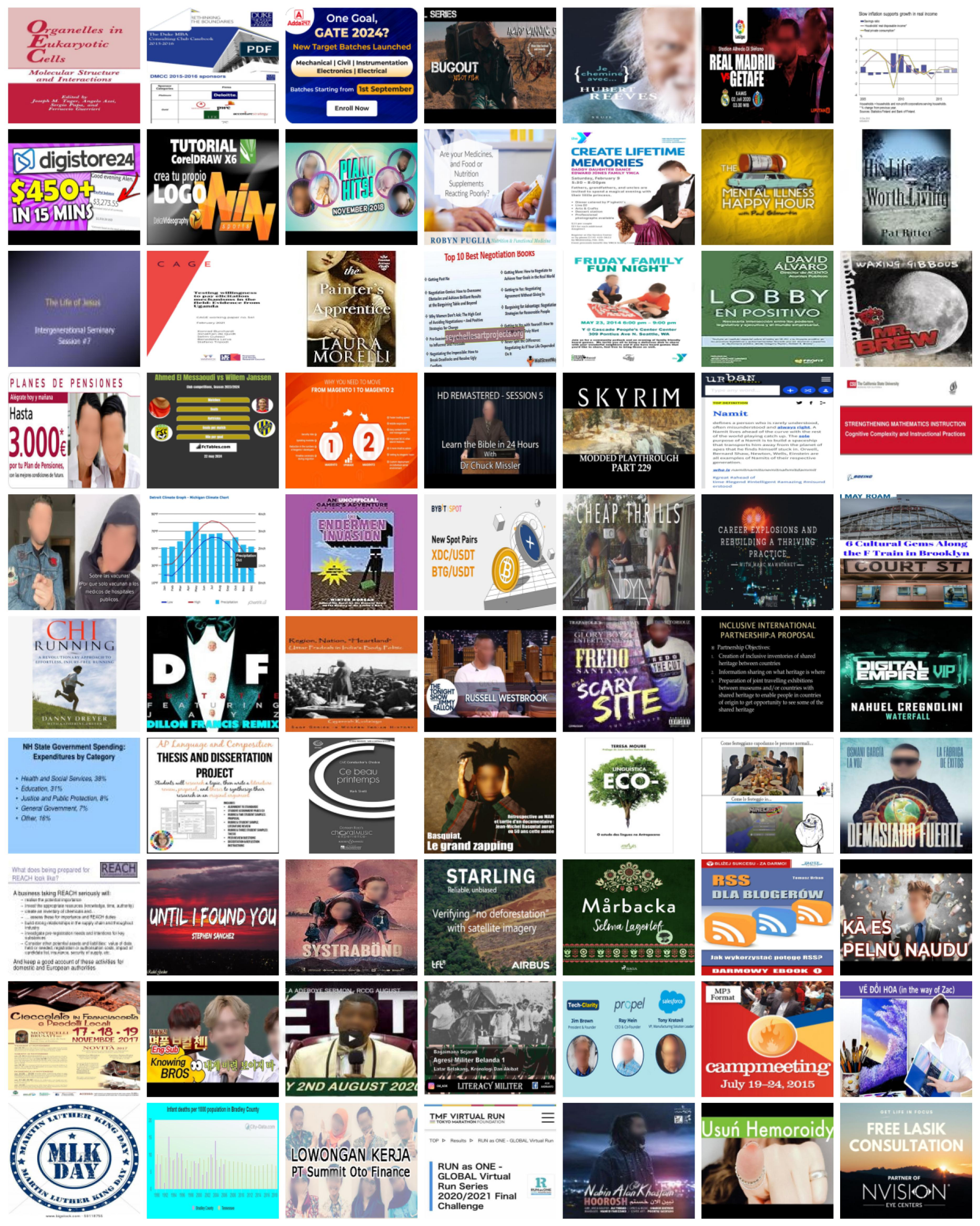}
    \caption{Visualization of the images from a small subset whose {\infnormsim}(Target) rank top 100\% high in DataComp-medium.}
    \label{sub fig: vis vas 0.0}
\end{figure}

\begin{figure}[p]
    \centering
    \includegraphics[width=\textwidth]{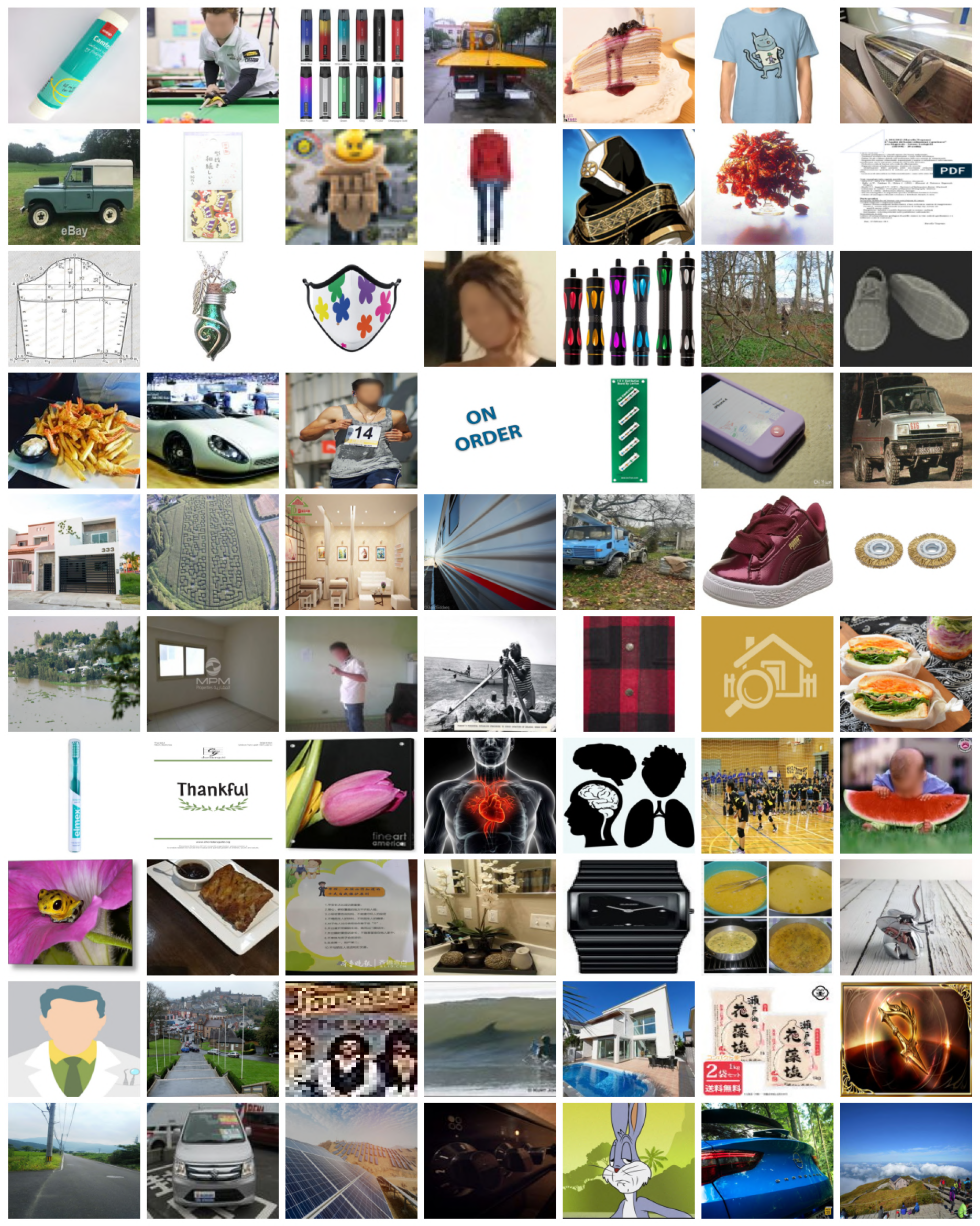}
    \caption{Visualization of the images from a small subset whose {\infnormsim}(Target) rank top 50\% high in DataComp-medium.}
    \label{sub fig: vis vas 0.5}
\end{figure}

\begin{figure}[p]
    \centering
    \includegraphics[width=\textwidth]{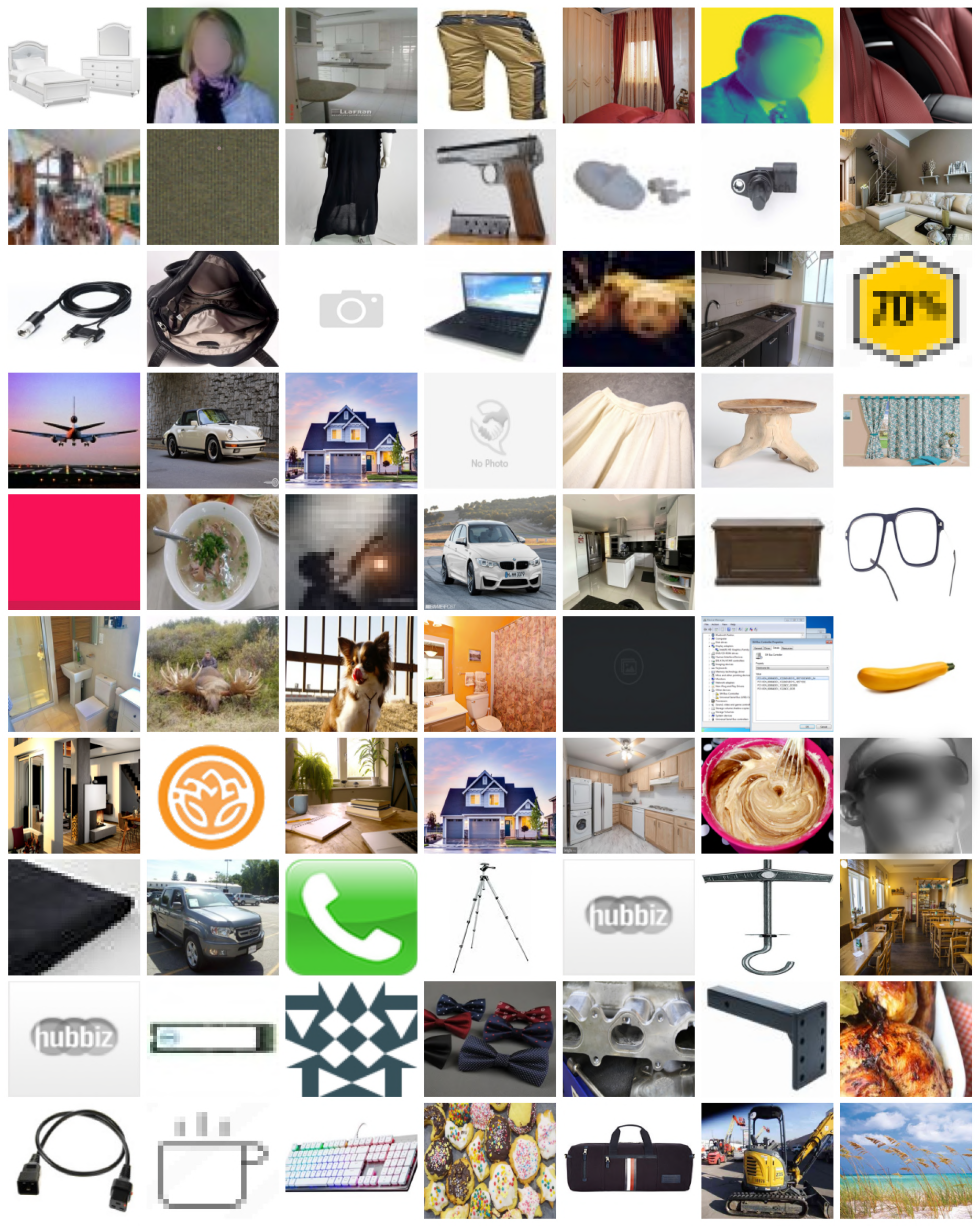}
    \caption{Visualization of the images from a small subset whose {\infnormsim}(Target) rank top 10\% high in DataComp-medium.}
    \label{sub fig: vis vas 0.9}
\end{figure}

\newpage
\section{NeurIPS Paper Checklist}

\begin{enumerate}

\item {\bf Claims}
    \item[] Question: Do the main claims made in the abstract and introduction accurately reflect the paper's contributions and scope?
    \item[] Answer:  \answerYes{} 
    \item[] Justification: Yes we clearly define 1. the benchmark we are using; 2.the methods with its key insights 3.the empirical improvement.
    \item[] Guidelines:
    \begin{itemize}
        \item The answer NA means that the abstract and introduction do not include the claims made in the paper.
        \item The abstract and/or introduction should clearly state the claims made, including the contributions made in the paper and important assumptions and limitations. A No or NA answer to this question will not be perceived well by the reviewers. 
        \item The claims made should match theoretical and experimental results, and reflect how much the results can be expected to generalize to other settings. 
        \item It is fine to include aspirational goals as motivation as long as it is clear that these goals are not attained by the paper. 
    \end{itemize}

\item {\bf Limitations}
    \item[] Question: Does the paper discuss the limitations of the work performed by the authors?
    \item[] Answer: \answerYes{} 
    \item[] Justification: We discuss this briefly in the last section.
    \item[] Guidelines:
    \begin{itemize}
        \item The answer NA means that the paper has no limitation while the answer No means that the paper has limitations, but those are not discussed in the paper. 
        \item The authors are encouraged to create a separate "Limitations" section in their paper.
        \item The paper should point out any strong assumptions and how robust the results are to violations of these assumptions (e.g., independence assumptions, noiseless settings, model well-specification, asymptotic approximations only holding locally). The authors should reflect on how these assumptions might be violated in practice and what the implications would be.
        \item The authors should reflect on the scope of the claims made, e.g., if the approach was only tested on a few datasets or with a few runs. In general, empirical results often depend on implicit assumptions, which should be articulated.
        \item The authors should reflect on the factors that influence the performance of the approach. For example, a facial recognition algorithm may perform poorly when image resolution is low or images are taken in low lighting. Or a speech-to-text system might not be used reliably to provide closed captions for online lectures because it fails to handle technical jargon.
        \item The authors should discuss the computational efficiency of the proposed algorithms and how they scale with dataset size.
        \item If applicable, the authors should discuss possible limitations of their approach to address problems of privacy and fairness.
        \item While the authors might fear that complete honesty about limitations might be used by reviewers as grounds for rejection, a worse outcome might be that reviewers discover limitations that aren't acknowledged in the paper. The authors should use their best judgment and recognize that individual actions in favor of transparency play an important role in developing norms that preserve the integrity of the community. Reviewers will be specifically instructed to not penalize honesty concerning limitations.
    \end{itemize}

\item {\bf Theory Assumptions and Proofs}
    \item[] Question: For each theoretical result, does the paper provide the full set of assumptions and a complete (and correct) proof?
    \item[] Answer: \answerYes{} 
    \item[] Justification: The full version of theory of {\normsim} results are in Appendix.~\ref{sec:theory} and we provide all the assumptions and proofs. We briefly mentioned this in Sec.~\ref{subset: normsim}.
    \item[] Guidelines:
    \begin{itemize}
        \item The answer NA means that the paper does not include theoretical results. 
        \item All the theorems, formulas, and proofs in the paper should be numbered and cross-referenced.
        \item All assumptions should be clearly stated or referenced in the statement of any theorems.
        \item The proofs can either appear in the main paper or the supplemental material, but if they appear in the supplemental material, the authors are encouraged to provide a short proof sketch to provide intuition. 
        \item Inversely, any informal proof provided in the core of the paper should be complemented by formal proofs provided in appendix or supplemental material.
        \item Theorems and Lemmas that the proof relies upon should be properly referenced. 
    \end{itemize}

    \item {\bf Experimental Result Reproducibility}
    \item[] Question: Does the paper fully disclose all the information needed to reproduce the main experimental results of the paper to the extent that it affects the main claims and/or conclusions of the paper (regardless of whether the code and data are provided or not)?
    \item[] Answer: \answerYes{} 
    \item[] Justification: The main results are in the Sec.~\ref{sec: experiment}. We also provide experiment details in Appendix ~\ref{appendix: experiment details}.
    \item[] Guidelines:
    \begin{itemize}
        \item The answer NA means that the paper does not include experiments.
        \item If the paper includes experiments, a No answer to this question will not be perceived well by the reviewers: Making the paper reproducible is important, regardless of whether the code and data are provided or not.
        \item If the contribution is a dataset and/or model, the authors should describe the steps taken to make their results reproducible or verifiable. 
        \item Depending on the contribution, reproducibility can be accomplished in various ways. For example, if the contribution is a novel architecture, describing the architecture fully might suffice, or if the contribution is a specific model and empirical evaluation, it may be necessary to either make it possible for others to replicate the model with the same dataset, or provide access to the model. In general. releasing code and data is often one good way to accomplish this, but reproducibility can also be provided via detailed instructions for how to replicate the results, access to a hosted model (e.g., in the case of a large language model), releasing of a model checkpoint, or other means that are appropriate to the research performed.
        \item While NeurIPS does not require releasing code, the conference does require all submissions to provide some reasonable avenue for reproducibility, which may depend on the nature of the contribution. For example
        \begin{enumerate}
            \item If the contribution is primarily a new algorithm, the paper should make it clear how to reproduce that algorithm.
            \item If the contribution is primarily a new model architecture, the paper should describe the architecture clearly and fully.
            \item If the contribution is a new model (e.g., a large language model), then there should either be a way to access this model for reproducing the results or a way to reproduce the model (e.g., with an open-source dataset or instructions for how to construct the dataset).
            \item We recognize that reproducibility may be tricky in some cases, in which case authors are welcome to describe the particular way they provide for reproducibility. In the case of closed-source models, it may be that access to the model is limited in some way (e.g., to registered users), but it should be possible for other researchers to have some path to reproducing or verifying the results.
        \end{enumerate}
    \end{itemize}

\item {\bf Open access to data and code}
    \item[] Question: Does the paper provide open access to the data and code, with sufficient instructions to faithfully reproduce the main experimental results, as described in supplemental material?
    \item[] Answer: \answerYes{} 
    \item[] Justification: The code will be provided according to Neurips code submission guidance. After got accepted, we will open source that.
    \item[] Guidelines:
    \begin{itemize}
        \item The answer NA means that paper does not include experiments requiring code.
        \item Please see the NeurIPS code and data submission guidelines (\url{https://nips.cc/public/guides/CodeSubmissionPolicy}) for more details.
        \item While we encourage the release of code and data, we understand that this might not be possible, so “No” is an acceptable answer. Papers cannot be rejected simply for not including code, unless this is central to the contribution (e.g., for a new open-source benchmark).
        \item The instructions should contain the exact command and environment needed to run to reproduce the results. See the NeurIPS code and data submission guidelines (\url{https://nips.cc/public/guides/CodeSubmissionPolicy}) for more details.
        \item The authors should provide instructions on data access and preparation, including how to access the raw data, preprocessed data, intermediate data, and generated data, etc.
        \item The authors should provide scripts to reproduce all experimental results for the new proposed method and baselines. If only a subset of experiments are reproducible, they should state which ones are omitted from the script and why.
        \item At submission time, to preserve anonymity, the authors should release anonymized versions (if applicable).
        \item Providing as much information as possible in supplemental material (appended to the paper) is recommended, but including URLs to data and code is permitted.
    \end{itemize}

\item {\bf Experimental Setting/Details}
    \item[] Question: Does the paper specify all the training and test details (e.g., data splits, hyperparameters, how they were chosen, type of optimizer, etc.) necessary to understand the results?
    \item[] Answer: \answerYes{} 
    \item[] Justification: The main results are in the Sec.\ref{sec: experiment}. We also provide experiment details in Appendix~\ref{appendix: experiment details}
    \item[] Guidelines:
    \begin{itemize}
        \item The answer NA means that the paper does not include experiments.
        \item The experimental setting should be presented in the core of the paper to a level of detail that is necessary to appreciate the results and make sense of them.
        \item The full details can be provided either with the code, in appendix, or as supplemental material.
    \end{itemize}

\item {\bf Experiment Statistical Significance}
    \item[] Question: Does the paper report error bars suitably and correctly defined or other appropriate information about the statistical significance of the experiments?
    \item[] Answer: \answerNo{} 
    \item[] Justification: Almost all existing works, like DFN, HYPE, and MLM, only run the training once on DataComp-medium. Training on a 128M size dataset is very costly and relatively stable, so it is commonly believed that there is no need to rerun experiments with different training seeds. In the experiments, we fix all the training seeds to be 0 for fair comparison.
    For our algorithm, most of them are deterministic. The only one involving randomness is s-CLIPLoss, which requires resampling K=10 times. For it we provide a sensitivity analysis in Fig.~\ref{sup fig:comparison K}.
    \item[] Guidelines:
    \begin{itemize}
        \item The answer NA means that the paper does not include experiments.
        \item The authors should answer "Yes" if the results are accompanied by error bars, confidence intervals, or statistical significance tests, at least for the experiments that support the main claims of the paper.
        \item The factors of variability that the error bars are capturing should be clearly stated (for example, train/test split, initialization, random drawing of some parameter, or overall run with given experimental conditions).
        \item The method for calculating the error bars should be explained (closed form formula, call to a library function, bootstrap, etc.)
        \item The assumptions made should be given (e.g., Normally distributed errors).
        \item It should be clear whether the error bar is the standard deviation or the standard error of the mean.
        \item It is OK to report 1-sigma error bars, but one should state it. The authors should preferably report a 2-sigma error bar than state that they have a 96\% CI, if the hypothesis of Normality of errors is not verified.
        \item For asymmetric distributions, the authors should be careful not to show in tables or figures symmetric error bars that would yield results that are out of range (e.g. negative error rates).
        \item If error bars are reported in tables or plots, The authors should explain in the text how they were calculated and reference the corresponding figures or tables in the text.
    \end{itemize}

\item {\bf Experiments Compute Resources}
    \item[] Question: For each experiment, does the paper provide sufficient information on the computer resources (type of compute workers, memory, time of execution) needed to reproduce the experiments?
    \item[] Answer: \answerYes{} 
    \item[] Justification: We discuss the computing cost estimation and comparison in Appendix~\ref{supp: computation cost}. We didn't explicitly calculate the memories since it is quite standard under the DataComp benchmark.
    \item[] Guidelines:
    \begin{itemize}
        \item The answer NA means that the paper does not include experiments.
        \item The paper should indicate the type of compute workers CPU or GPU, internal cluster, or cloud provider, including relevant memory and storage.
        \item The paper should provide the amount of compute required for each of the individual experimental runs as well as estimate the total compute. 
        \item The paper should disclose whether the full research project required more compute than the experiments reported in the paper (e.g., preliminary or failed experiments that didn't make it into the paper). 
    \end{itemize}
    
\item {\bf Code Of Ethics}
    \item[] Question: Does the research conducted in the paper conform, in every respect, with the NeurIPS Code of Ethics \url{https://neurips.cc/public/EthicsGuidelines}?
    \item[] Answer: \answerYes{} 
    \item[] Justification: Yes
    \item[] Guidelines:
    \begin{itemize}
        \item The answer NA means that the authors have not reviewed the NeurIPS Code of Ethics.
        \item If the authors answer No, they should explain the special circumstances that require a deviation from the Code of Ethics.
        \item The authors should make sure to preserve anonymity (e.g., if there is a special consideration due to laws or regulations in their jurisdiction).
    \end{itemize}

\item {\bf Broader Impacts}
    \item[] Question: Does the paper discuss both potential positive societal impacts and negative societal impacts of the work performed?
    \item[] Answer: \answerNA{} 
    \item[] Justification: This research focuses on the methodology part of data selection. All experiments are performed under the existing standard dataset. So as long as those datasets itself maybe harmless, our research will not make any negative impact.
    \item[] Guidelines:
    \begin{itemize}
        \item The answer NA means that there is no societal impact of the work performed.
        \item If the authors answer NA or No, they should explain why their work has no societal impact or why the paper does not address societal impact.
        \item Examples of negative societal impacts include potential malicious or unintended uses (e.g., disinformation, generating fake profiles, surveillance), fairness considerations (e.g., deployment of technologies that could make decisions that unfairly impact specific groups), privacy considerations, and security considerations.
        \item The conference expects that many papers will be foundational research and not tied to particular applications, let alone deployments. However, if there is a direct path to any negative applications, the authors should point it out. For example, it is legitimate to point out that an improvement in the quality of generative models could be used to generate deepfakes for disinformation. On the other hand, it is not needed to point out that a generic algorithm for optimizing neural networks could enable people to train models that generate Deepfakes faster.
        \item The authors should consider possible harms that could arise when the technology is being used as intended and functioning correctly, harms that could arise when the technology is being used as intended but gives incorrect results, and harms following from (intentional or unintentional) misuse of the technology.
        \item If there are negative societal impacts, the authors could also discuss possible mitigation strategies (e.g., gated release of models, providing defenses in addition to attacks, mechanisms for monitoring misuse, mechanisms to monitor how a system learns from feedback over time, improving the efficiency and accessibility of ML).
    \end{itemize}
    
\item {\bf Safeguards}
    \item[] Question: Does the paper describe safeguards that have been put in place for responsible release of data or models that have a high risk for misuse (e.g., pretrained language models, image generators, or scraped datasets)?
    \item[] Answer: \answerNA{} 
    \item[] Justification: This paper will only provide UID's of selected data from existing datasets (DataComp-medium~\cite{gadre2023datacomp}). This paper will not release any model or new dataset.
    \item[] Guidelines:
    \begin{itemize}
        \item The answer NA means that the paper poses no such risks.
        \item Released models that have a high risk for misuse or dual-use should be released with necessary safeguards to allow for controlled use of the model, for example by requiring that users adhere to usage guidelines or restrictions to access the model or implementing safety filters. 
        \item Datasets that have been scraped from the Internet could pose safety risks. The authors should describe how they avoided releasing unsafe images.
        \item We recognize that providing effective safeguards is challenging, and many papers do not require this, but we encourage authors to take this into account and make a best faith effort.
    \end{itemize}

\item {\bf Licenses for existing assets}
    \item[] Question: Are the creators or original owners of assets (e.g., code, data, models), used in the paper, properly credited and are the license and terms of use explicitly mentioned and properly respected?
    \item[] Answer: \answerYes{} 
    \item[] Justification: We cite the DataComp~\cite{gadre2023datacomp} which introduces the URL for the dataset and the code/models used to implement the benchmark.
    \item[] Guidelines: 
    \begin{itemize}
        \item The answer NA means that the paper does not use existing assets.
        \item The authors should cite the original paper that produced the code package or dataset.
        \item The authors should state which version of the asset is used and, if possible, include a URL.
        \item The name of the license (e.g., CC-BY 4.0) should be included for each asset.
        \item For scraped data from a particular source (e.g., website), the copyright and terms of service of that source should be provided.
        \item If assets are released, the license, copyright information, and terms of use in the package should be provided. For popular datasets, \url{paperswithcode.com/datasets} has curated licenses for some datasets. Their licensing guide can help determine the license of a dataset.
        \item For existing datasets that are re-packaged, both the original license and the license of the derived asset (if it has changed) should be provided.
        \item If this information is not available online, the authors are encouraged to reach out to the asset's creators.
    \end{itemize}

\item {\bf New Assets}
    \item[] Question: Are new assets introduced in the paper well documented and is the documentation provided alongside the assets?
    \item[] Answer: \answerNA{} 
    \item[] Justification: This paper does not release new assets.
    \item[] Guidelines:
    \begin{itemize}
        \item The answer NA means that the paper does not release new assets.
        \item Researchers should communicate the details of the dataset/code/model as part of their submissions via structured templates. This includes details about training, license, limitations, etc. 
        \item The paper should discuss whether and how consent was obtained from people whose asset is used.
        \item At submission time, remember to anonymize your assets (if applicable). You can either create an anonymized URL or include an anonymized zip file.
    \end{itemize}

\item {\bf Crowdsourcing and Research with Human Subjects}
    \item[] Question: For crowdsourcing experiments and research with human subjects, does the paper include the full text of instructions given to participants and screenshots, if applicable, as well as details about compensation (if any)? 
    \item[] Answer: \answerNA{} 
    \item[] Justification: The paper does not involve crowdsourcing or research with human subjects. All metrics are fixed evaluations.
    \item[] Guidelines:
    \begin{itemize}
        \item The answer NA means that the paper does not involve crowdsourcing nor research with human subjects.
        \item Including this information in the supplemental material is fine, but if the main contribution of the paper involves human subjects, then as much detail as possible should be included in the main paper. 
        \item According to the NeurIPS Code of Ethics, workers involved in data collection, curation, or other labor should be paid at least the minimum wage in the country of the data collector. 
    \end{itemize}

\item {\bf Institutional Review Board (IRB) Approvals or Equivalent for Research with Human Subjects}
    \item[] Question: Does the paper describe potential risks incurred by study participants, whether such risks were disclosed to the subjects, and whether Institutional Review Board (IRB) approvals (or an equivalent approval/review based on the requirements of your country or institution) were obtained?
    \item[] Answer: \answerNA{} 
    \item[] Justification: The paper does not involve crowdsourcing or research with human subjects.
    \item[] Guidelines:
    \begin{itemize}
        \item The answer NA means that the paper does not involve crowdsourcing nor research with human subjects.
        \item Depending on the country in which research is conducted, IRB approval (or equivalent) may be required for any human subjects research. If you obtained IRB approval, you should clearly state this in the paper. 
        \item We recognize that the procedures for this may vary significantly between institutions and locations, and we expect authors to adhere to the NeurIPS Code of Ethics and the guidelines for their institution. 
        \item For initial submissions, do not include any information that would break anonymity (if applicable), such as the institution conducting the review.
    \end{itemize}

\end{enumerate}


\end{document}